\def\Re{\mathbb{R}}
\def\Nat{{\rm I\kern\pIR N}}
\def\argmax{\mathop{\rm arg\,max}}
\newcommand{\EE}[1]{\exptE\left[#1\right]}
\newcommand{\defeq}{\overset{\text{\tiny def}}{=}}
\newcommand{\eye}{\mathbf{I}}
\def\E{{\mathcal{E}}}
\def\S{{\mathcal{S}}}
\DeclareMathOperator{\hvec}{\bf h}
\DeclareMathOperator{\xvec}{\bf x}
\DeclareMathOperator{\wvec}{\bf w}
\DeclareMathOperator{\zvec}{\bf z}
\DeclareMathOperator{\bvec}{\bf b}
\DeclareMathOperator{\gvec}{\bf g}
\DeclareMathOperator{\yvec}{\bf y}
\DeclareMathOperator{\Ivec}{\bf I}
\DeclareMathOperator{\Avec}{\bf A}
\DeclareMathOperator{\Gvec}{\bf G}
\DeclareMathOperator{\Hvec}{\bf H}
\DeclareMathOperator{\Cvec}{\bf C}
\DeclareMathOperator{\Vvec}{\bf V}
\DeclareMathOperator{\Xvec}{\bf X}
\DeclareMathOperator{\varrhovec}{\boldsymbol{\varrho}}
\def\vectheta{{\boldsymbol{\theta}}}
\def\vec0{\mathbf 0}
\def\vecb{\mathbf b}
\def\vech{\mathbf h}
\def\vecw{\mathbf w}
\def\vecx{\mathbf x}
\newtheorem{thm}{Theorem}[section]
\newcommand{\beq}{\begin{equation}}
\newcommand{\eeq}{\end{equation}}
\newcommand{\beqa}{\begin{eqnarray}}
\newcommand{\eeqa}{\end{eqnarray}}
\newcommand{\beqan}{\begin{eqnarray*}}
\newcommand{\eeqan}{\end{eqnarray*}}
\newcommand{\ben}{\begin{eqnarray*}}
\newcommand{\een}{\end{eqnarray*}}
\def\tr{^\top\!}
\newcommand{\Amat}{\mathbf{A}}
\newcommand{\Cmat}{\mathbf{C}}
\newcommand{\inv}{{-1}}
\newcommand{\sneg}{\mathrm{-}}
\renewcommand{\EE}[2]{\mathbb{E}_{#1\!\!}\left[#2\right]}
\newcommand{\CEE}[3]{\EE{#1}{{#2}~\middle\vert~{#3}}}
\renewcommand{\CEE}[3]{\EE{#1}{{#2}\mid{#3}}}
\def\CE#1#2{\CEE{\,}{#1}{#2}}
\def\E#1{\EE{\,}{#1}}
\def\Epi#1{\EE{\pi}{#1}}
\newcommand{\zerovec}{\mathbf{0}}
\def\gradSA#1#2{\nabla_{\vecw}{\hat{q}({#1, #2})}}
\newcommand\blfootnote[1]{%
  \begingroup
  \renewcommand\thefootnote{}\footnote{#1}%
  \addtocounter{footnote}{-1}%
  \endgroup
}
\icmltitlerunning{Gradient Temporal-Difference Learning with Regularized Corrections}
\begin{document}

\twocolumn[
  \icmltitle{Gradient Temporal-Difference Learning with Regularized Corrections}



\icmlsetsymbol{equal}{*}

\begin{icmlauthorlist}
\icmlauthor{Sina Ghiassian}{equal,UofA}
\icmlauthor{Andrew Patterson}{equal,UofA}
\icmlauthor{Shivam Garg}{UofA}
\icmlauthor{Dhawal Gupta}{UofA}
\icmlauthor{Adam White}{UofA,Deepmind}
\icmlauthor{Martha White}{UofA}
\end{icmlauthorlist}

\icmlaffiliation{UofA}{Amii, Department of Computing Science, University of Alberta.}
\icmlaffiliation{Deepmind}{DeepMind, Alberta}

\icmlcorrespondingauthor{Sina Ghiassian}{ghiassia@ualberta.ca}
\icmlcorrespondingauthor{Andrew Patterson}{ap3@ualberta.ca}

\icmlkeywords{Machine Learning, Reinforcement Learning, Off-policy Learning, ICML}

\vskip 0.3in
]

\printAffiliationsAndNotice{\icmlEqualContribution}
\begin{abstract}
It is still common to use Q-learning and temporal difference (TD) learning---even though they have divergence issues and sound Gradient TD alternatives exist---because divergence seems rare and they typically perform well.
However, recent work with large neural network learning systems reveals that instability is more common than previously thought.
Practitioners face a difficult dilemma: choose an easy to use and performant TD method, or a more complex algorithm that is more sound but harder to tune and all but unexplored with non-linear function approximation or control.
In this paper, we introduce a new method called TD with Regularized Corrections (TDRC), that attempts to balance ease of use, soundness, and performance.
It behaves as well as TD, when TD performs well, but is sound in cases where TD diverges.
We empirically investigate TDRC across a range of problems, for both prediction and control, and for both linear and non-linear function approximation, and show, potentially for the first time, that Gradient TD methods could be a better alternative to TD and Q-learning.
\end{abstract}

\section{Introduction}
\label{sct:Intro}
Off-policy learning---the ability to learn the policy or value function for one policy while following another---underlies many practical implementations of reinforcement learning. Many systems use experience replay, where the value function is updated using previous experiences under many different policies. A similar strategy is employed in asynchronous learning systems that use experience from several different policies to update multiple distributed learners (Espeholt et al., 2018). Off-policy updates can also be used to learn a policy from human demonstrations. In general, many algorithms attempt to estimate the optimal policy from samples generated from a different exploration policy. One of the most widely-used algorithms, Q-learning---a temporal difference (TD) algorithm---is off-policy by design: simply updating toward the maximum value action in the current state, regardless of which action the agent selected.

Both TD and Q-learning, however, have well documented convergence issues, as highlighted in the seminal counterexample by Baird (1995).
The fundamental issue is the combination of function approximation, off-policy updates, and bootstrapping: an algorithmic strategy common to sample-based TD learning and Dynamic Programming algorithms (Precup, Sutton \& Dasgupta, 2001). This combination can cause the value estimates to grow without bound (Sutton \& Barto, 2018).
Baird's result motivated over a decade of research and several new off-policy algorithms. The most well-known of these approaches, the Gradient TD methods (Sutton et al., 2009), make use of a second set of weights and importance sampling.

Although sound under function approximation, these Gradient TD methods are not commonly used in practice, likely due to the additional complexity of tuning two learning rate parameters.
Many practitioners continue to use unsound approaches such as TD and Q-learning for good reasons. The evidence of divergence is based on highly contrived toy counter-examples. Often, many large scale off-policy learning systems are designed to ensure that the target and behaviour policies are similar---and therefore less off-policy---by ensuring prioritization is mixed with random sampling (Schaul et al., 2016), or frequently syncing the actor policies in asynchronous architectures (Mnih et al., 2016).
However, if agents could learn from a larger variety of data streams, our systems could be more flexible and potentially more data efficient. Unfortunately, it appears that current architectures are not as robust under these more aggressive off-policy settings (van Hasselt et al., 2018).
This results in a dilemma: the easy-to-use and typically effective TD algorithm can sometimes fail, but the sound Gradient TD algorithms can be difficult to use.

There are algorithms that come close to achieving convergence and lower variance updates without the need to tune multiple stepsize parameters. Retrace (Munos et al., 2016) and its prediction variant Vtrace (Espeholt et al., 2018) reduce the variance of off-policy updating, by clipping importance sampling ratios. These methods, however, are built on off-policy TD and so still have divergence issues (Touati et al., 2018). The sound variants of these algorithms (Touati et al., 2018), and the related work on an algorithm called ABQ (Mahmood, Yu \& Sutton, 2017), maintain some of the variance reduction, but rely on Gradient TD to obtain soundness and so inherit the issues therein---the need to tune multiple stepsize parameters. Linear off-policy prediction can be reformulated as a saddlepoint problem, resulting in one time-scale, true gradient descent variant of the GTD2 algorithm (Mahadevan et al., 2014; Liu et al., 2015; Liu et al., 2016).
The Emphatic TD algorithm achieves convergence with linear function approximation and off-policy updates using only a single set of weights and thus one stepsize parameter (Sutton et al., 2016). Unfortunately, high variance updates reduce the practicality of the method (White \& White, 2016). Finally, Hybrid TD algorithms (Hackman, 2012, White \& White, 2016) were introduced to automatically switch between TD updates when the data is on-policy, and gradient-style updates otherwise, thus ensuring convergence. In practice these hybrid methods are more complicated to implement and can have stability issues (White \& White, 2016).

In this paper we introduce a new Gradient TD method, called TD with Regularized Corrections (TDRC).
With more regularization, the algorithm acts like TD, and with no regularization, it reduces to TD with gradient Corrections (TDC).
We find that for an interim level of regularization, TDRC obtains the best of both algorithms, and is not sensitive to this parameter: a regularization parameter of 1.0 was effective across all experiments.
We show that our method (1) outperforms other Gradient TD methods overall across a variety of problems, and (2) matches TD when TD performs well while maintaining convergence guarantees.
We demonstrate that TDC frequently outperforms the saddlepoint variant of Gradient TD, motivating why we build on TDC and the utility of being able to shift between TD and TDC by setting the regularization parameter.
We then highlight why TDRC improves so significantly on TDC, by examining TDC's sensitivity to its second stepsize. We conclude with a demonstration in control, with non-linear function approximation, showing that (1) TDC can perform very well in some settings and very poorly in others, and (2) TDRC is always comparable to Q-learning, and in some cases, is much better.

\section{Background}
\label{sct:TDLearningWithLinearValueFunctionApproximation}

In this paper we tackle the policy evaluation problem in Reinforcement Learning. We model the agent's interactions with its environment as a Markov Decision Process (MDP). The agent and environment interact continually.
On each time step $t=0, 1, 2, \ldots,$
the agent selects an action $A_t \in \mathcal{A}$ in state $S_t \in \S$. Based on the agent's action $A_t$ and the transition dynamics, $P: \mathcal{S} \times \mathcal{A} \times \mathcal{S} \rightarrow [0,1]$, the environment transitions into a new state, $S_{t+1}$, and emits a scalar reward $R_{t+1}$. The agent selects actions according to its policy $\pi:  \mathcal{S} \times \mathcal{A} \rightarrow [0,1]$.
The main objective in policy evaluation is to estimate the value of a state $s$, defined as the expected discounted sum of future rewards under $\pi$:
\begin{align}v_\pi(s) &\defeq \Epi{R_{t+1} + \gamma R_{t+2} + \gamma^2 R_{t+3} + \cdots | S_t = s}\nonumber\\ &= \Epi{G_t | S_t = s}
  \label{eq:value_func},
\end{align}
where $\gamma\in [0,1]$, $G_t\in\mathbb{R}$ is called the {\em return}, and $\mathbb{E}_{\pi}$ is the expectation taken with respect to future states, actions, and rewards generated by $\pi$ and $P$.

In many problems of interest, the agent cannot directly observe the state.
Instead, on each step the agent observes a featurized representation of the state $\vecx_t \defeq \vecx(S_t) \in \Re^n$, where $n \ll |\mathcal{S}|$.
In this setting, the agent cannot estimate the value of each state individually, but must approximate the value with a parametric function.
In this paper, we focus on the case of linear function approximation, where the value estimate $\hat v: \mathcal{S} \times \Re^n \rightarrow \Re$ is simply formed as an inner product between $\vecx(s)$ and a learned set of weights $\vecw \in \Re^n$ given by $\hat v(s, \vecw) \defeq \vecw\tr\vecx(s).$

Our objective is to adjust $\vecw_t$ on each time step to construct a good approximation of the true value: $\hat v \approx v_\pi$. Perhaps the most well known and successful algorithm for doing so is temporal difference (TD) learning :
\begin{align}\label{td_alg}
  &\delta_t \defeq R_{t+1} + \gamma \vecw_t\tr\vecx_{t+1} - \vecw_t\tr\vecx_t \nonumber\\
  &\vecw_{t+1} \leftarrow \vecw_t + \alpha_t \delta_t \vecx_t
\end{align}
for stepsize $\alpha_t > 0$.
TD is guaranteed to be convergent under linear function approximation and on-policy sampling.

The classical TD algorithm was designed for on-policy learning; however, it can be easily extended to the off-policy setting.
In {\em on-policy} learning, the policy used to select actions is the same as the policy used to condition the expectation in the definition of the value function (Eq. \ref{eq:value_func}).
Alternatively, we might want to make {\em off-policy} updates, where the actions are chosen according to some {\em behavior policy} $b$, different from the {\em target policy} $\pi$ used in Eq. \ref{eq:value_func}.
If we view value estimation as estimating the expected return, this off-policy setting corresponds to estimating an expectation conditioned on one distribution with samples collected under another.
TD can be extended to make off-policy updates by using importance sampling ratios $\rho_t\defeq \frac{\pi(A_t|S_t)}{b(A_t|S_t)} \ge 0$.
The resulting algorithm is a minor modification of TD, $\vecw_{t+1} \leftarrow \vecw_t + \alpha_t \rho_t \delta_t \vecx_t,$
where $\delta_t$ is defined in Eq. \ref{td_alg}.

Off-policy TD can diverge with function approximation, but fortunately there are several TD-based algorithms that are convergent.
When TD learning converges, it converges to the TD fixed point: the weight vector where $\mathbb{E}[\delta_t \vecx_t] = 0$.
Interestingly, TD does not perform gradient descent on any objective to reach the TD fixed point. So, one way to achieve convergence is to perform gradient descent on an objective whose minimum corresponds to the TD-fixed point.
Gradient TD methods do exactly this on the Mean Squared Projected Bellman Error (MSPBE) (see Eq. \ref{eq:MSPBEInExpectation}).

There are several ways to approximate and simplify the gradient of MSPBE, each resulting in a different algorithm.
The two most well-known approaches are TD with Corrections (TDC) and Gradient TD (GTD2).
Both these require double the computation and storage of TD, and employ a second set of learned weights $\vech \in\Re^n$ with a different stepsize parameter $\eta \alpha_t$, where $\eta$ is a tunable constant.
The updates for the TDC algorithm otherwise are similar to TD:
\begin{align}\label{eq:GTD}
  \vecw_{t+1} \leftarrow& ~\vecw_t + \alpha_t \rho_t \delta_t \vecx_t - \alpha_t \rho_t \gamma (\vech_t\tr \vecx_t)\vecx_{t+1} \nonumber\\
  \vech_{t+1} \leftarrow& ~\vech_t + \eta\alpha_t\bigl[\rho_t \delta_t - (\vech_t\tr \vecx_t) \bigr] \vecx_t
  .
\end{align}
The GTD2 algorithm uses the same update for $\vech_t$, but the update to the primary weights is different:
\begin{equation}
  \vecw_{t+1} \leftarrow ~\vecw_t + \alpha_t \rho_t (\vecx_t - \gamma \vecx_{t+1}) (\vech_t^{\tr}\vecx_t) \label{eq:GTD2Update}
  .
\end{equation}
The Gradient TD algorithms are not widely used in practice and are considered difficult to use.
In particular, for TDC, the second stepsize has a big impact on performance (White \& White, 2016), and the theory suggests that $\eta>1$ is necessary to guarantee convergence (Sutton et al., 2009).

Attempts to improve Gradient TD methods has largely come from rederiving GTD2 using a saddlepoint formulation of the MSPBE (Mahadevan et al., 2014).
This formulation enables us to view GTD2 as a one-time scale algorithm with a single set of weights $[ \vecw, \vech]$ using a single global stepsize parameter.
In addition, saddlepoint GTD2 can be combined with acceleration techniques like Mirror Prox (Mahadevan et al., 2014) and stochastic variance reduction methods such as SAGA and SVRG (Du et al., 2017). Unfortunately, Mirror Prox has never been shown to improve performance over vanilla GTD2 (White \& White, 2016; Ghiassian et al., 2018). Current variance reduction methods like SAGA are only applicable in the offline setting, and extension to the online setting would require new methods (Du et al., 2017). In Appendix \ref{app_acc} we include comparisons of off-policy prediction algorithms in the batch setting, including recent Kernel Residual Gradient methods (Feng et al., 2019). These experiments suggest that accelerations do not change the relative ranking of the algorithms in the batch setting.

TD is widely considered more sample efficient than all the methods discussed above. A less well-known family of algorithms, called Hybrid methods (Maei, 2011; Hackman, 2012; White \& White, 2016), were designed to exploit the sample efficiency of TD when data is generated on-policy---they reduce to TD in the on-policy setting---and use gradient corrections, like TDC, when the data is off-policy.
These methods provide some of the ease-of-use benefits of TD, but unfortunately do not enjoy the same level of stability as the Gradient TD methods: for instance, HTD can diverge on Baird's counterexample (White \& White, 2016).

\section{TD with Regularized Corrections}
\label{sct:RegularizedTDC}

In this section we develop a new algorithm, called TD with Regularized Corrections (TDRC).
The idea is very simple: to regularize the update to the secondary parameters $\vech$.
The inspiration for the algorithm comes from behavior observed in experiments (see Section \ref{sct:ExperimentalResults}).
Consistently, we find that TDC outperforms---or is comparable to---GTD2 in terms of optimizing the MSPBE; as we reaffirm in our experiments.
These results match previous experiments comparing these two algorithms (White \& White, 2016; Ghiassian et al., 2018).
Previous results suggested that TDC could match TD (White \& White, 2016); but, as we highlight in Section \ref{sct:ExperimentalResults}, this is only when the second stepsize is set so small that TDC is effectively behaving like TD.
This behavior is unsatisfactory because to have guaranteed convergence---e.g. on Baird's Counterexample---the second stepsize needs to be large.
Further, it is somewhat surprising that attempting to obtain an estimate of the gradient of the MSPBE, as done by TDC, can perform so much more poorly than TD.

Notice that the $\vech$ update is simply a linear regression update for estimating the (changing) target $\delta_t$ conditioned on $\vecx_t$, for both GTD2 and TDC.
As $\vecw$ converges, $\delta_t$ approaches zero, and consequently $\vech$ goes to $\zerovec$ as well.
But, a linear regression estimate of $\mathbb{E}[\delta_t | S_t = s]$ is not necessarily the best choice. In fact, using ridge regression---$\ell_2$ regularization---can provide a better bias-variance trade-off: it can significantly reduce variance without incurring too much bias. This is in particular true for $\vech$, where asymptotically $\vech = 0$ and so the bias disappears.

This highlights a potential reason that TD frequently outperforms TDC and GTD2 in experiments: the variance of $\vech$.
If TD already performs well, it is better to simply use the zero variance but biased estimate $\vech_t = \zerovec$.
Adding $\ell_2$ regularization with parameter $\beta$, i.e. $\beta\| \vech \|_2^2$, provides a way to move between TD and TDC.
For a very large $\beta$, $\vech$ will be pushed close to zero and the update to $\vecw$ will be lower variance and more similar to the TD update. On the other hand, for $\beta = 0$, the update reduces to TDC and the estimator $\vech$ will be an unbiased estimator with higher variance.

The resulting update equations for TDRC are
\begin{align}
  \vecw_{t+1} \leftarrow& ~\vecw_t + \alpha \rho_t \delta_t \vecx_t - \alpha \rho_t \gamma (\vech_t^{\tr}\vecx)\vecx_{t+1} \label{eq:regh_secondary}\\
  \vech_{t+1} \leftarrow& ~\vech_t + \alpha \bigl[\rho_t\delta_t - (\vech_t^{\tr} \vecx_t) \bigr] \vecx_t - \alpha \beta \vech_t. \label{eq:regh_primary}
\end{align}
The update to $\vecw$ is the same as TDC, but the update to $\vech$ now has the additional term $\alpha \beta \vech_t$ which corresponds to the gradient of the $\ell_2$ regularizer.
The updates only have a single shared stepsize, $\alpha$, rather than a separate stepsize for the secondary weights $\vech$.
We make this choice precisely for our motivated reason upfront: for ease-of-use.
Further, we find empirically that this choice is effective, and that the reasons for TDC's sensitivity to the second stepsize are mainly due to the fact that a small second stepsize enables TDC to behave like TD (see Section \ref{sec_overall}).
Because TDRC has this behavior by design, a shared stepsize is more effective.

While there are many approaches to reduce the variance of the estimator, $\vech$, we use an $\ell_2$ regularizer because (1) using the $\ell_2$ regularizer ensures the set of solutions for TDRC match TD; (2) the resulting update is asymptotically unbiased, because it biases towards the known asymptotic solution of $\vech$; and (3) the strongly convex $\ell_2$ regularizer improves the convergence rate.
TDC convergence proofs impose conditions on the size of the stepsize for $\vech$ to ensure that it converges more quickly than the ``slow-learner'' $\vecw$, and so increasing convergence rate for $\vech$ should make it easier to satisfy this condition.
Additionally, the $\ell_2$ regularizer biases the estimator $\vech$ towards $\vech = \zerovec$, the known optimum of the learning system as $\vecw$ converges.
This means that the bias imposed on $\vech$ disappears asymptotically, changing only the transient trajectory (we prove this in Theorem~\ref{thm:convergence_tdrc}).

As a final remark, we motivate that TDRC should not require a second stepsize, but have introduced a new parameter ($\beta$) to obtain this property. The idea, however, is that TDRC should be relatively insensitive to $\beta$. The choice of $\beta$ sweeps between two reasonable algorithms: TD and TDC. If we are already comfortable using TD, then it should be acceptable to use TDRC with a larger $\beta$. A smaller $\beta$ will still result in a sound algorithm, though its performance may suffer due to the variance of the updates in $\vech$. In our experiments, we in fact find that TDRC performs well for a wide range of $\beta$, and that our default choice of $\beta = 1.0$ works reasonably across all the problems that we tested.

\subsection{Theoretically Characterizing the TDRC Update}
\label{sec: expected_updates}

The MSPBE (Sutton et al., 2009) is defined as
\begin{align}
  \text{MSPBE}(\vecw_t) &\defeq \E{\delta_t \vecx_t}\tr \E{\vecx_t \vecx_t\tr}^{-1} \E{\delta_t \vecx_t} \label{eq:MSPBEInExpectation}\\
  &= (\sneg\Amat \vecw + \vecb)\tr \Cmat^\inv (\sneg\Amat \vecw + \vecb) \nonumber
\end{align}
where $\E{\delta_t \vecx_t} = \vecb - \Amat \vecw_t$ for
\begin{align*}
  \Cmat &\defeq \E{\vecx \vecx\tr}, \ \ \
  \Amat \defeq \E{\vecx (\vecx - \gamma \vecx' )\tr}, \ \ \
  \vecb \defeq \E{R \vecx}
  .
\end{align*}
The TD fixed point corresponds to $\E{\delta_t \vecx_t} = \zerovec$ and so to the solution to the system $\Amat \vecw_t = \vecb$. The expectation is taken with respect to the target policy $\pi$, unless stated otherwise.

The expected update for TD corresponds to $\E{\delta_t \vecx_t} = \vecb - \Amat \vecw_t$. The expected update for $\vecw$ in TDC corresponds to the gradient of the MSPBE,
\begin{equation*}
  -\frac{1}{2}\nabla \text{MSPBE}(\vecw_t) = \Amat^\top \Cmat^\inv (\vecb - \Amat \vecw_t)
  .
\end{equation*}
Both TDC and GTD2 estimate $\vech \defeq \Cmat^\inv (\vecb - \Amat \vecw_t) =  \E{\vecx_t \vecx_t\tr}^{-1} \E{\delta_t \vecx_t}$, to get the least squares estimate $\vech\tr \vecx_t \approx  \E{\delta_t | \vecx_t}$ for targets $\delta_t$. TDC rearranges terms, to sample this gradient differently than GTD2; for a given $\vech$, both have the same expected update for $\vecw$: $\Amat^\top \vech$.

We can now consider the expected update for TDRC. Solving for the $\ell_2$ regularized problem with target $\delta_t$, we get $(\E{\vecx_t \vecx_t\tr} + \beta \eye) \vech = \E{\delta_t \vecx_t}$ which implies $\vech_\beta = \Cmat_\beta^\inv (\vecb - \Amat \vecw_t)$ for $\Cmat_\beta \defeq \Cmat + \beta \eye$. To get a similar form to TDC, we consider the modified expected update $\Amat_\beta\tr \vech_\beta$ for $\Amat_\beta \defeq \Amat + \beta \eye$. We can get the TDRC update by rearranging this expected update, similarly to how TDC is derived
\begin{align*}
  &\Amat_\beta^\top \vech_\beta = (\E{(\vecx - \gamma \vecx' ) \vecx\tr} + \beta \eye) \vech_\beta\\
  &= \left(\E{\vecx\vecx\tr} + \beta \eye - \gamma \E{\vecx' \vecx\tr} \right) \Cmat_\beta^\inv \E{\delta_t \vecx_t} \\
  &= \left(\E{\vecx\vecx\tr} + \beta \eye \right) \Cmat_\beta^\inv\E{\delta_t \vecx_t} -\gamma \E{\vecx' \vecx\tr}\Cmat_\beta^\inv \E{\delta_t \vecx_t} \\
  &= \E{\delta_t \vecx_t}  - \gamma \E{\vecx' \vecx\tr}\vech_\beta
\end{align*}
This update equation for the primary weights looks precisely like the update in TDC, except that our $\vech$ is estimated differently. Despite this difference, we show in Theorem \ref{thm:fixed_point} (in Appendix \ref{sec: proof_fixed_point}) that the set of TDRC solutions $\vecw$ to $\Amat_\beta^\top \vech_\beta = \zerovec$ includes the TD fixed point, and this set is exactly equivalent if $\Amat_\beta$ is full rank.

\newcommand{\eproof}{$\null\hfill\blacksquare$}
\newenvironment{proof}{\par\noindent{\bf Proof:\ }}{\eproof} 

In the following theorem (proof in Appendix \ref{sec: proof_main_thm}) we directly compare convergence of TDRC to TDC. Though the TDRC updates are no longer gradients, we maintain the convergence properties of TDC. This theorem extends the TDC convergence result to allow for $\beta > 0$, where TDC corresponds to TDRC with $\beta = 0$.
\begin{thm}[Convergence of TDRC] \label{thm:convergence_tdrc}
  Consider the TDRC update, with a TDC like stepsize multiplier $\eta \ge 0$:
  \begin{align}
    \hvec_{t+1} &= \hvec_t + \eta \alpha_t \left[\rho_t \delta_t - \hvec_t^\top \xvec_t \right] \xvec_t - \eta \alpha_t \beta \hvec_t, \label{eq: tdrc_h} \\
    \wvec_{t+1} &= \wvec_t + \alpha_t \rho_t \delta_t \xvec_t - \alpha_t \rho_t \gamma (\hvec_t^\top \xvec_t) \xvec_{t+1}, \label{eq: tdrc_w}
  \end{align}
  with stepsizes $\alpha_t \in (0, 1]$, satisfying $\sum_{t=0}^\infty \alpha_t = \infty$ and $\sum_{t=0}^\infty \alpha_t^2 < \infty$.
  Assume that $(\xvec_t, R_t, \xvec_{t+1}, \rho_t)$ is an i.i.d. sequence with uniformly bounded second moments for states and rewards, $\Avec + \beta \Ivec$ and $\Cvec$ are non-singular, and that the standard coverage assumption (Sutton \& Barto, 2018) holds, i.e. $b(A|S) > 0 \;\; \forall S, A$ where $\pi(A|S) > 0$.
   Then $\wvec_t$ converges with probability one to the TD fixed point if \textbf{either} of the following are satisfied:

  (i) $\Avec$ is positive definite, \textbf{or}

  (ii) $\beta <  -\lambda_{\text{max}} (\Hvec^{-1} \Avec \Avec^\top)$ and $\eta > -\lambda_{\text{min}} (\Cvec^{-1} \Hvec)$, with $\Hvec \defeq \frac{\Avec + \Avec^\top}{2}$.
  Note that when $\Avec$ is not positive definite, $-\lambda_{\text{max}} (\Hvec^{-1} \Avec \Avec^\top)$ and $-\lambda_{\text{min}} (\Cvec^{-1} \Hvec)$ are guaranteed to be positive real numbers.
%
\end{thm}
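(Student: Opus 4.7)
The plan is to recast the coupled updates (\ref{eq: tdrc_h})--(\ref{eq: tdrc_w}) as a single linear stochastic approximation on the stacked iterate $\vecv_t \defeq (\wvec_t, \hvec_t)$ and invoke a standard convergence theorem for linear SA (for instance Borkar, 2008). Using the i.i.d.\ assumption and the calculations of Section~\ref{sec: expected_updates}, the conditional expected drift can be written as
\begin{equation*}
\E{\vecv_{t+1} - \vecv_t \mid \vecv_t} = \alpha_t (G \vecv_t + g), \qquad
G = -\begin{pmatrix} \Avec & B \\ \eta \Avec & \eta(\Cvec + \beta \Ivec) \end{pmatrix},
\end{equation*}
where $B \defeq \gamma \E{\xvec_{t+1}\xvec_t^\top} = \Cvec - \Avec^\top$ and $g \defeq (\vecb, \eta \vecb)$. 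A direct computation using the nonsingularity of $\Avec + \beta \Ivec$ (and hence of $\Avec$ when $\beta = 0$) shows that the unique equilibrium of this linear system is $\vecv^* = (\wvec^*, \zerovec)$ with $\wvec^*$ the TD fixed point, consistent with Theorem~\ref{thm:fixed_point}.

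First I would handle the noise side of the argument: writing $\vecv_{t+1} - \vecv_t = \alpha_t(G \vecv_t + g) + \alpha_t N_t$, the residual $N_t$ is a martingale difference whose conditional second moment is bounded by $K_1 + K_2 \|\vecv_t\|^2$ thanks to the uniformly bounded second moments of $(\xvec_t, R_t, \xvec_{t+1}, \rho_t)$. Combined with the stepsize conditions $\sum_t \alpha_t = \infty$ and $\sum_t \alpha_t^2 < \infty$, the classical linear SA result reduces the theorem to the single claim that $G$ is Hurwitz---equivalently, that every eigenvalue of $M \defeq -G$ has strictly positive real part.

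The main obstacle is establishing this Hurwitz property under each parameter regime. I would argue it via a block Lyapunov inequality: seek symmetric positive-definite $P$ and $Q$ so that $\diag(P, Q) M + M^\top \diag(P, Q) \succ 0$. Expanding yields diagonal blocks $P\Avec + \Avec^\top P$ and $\eta(Q(\Cvec + \beta \Ivec) + (\Cvec + \beta \Ivec) Q)$ together with an off-diagonal coupling $P B + \eta \Avec^\top Q$; a Schur complement then reduces stability to a pair of matrix inequalities in $\Hvec$ and $\Cvec + \beta \Ivec$. In case (i), where $\Avec$ is positive definite, one has $\Hvec \succ 0$, and the choice $P = \Ivec$ with $Q$ a sufficiently large multiple of $\Ivec$ makes the diagonal blocks dominate the cross term via Young's inequality. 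In case (ii), where $\Avec$ is not positive definite, the relevant sign is flipped on $\Hvec$ (this is the implicit content of the theorem's note that $-\lambda_{\max}$ and $-\lambda_{\min}$ are positive), and the inequalities $\beta < -\lambda_{\max}(\Hvec^{-1}\Avec\Avec^\top)$ and $\eta > -\lambda_{\min}(\Cvec^{-1}\Hvec)$ are precisely what is needed so that the Schur complement of the $(2,2)$ block of a weighted symmetrization (taking $P = -\Hvec$, $Q = \Ivec$) remains positive definite; verifying this correspondence amounts to diagonalizing in a basis adapted to $\Cvec$ and $\Hvec$.

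Once $G$ is shown to be Hurwitz, convergence of $\vecv_t$ to $\vecv^*$ with probability one---and hence of $\wvec_t$ to the TD fixed point with $\hvec_t \to \zerovec$---follows immediately from the cited SA theorem. I expect the eigenvalue bookkeeping for case (ii) to be the main technical burden: unlike case (i), the symmetric part of $\Avec$ no longer supplies a ready-made positive-definite quadratic form, and one must carefully balance the two scalar parameters $\beta$ and $\eta$ against matrices of compatible but opposite signs.
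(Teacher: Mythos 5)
Your setup coincides with the paper's own: the paper likewise stacks the iterates into $\varrhovec_t = (\hvec_t, \wvec_t)$ (a block permutation of your $\vecv_t$), verifies the martingale-difference and conditional second-moment growth conditions exactly as you do, and invokes the Borkar--Meyn linear-SA theorem, so for both of you everything reduces to showing the drift matrix is Hurwitz; your drift matrix and the equilibrium $(\wvec^*, \zerovec)$ are computed correctly. The divergence is in how the Hurwitz property is certified, and here your proposal has a genuine gap: the central step is asserted rather than proved. Existence of a \emph{block-diagonal} certificate $\diag(P,Q)M + M^\top \diag(P,Q) \succ 0$ is strictly stronger than $M$ having spectrum in the open right half-plane, so even granting the theorem you still owe a construction of $P, Q$ from conditions (i)/(ii), and your suggested choices do not obviously supply one. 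In case (i), taking $P = \Ivec$ and $Q = c\,\Ivec$ with $c$ large fails as described: the coupling block is $B + c\eta\Avec^\top$, which grows with $c$ at the same rate as the $(2,2)$ block $2c\eta\Cvec_\beta$, so enlarging $Q$ does not decouple; after the Schur complement and balancing over $c$ by Young's inequality one is left needing $2\Hvec$ to dominate a $c$-independent residual of the form $\tfrac12\bigl(B\Cvec_\beta^{-1}\Avec + \Avec^\top\Cvec_\beta^{-1}B^\top\bigr)$ plus balanced terms, with $B = \Cvec - \Avec^\top$, and case (i) imposes no quantitative lower bound on $\Hvec$ relative to $\Cvec$ and $\Avec$ --- note the theorem asserts (i) for \emph{every} $\eta \ge 0$ and $\beta \ge 0$, so any correct certificate must hold uniformly in these parameters. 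In case (ii), $P = -\Hvec$ is positive definite only when $\Hvec \prec 0$, which is not implied by ``$\Avec$ not positive definite'' (though it is forced if (ii) is to be satisfiable with $\beta \ge 0$); more importantly, ``diagonalizing in a basis adapted to $\Cvec$ and $\Hvec$'' is unavailable because $\Cvec$, $\Hvec$, and $\Avec\Avec^\top$ need not commute, and no derivation is given that the two stated eigenvalue bounds are equivalent to your matrix inequalities.

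The paper closes exactly this hole by a sharper device worth adopting: it factors $\det(\Gvec - \lambda\Ivec) = \det\bigl(\lambda(\eta\Cvec + (\eta\beta + \lambda)\Ivec) + \Avec(\eta\Avec^\top + (\eta\beta + \lambda)\Ivec)\bigr)$, so every eigenvalue $\lambda$ satisfies, for some nonzero $\zvec \in \mathbb{C}^d$, the scalar quadratic $\lambda^2 + (\eta\beta + \eta b_c + \lambda_z)\lambda + \eta(\beta\lambda_z + b_a) = 0$, where $b_c = \zvec^*\Cvec\zvec/\|\zvec\|^2$, $b_a = \zvec^*\Avec\Avec^\top\zvec/\|\zvec\|^2$, and $\lambda_z = \zvec^*\Avec\zvec/\|\zvec\|^2$. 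Taking real parts of the quadratic formula with complex coefficients yields three explicit scalar conditions per $\zvec$; these hold for all $\eta, \beta \ge 0$ when $\Repart(\lambda_z) > 0$ for every $\zvec$ (case (i), no balancing needed), and otherwise hold whenever $\beta\,\Repart(\lambda_z) + b_a > 0$ and $\eta b_c + \Repart(\lambda_z) > 0$, whose Rayleigh-quotient extremizations give precisely $\beta < -\lambdamax(\Hvec^{-1}\Avec\Avec^\top)$ and $\eta > -\lambdamin(\Cvec^{-1}\Hvec)$. The crucial feature is that the three quadratic forms are evaluated at the \emph{same} complex vector $\zvec$ --- exactly the coupling that a fixed-basis, real Schur-complement computation cannot reproduce. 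If you wish to keep the LMI route you must actually exhibit $P$ and $Q$ and prove the claimed equivalence; otherwise, substitute the determinant reduction above for your third paragraph, and the remainder of your argument (noise conditions, stepsizes, equilibrium identification via nonsingularity of $\Avec + \beta\Ivec$) stands as is.
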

We can extend this result to allow for singular $\Cvec$, which was not possible for TDC. The set of conditions on $\eta$ and $\beta$, however, are more complex. We include this result in Appendix \ref{sec: convergence_proof_C_singular}, with conditions given in Eq. \ref{eq: alternate_cond_tdrc_convergence_C_singular}.

Theorem \ref{thm:convergence_tdrc} shows that TDRC maintains convergence when TD is convergent: the case when $\Avec$ is positive definite. Otherwise, TDRC converges under more general settings than TDC, because it has the same conditions on $\eta$ as given by Maei (2011) but allows for $\beta > 0$.
The upper bound on $\beta$ makes sense, since as $\beta \rightarrow \infty$, TDRC approaches TD.
Examining the proof, it is likely that the conditions on $\eta$ could actually be relaxed (see Eq. \ref{eq: third_cond}).

One advantage of TDRC is that the matrix $\Cvec_\beta = \Cvec + \beta \Ivec$ is non-singular by construction.
This raises the question: could we have simply changed the MSPBE objective to use $\Cvec_\beta$ and derived the corresponding TDC-like algorithm?
This is easier than TDRC, as the proof of convergence for the resulting algorithm trivially extends the proof from Maei (2011), as the change to the objective function is minimal.
We derive corresponding TDC-like update and demonstrate that it performs notably worse than TDRC in Appendix~\ref{batch_setting}.

\begin{figure*}[t]
  \centering
  \includegraphics[width=\textwidth]{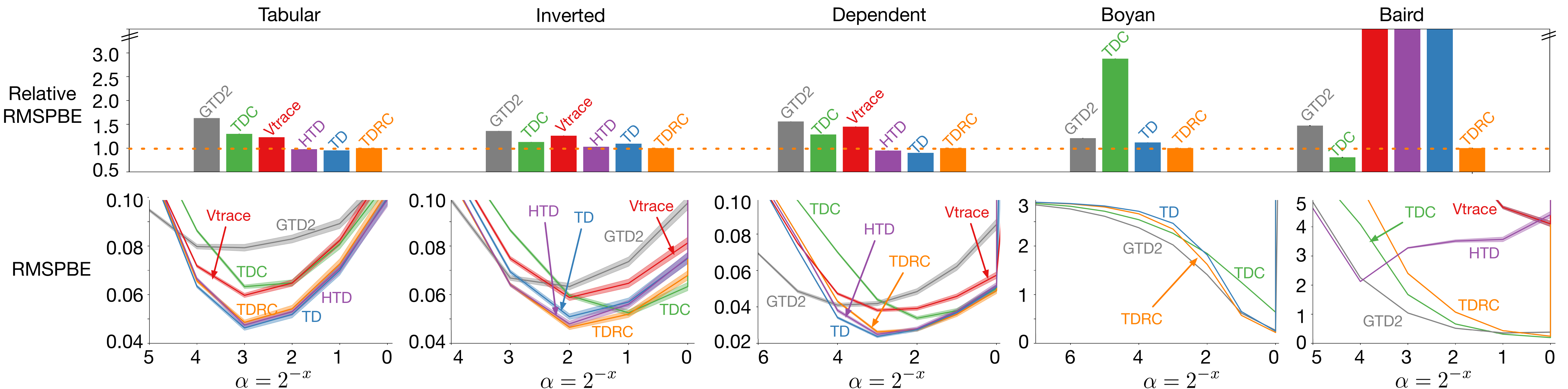}
  \caption{
    \textbf{Top:} The normalized average area under the RMSPBE learning curve for each method on each problem.
    Each bar is normalized by TDRC's performance so that each problem can be shown in the same range.
    All results are averaged over 200 independent runs with standard error bars shown at the top of each rectangle, though most are vanishingly small.
    TD and VTrace both diverge on Baird's Counterexample, which is represented by the bars going off the top of the plot.
    HTD's bar is also off the plot due to its oscillating behavior.
    \textbf{Bottom:} Stepsize sensitivity measured using average area under the RMSPBE learning curve for each method on each problem.
    HTD and VTrace are not shown in Boyan's Chain because they reduce to TD for on-policy problems. Values for bar graphs are given in Table \ref{tab:adagrad_stepsize}.
  }
  \label{fig:alpha_sensitivity_plus_bar_plot}
\end{figure*}

\section{Experiments in the Prediction Setting}\label{sct:ExperimentalResults}

We first establish the performance of TDRC across several small linear prediction tasks where we carefully sweep hyper-parameters, analyze sensitivity, and average over many runs.
The goal is to understand if TDRC has similar performance to TD, with similar parameter sensitivity, but avoids divergence. Before running TDRC, we set $\beta = 1.0$ across all the experiments to refrain from tuning this additional parameter.
\blfootnote{\noindent Code for all experiments is available at: \\ \href{https://github.com/rlai-lab/Regularized-GradientTD}{https://github.com/rlai-lab/Regularized-GradientTD}}

\subsection{Prediction Problems}
In the prediction setting, we investigate three different problems with variations in feature representations, target and behavior policies. We choose problems that have been used in prior work empirically investigating TD methods. The first problem, Boyan's chain (Boyan, 2002), is a 13 state Markov chain where each state is represented by a compact feature representation. This encoding causes inappropriate generalization during learning, but $v_\pi$ can be represented perfectly with the given features.

The second problem is Baird's (1995) well-known star counterexample.
In this MDP, the target and behavior policy are very different resulting in large importance sampling corrections.
Baird's Counterexample has been used extensively to demonstrate the soundness of Gradient TD algorithms, so provides a useful testbed to demonstrate that TDRC does not sacrifice soundness for ease-of-use.

Finally, we include a five state random walk MDP. We use three different feature representations: tabular (unit basis vectors), inverted, and dependent features. This last problem was chosen so that we could exactly mirror the experiments used in prior work benchmarking TDC, GTD2, and TD (Sutton et al., 2009). Like Hackman (2012), we used an off-policy variant of the problem. The behavior policy chooses the left and right action with equal probability, and the target policy chooses the right action 60\% of the time.
Figure \ref{fig:envs} in the appendix summarizes all three problems.

We report the total RMSPBE over 3000 steps, measured on each time step, averaged over 200 independent runs. The learning algorithms under study have tunable meta-parameters that can dramatically impact the efficiency of learning. We extensively sweep the values of these meta-parameters (as described in Appendix \ref{Parameter-settings}), and report both summary performance and the sensitivity of each method to its meta-parameters.
For all results reported in the prediction setting, we use the Adagrad (Duchi, Hazan \& Singer, 2011) algorithm to adapt a vector of stepsizes for each algorithm.
Additional results for constant scalar stepsizes and ADAM vector stepsizes can be found in Appendix~\ref{app_acc} and Appendix~\ref{sct:AdditionalResults}; the conclusions are similar.

\begin{figure*}[ht]
  \centering
  \includegraphics[width=\textwidth]{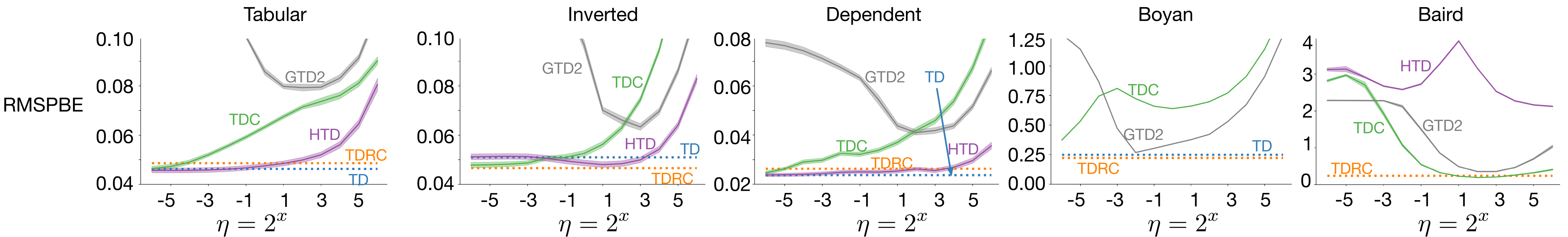}
  \caption{
    Sensitivity to the second stepsize, for changing parameter $\eta$. All methods use Adagrad. All methods are free to choose any value of $\alpha$ for each $\eta$. Methods that do not have a second stepsize are shown as a flat line.
    Values swept are $\eta \in \{2^{-6}, 2^{-5}, \ldots , 2^5, 2^6 \}$.
  }
  \label{fig:eta_sensitivity_adagrad}
\end{figure*}

\begin{figure*}[t]
  \centering
  \includegraphics[width=\textwidth]{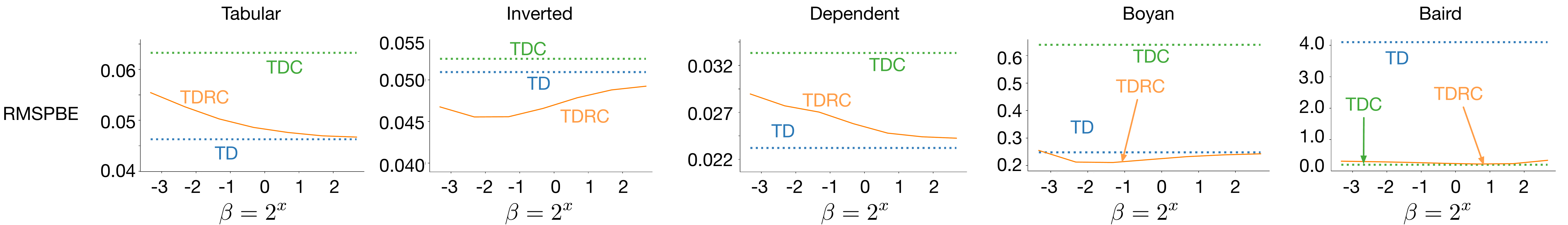}
  \caption{
    Sensitivity to the regularization parameter, $\beta$.
    TD and TDC are shown as dotted baselines, demonstrating extreme values of $\beta$; $\beta = 0$ represented by TDC and $\beta \rightarrow \infty$ represented by TD.
    This experiment demonstrates TDRC's notable insensitivity to $\beta$.
    Its similar range of values across problems, including Baird's counterexample, motivates that $\beta$ can be chosen easily and is not heavily problem dependent.
    Values swept are: $\beta \in 0.1 * \{2^0, 2^1, \ldots , 2^5, 2^6 \}$.
  }
  \label{fig:beta_sensitivity_main}
\end{figure*}

\subsection{Overall Performance}\label{sec_overall}

We first report performance for both the best stepsize as well as provide the parameter sensitivity plots in Figure \ref{fig:alpha_sensitivity_plus_bar_plot}.
In the bar plot, we compactly summarize relative performance to TDRC. TDRC performs well across problems, while every other method has at least one setting where it does noticeably worse than TDRC. GTD2 generally learns more slowly than other methods. This result is unsurprising, as it relies so heavily on $\vech$ for learning $\vecw$: $\vecw_{t+1} \gets \vecw_t + \alpha (\vecx_t - \gamma \vecx_{t+1}) \vech_t^{\tr}\vecx_t$. In the beginning, when $\vech$ is inaccurate, the updates for $\vecw$ are poor. TDC generally learns much faster. In Boyan's chain, however, TDC seems to suffer from variance in $\vech$. The features in this environment cause bigger changes in $\vech$ than in the other environments. TDRC, on the other hand, which regularizes $\vech$, significantly improves learning in Boyan's chain. TD and HTD perform very well across all problems except for Baird's. Finally, Vtrace---which uses a TD update with importance sampling ratios clipped at 1---performs slightly worse than TD due to the introduced bias, but does not mitigate divergence issues due to off-policy learning in Baird's.

The results reported here for TDC do not match previous results which indicate performance generally as good as TD (White \& White, 2016). The reason for this discrepancy is that previous results carefully tuned the second stepsize $\eta\alpha$ for TDC. The need to tune $\eta$ is part of the difficulty in using TDC. To better understand the role it is playing here, we include an additional result where we sweep $\eta$ as well as $\alpha$ for TDC; for completeness, we also include this sweep for GTD2 and HTD. We sweep $\eta \in \{2^{-6}, 2^{-5}, \ldots , 2^5, 2^6 \}$. This allows for $\eta\alpha$ that is very near zero as well as $\eta\alpha$ much larger than $\alpha$. The theory for TDC suggests $\eta$ should be larger than 1. The results in Figure~\ref{fig:eta_sensitivity_adagrad}, however, demonstrate that TDC almost always prefers the smallest $\eta$; but for very small $\eta$ TDC is effectively a TD update. By picking a small $\eta$, TDC essentially keeps $\vech$ near zero---its initialization---and so removes the gradient correction term. TDC was therefore able to match TD by simply tuning a parameter so that it effectively \textit{was} TD. Unfortunately, this is not a general strategy, for instance in Baird's, TDC picks $\eta \ge 1$ and small $\eta$ perform  poorly.

\subsection{Sensitivity to $\beta$}
So far we have only used TDRC with a regularization parameter $\beta = 1$.
This choice was both to avoid over-tuning our method, as well as to show that an intuitive default value could be effective across settings. Intuitively, TDRC should not be sensitive to $\beta$, as both TDC ($\beta = 0$) and TD (large $\beta$) generally perform reasonably. Picking a $\beta > 0$ should enable TDRC to learn faster like TD---by providing a lower variance correction---as long as it's not too large, to ensure we avoid the divergence issues of TD.

We investigate this intuition by looking at performance across a range of $\beta \in 0.1 * \{2^0, 2^1, \ldots , 2^5, 2^6 \}$. For $\beta = 0$, we have TDC. Ideally, performance should quickly improve for any non-negligible $\beta$, with a large flat region of good performance in the parameter sensitivity plots for a wide range of $\beta$. This is generally what we observe in Figure~\ref{fig:beta_sensitivity_main}. For even very small $\beta$, TDRC noticeably improves performance over TDC, getting halfway between TDC and TD (Random Walk with Tabular or Dependent features) or in some cases immediately obtaining the good performance of TD (Random Walk with Inverted Features, Boyan's chain and Baird's). Further, in these three cases, it even performs better or comparably to both TDC and TD for all tested $\beta$. Notably, these are the settings with more complex feature representations, suggesting that the regularization parameter helps TDRC learn an $\vech$ that is less affected by harmful aliasing in the feature representation. Finally, the results also show that $\beta = 1.0$ was in fact not optimal, and we could have obtained even better results in the previous section, typically with a larger $\beta$. These improvements, though, were relatively marginal over the choice of $\beta = 1.0$.

Naturally, the scale of $\beta$ should be dependent on the magnitude of the rewards, because in TDRC the gradient correction term is attempting to estimate the expected TD error. One answer is to simply employ adaptive target normalization, such as Pop-Art (van Hasselt et al., 2016), and keep $\beta$ equal to one. We found TDRC with $\beta=1$ performed at least as well as TD in on-policy chain domains across a large range of reward scales (see Appendix \ref{sec:reward_scale_sensitivity}).

\section{Experiments in the Control Setting}

Like TD, TDRC was developed for prediction, under linear function approximation. Again like TD, there are natural---though in some cases heuristic---extensions to the control setting and to non-linear function approximation. In this section, we investigate if TDRC can provide similar improvements in the control setting. We first investigate TDRC in control with linear function approximation, where the extension is more straightforward. We then provide a heuristic strategy to use TDRC---and TDC---with non-linear function approximation. We demonstrate, for the first time, that Gradient TD methods can outperform Q-learning when using neural networks, in two classic control domains and two visual games.

\subsection{Extending TDRC to Control}

Before presenting the control experiments, we describe how to extend TDRC to control, and to non-linear function approximation. The extension to non-linear function approximation is also applicable in the prediction setting; we therefore begin there. We then discuss the extension to Q-learning which involves estimating action-values for the greedy policy.

Consider the setting where we estimate $\hat v(s)$ using a neural network. The secondary weights in TDRC are used to obtain an estimate of $\E{\delta_t | S_t = s}$. Under linear function approximation, this expected TD error is estimated using linear regression with $\ell_2$ regularization: $\vech^\top\vecx_t \approx \E{\delta_t | S_t = s}$. With neural networks, this expected TD error can be estimated using an additional head on the network. The target for this second head is still $\delta_t$, with a squared error and $\ell_2$ regularization. One might even expect this estimate of $\E{\delta_t | S_t = s}$ to improve, when using a neural network, rather than a hand-designed basis.

An important nuance is that gradients are not passed backward from the error in this second head. This choice is made for simplicity, and to avoid any issues when balancing these two losses. The correction is secondary, and we want to avoid degrading performance in the value estimates simply to improve estimates of $\E{\delta_t | S_t = s}$. It also makes the connection to TD more clear as $\beta$ becomes larger, as the update to the network is only impacted by $\vecw$. We have not extensively tested this choice; it remains to be seen if using gradients from both heads might actually be a better choice.

The next step is to extend the algorithm to action-values. For an input state $s$, the network produces an estimate $\hat q(s, a)$ and a prediction $\hat\delta(s,a)$ of $\E{\delta_t | S_t = s, A_t = a}$ for each action.
The weights $\vech_{t+1, A_t}$ for the head corresponding to action $A_t$ are updated using the features produced by the last layer $\xvec_t$, with $\hat\delta(S_t,A_t) = \vech_{t,A_t}^{\tr} \vecx_t$:
\begin{align}
    \vech_{t+1,A_t} \leftarrow& ~\vech_{t,A_t} + \alpha \bigl[\delta_t - \vech_{t,A_t}^{\tr} \vecx_t \bigr] \vecx_t - \alpha \beta \vech_{t,A_t} \label{eq:secondary_weight_dqrc}
\end{align}
For the other actions, the secondary weights are not updated since we did not get a target $\delta_t$ for them.

The remaining weights $\vecw_{t}$, which include all the weights in the network excluding $\vech$, are updated using
\begin{align}
&\delta_t = R_{t+1} + \gamma {q(S_{t+1}, a')} - q(S_t, A_t)\label{eq:primary_weight_dqrc}\\
  & \vecw_{t+1} \!\!\leftarrow \!\!\vecw_t\! + \!\alpha\delta_t \gradSA{S_t}{A_t}
    \!-\!  \alpha\gamma \hat\delta(S_t,A_t)\gradSA{S_{t+1}}{a'}  \nonumber
\end{align}
where $a'$ is the action that the policy we are evaluating would take in state $S_{t+1}$. For control, we often select the greedy policy, and so $a' = \argmax_a {q(S_{t+1}, a)}$ and
$\delta_t = R_{t+1} + \gamma \max_a {q(S_{t+1}, a)} - q(S_t, A_t)$ as in Q-learning. This action $a'$ may differ from the (exploratory) action $A_{t+1}$ that is actually executed, and so this estimation is off-policy.  There are no importance sampling ratios because we are estimating action-values.

We call this final algorithm QRC: Q-learning with Regularized Corrections. The secondary weights in QRC are initialized to $\vec0$, to maintain the similarity to TD. We can obtain, as a special case, a control algorithm based on TDC, which we call QC. If we set $\beta = 0$ in Eq. \ref{eq:secondary_weight_dqrc}, we obtain QC.

We conclude this section by highlighting that there is an alternative route to use TDRC, as is, for control: by using TDRC as a critic within Actor-Critic. We provide the update equations in Appendix \ref{app_ac_tdrc}.

\subsection{Control Problems}
We first test the algorithms in a well-understood setting, in which we know Q-learning is effective: Mountain Car with a tile-coding representation. We then use neural network function approximation in two classic control environments---Mountain Car and Cart Pole---and two visual environments from the MinAtar suite (Young \& Tian, 2019). For all environments, we fix $\beta = 1.0$ for QRC, $\eta = 1.0$ for QC and do not use target networks (for experiments with target networks see Appendix~\ref{app:TargetNets}).

In the two classic control environments, we use 200 runs, an $\epsilon$-greedy policy with $\epsilon = 0.1$ and a discount of $\gamma = 0.99$.
In Mountain Car (Moore, 1990; Sutton, 1996), the goal is to reach the top of a hill, with an underpowered car. The state consists of the agent's position and velocity, with a reward of $-1$ per step until termination, with actions to accelerate forward, backward or do nothing. In Cart Pole (Barto, Sutton \& Anderson, 1983), the goal is to keep a pole balanced as long as possible, by moving a cart left or right. The state consists of the position and velocity of the cart, and the angle and angular velocity of the pole. The reward is +1 per step. An episode ends when the agent fails to balance the pole or balances the pole for more than 500 consecutive steps. For non-linear control experimental details on these environments see Appendix~\ref{app:CPAndMCExpDetails}.

For the two MinAtar environments, Breakout and Space Invaders, we use 30 runs, $\gamma = 0.99$ and a decayed $\epsilon$-greedy policy with $\epsilon=1$ decaying linearly to $\epsilon=0.1$ over the first 100,000 steps. In Breakout, the agent moves a paddle left and right, to hit a ball into bricks. A reward of +1 is given for every brick hit; new rows appear when all the rows are cleared. The episode ends when the agent misses the ball and it drops.
In Space Invaders, the agent shoots alien ships coming towards it, and dodges their fire. A reward of +1 is given for every alien that is shot. The episode ends when the spaceship is hit by alien fire or reached by an alien ship. These environments are simplified versions from the Atari suite, designed to avoid the need for large networks and make it more feasible to complete more exhaustive comparison, including using more runs. All methods use a network with one convolutional layer, followed by a fully connected layer. All experimental settings are identical to the original MinAtar paper (see Appendix \ref{app:MinAtarExpDetails} for details).

\subsection{Linear Control}

We compare TD, TDC and TDRC for control, both within an Actor-Critic algorithm and with their extensions to Q-learning. In Figure~\ref{fig:linear-mc}, we can see two clear outcomes from both control experiments. In both cases, the control algorithm based on TDC fails to converge to a reasonable policy. The TDRC variants, on the other hand, match the performance of TD.

\begin{figure}[h]
  \centering
  \includegraphics[width=0.45\textwidth]{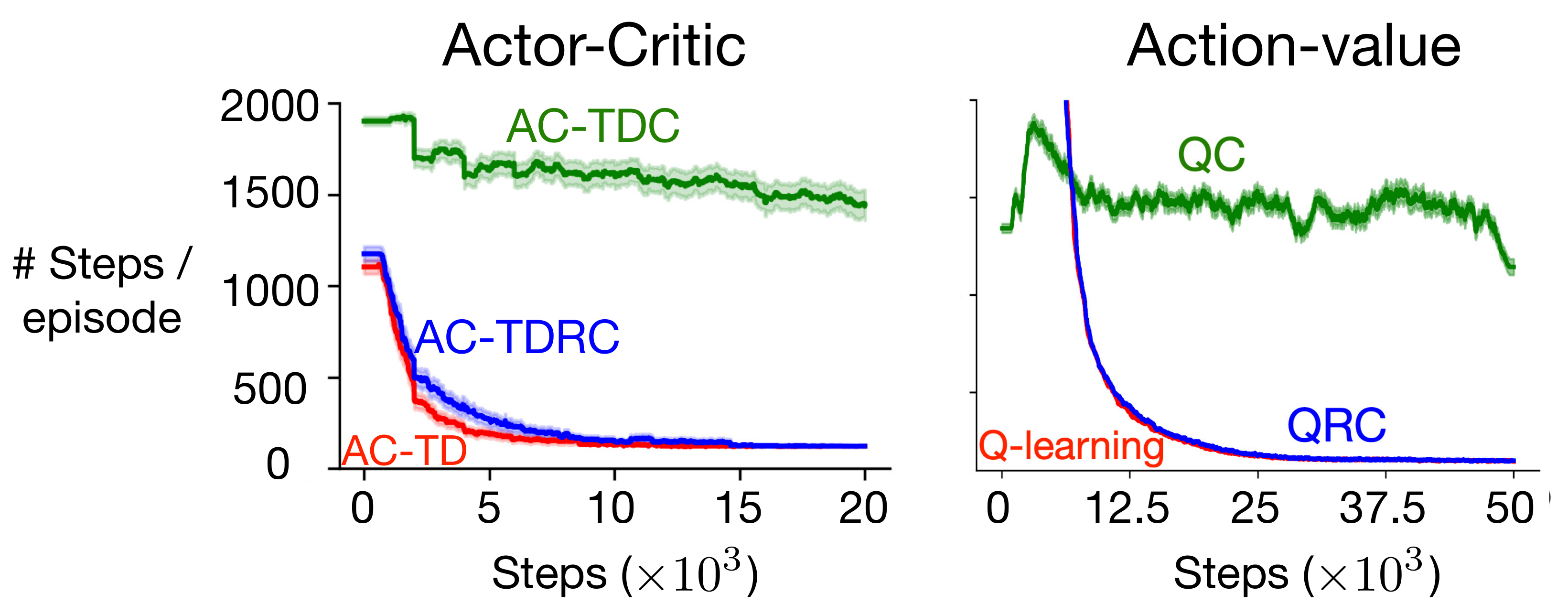}
  \caption{
    Numbers of steps to reach goal, averaged over runs, versus number of environment steps, in Mountain Car with tile-coded features.
    \textbf{Left:} Comparison of actor-critic control algorithms with various critics with ADAM optimizer. For actor critic experimental details see Appendix~\ref{app_ac_tdrc}.
    \textbf{Right:} Comparison of state-action value control algorithms with constant stepsizes.
    Stepsizes were swept over $\alpha \in \{2^{-8}, 2^{-7}, \ldots, 2^{-2}, 2^{-1} \}$ and then scaled by the number of active features. We used 16 tilings and $4 \times 4$ tiles. Results are averaged over 200 independent runs, with shaded error corresponding to standard error.
  }
  \label{fig:linear-mc}
\end{figure}

This result might be surprising, since the only difference between TDRC and TDC is regularizing $\vech$. This small addition, though, seems to play a big role in avoiding this surprisingly bad performance of TDC, and potentially explains why gradient methods have been dismissed as hard-to-use.
When we looked more closely at TDC's behavior, we found that the TDC agent improved its behavior policy quickly. But, the magnitude of the gradient corrections also grew rapidly. This high magnitude gradient correction resulted in a higher magnitude gradient for $\vecw$, and pushed down the learning rate for TDC.
The constraint on this correction term provided by TDRC seems to prevent this explosive growth, allowing TDRC to attain comparable performance to the TD-based control agent.

\subsection{Non-linear Control}
\label{subsct:LargeScaleControlExperiments}

When moving to non-linear function approximation, with neural networks, we find a more nuanced outcome: QC still suffers compared to Q-learning and QRC in the classic control environments---though less than before---yet provides substantial improvements in the two MinAtar environments.

In Figure~\ref{fig:NonlinearMCAndCP}, we find that QC learns more slowly than QRC and Q-learning. Again, QRC brings performance much closer to Q-learning, when QC is performing notably more poorly. In Mountain Car, we tested a more highly off-policy setting: 10 replay steps. By using more replay per step, more data from older policies is used, resulting in a more off-policy data distribution. Under such an off-policy setting, we expect Q-learning to suffer, and in fact, we find that QRC actually performs better than Q-learning. We provide additional experiments on Mountain Car in Appendix~\ref{sec: qc_mc_non_linear_investigation}.

On the two MinAtar environments, in Figure~\ref{fig:breakout}, we obtain a surprising result: QC provides substantial performance improvements over Q-learning. QRC with $\beta=1$ is not as performant as QC in this setting and instead obtains performance in-between QC and Q-learning. However, QRC with smaller values of regularization parameter (shown as lighter blue lines) results in the best performance. This outcome highlights that Gradient TD methods are not only theoretically appealing, but could actually be a better alternative to Q-learning in standard (non-adversarially chosen) problems. It further shows that, though QRC with $\beta = 1.0$ generally provides a reasonable strategy, substantial improvements could be obtained with an adaptive method for selecting $\beta$.

\begin{figure}[t]
  \centering
  \includegraphics[width=0.43\textwidth]{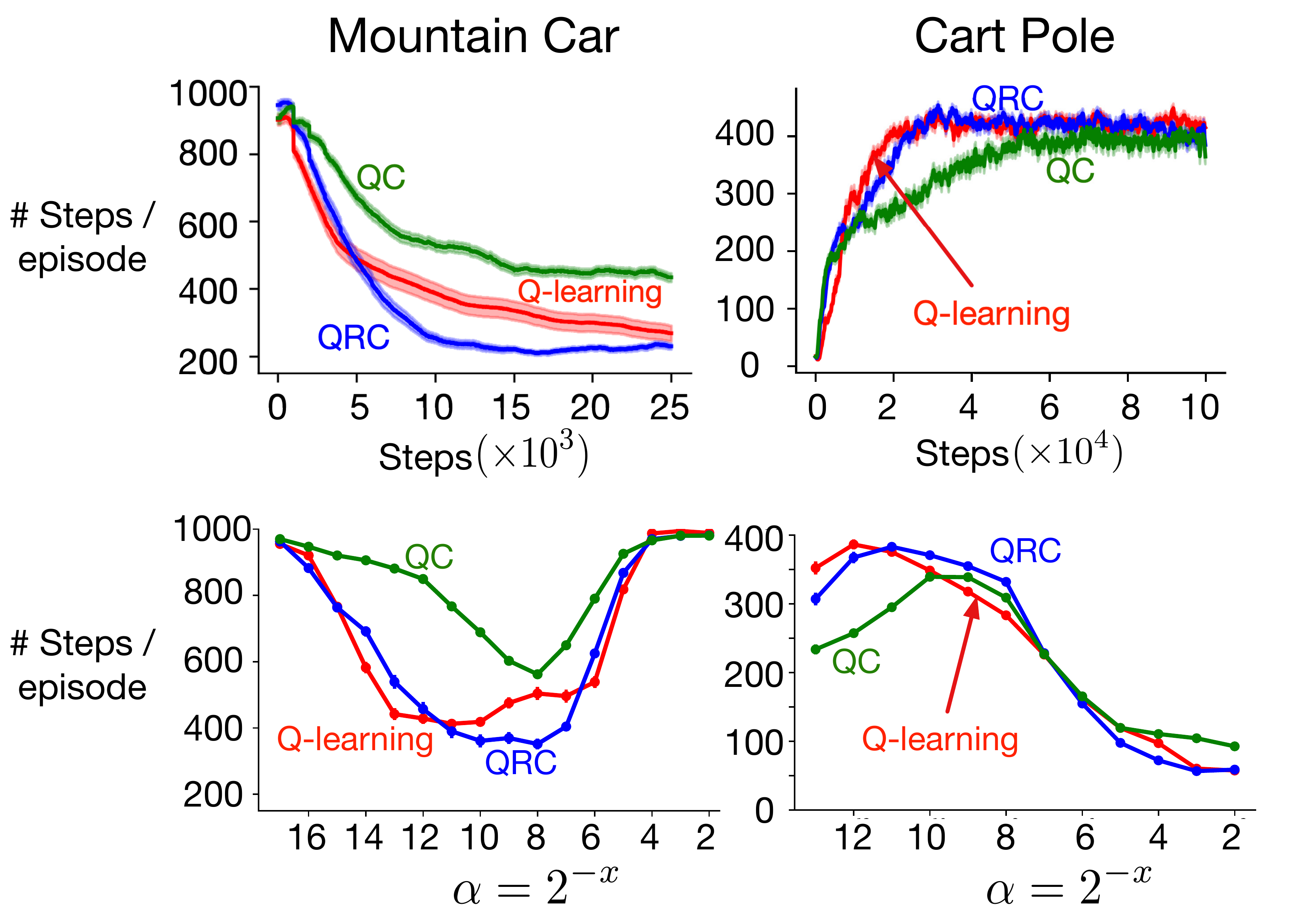}
  \caption{
    Performance of Q-learning, QC and QRC on two classic control environments. On top the learning curves are shown and at the bottom the parameter sensitivity for various stepsizes. Lower is better for Mountain Car (fewer steps to goal) and higher is better for Cart Pole (more steps balancing the pole). Results are averaged over 200 runs, with shaded error corresponding to standard error.
  }
  \label{fig:NonlinearMCAndCP}
\end{figure}

\begin{figure}[t]
  \centering
  \includegraphics[width=0.43\textwidth]{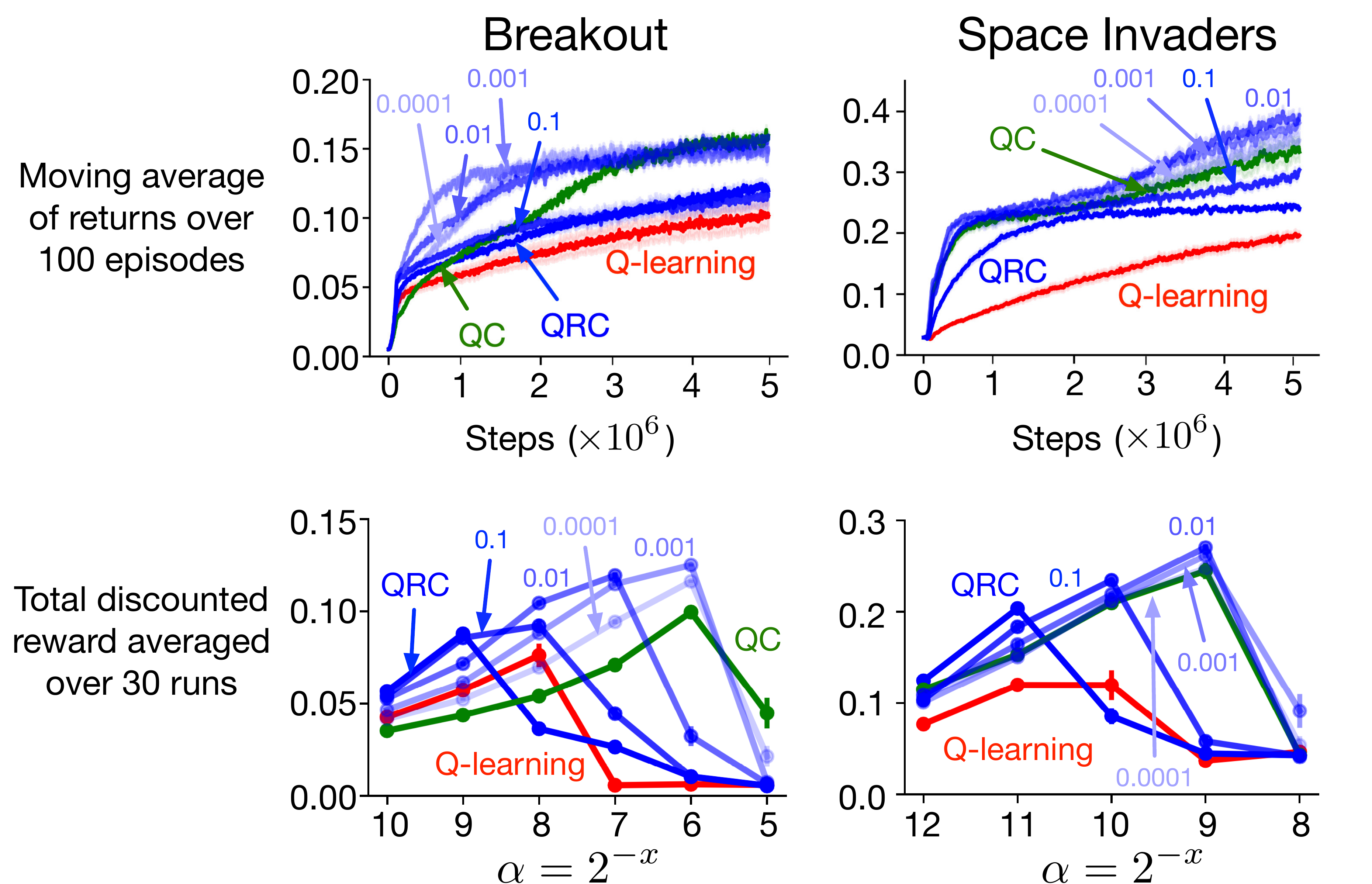}
  \caption{
    Performance of Q-learning, QC, and QRC in the two MinAtar environments. The learning curves in the top row depict the average return over time for the best performing stepsize for each agent. The stepsize sensitivity plots in the bottom row depict the total discounted reward achieved with several stepsize values. Higher is better. Results are averaged over 30 independent runs, with shaded error corresponding to standard error. Light blue lines show the performance of QRC with smaller regularization parameters, $\beta < 1$.
  }
  \label{fig:breakout}
\end{figure}

\section{Conclusions and Discussion}
\label{sct:ConclusionsAndFutureWork}

In this work, we introduced a simple modification of the TDC algorithm that achieves performance much closer to that of TD. Our algorithm uses a single stepsize like TD, and behaves like TD when TD performs well but also prevents divergence under off-policy sampling. TDRC is built on TDC, and, as we prove, inherits its soundness guarantees.
In small linear prediction problems TDRC performs best overall and exhibits low sensitivity to its regularization parameter. In control experiments, with extensions to non-linear function approximation, we find that the resulting algorithm, QRC, performs as well as Q-learning and in some cases notably better. This constitutes the first demonstration of Gradient TD methods outperforming Q-learning, and suggests this simple modification to the standard Q-learning update---to give QRC---could provide a more general purpose algorithm.

An important next step is to better understand the conditions on the regularization parameter $\beta$ and whether we can truly remove the second stepsize $\eta$. The current theorem does not remove conditions on $\eta$; in fact, it has the same conditions as TDC. We hypothesize that $\beta$ should make $\vech$ converge more quickly, and so remove the need for the stepsize for the secondary weights to be bigger. Further, the conditions on $\eta$ and $\beta$ both depend on domain specific quantities that are generally difficult to compute. In the small prediction problems, we were easily able to confirm that our choices of meta-parameter met the theoretical conditions, however for the larger control problems this remains an open question. In general, developing tight conditions on $\eta$ and $\beta$ would help facilitate comfort in using TDRC.

Another important next step is to thoroughly investigate if these empirical results hold in a broader range of environments and settings. The results in this work suggest that TDRC could potentially be a replacement for the widely used TD algorithms. It is only a small modification to an existing TD implementation, and so would not be difficult to adopt. But, to make such a bold claim, much more evidence is needed, particularly because TD has been shown to be so successful for many years.

\newpage
\section*{Acknowledgments}
This work was funded by NSERC and CIFAR, particularly through funding the Alberta Machine Intelligence Institute (Amii) and the CCAI Chair program. The authors also gratefully acknowledge funding from JPMorgan Chase \& Co. and Google DeepMind. We would like to thank Csaba Szepesv\'ari, Anna Harutyunyan, and the anonymous reviewers for useful feedback. We also thank Banafsheh Rafiee, Andrew Jacobsen, and Alan Chan for helpful discussions during the course of this project.

\section*{References}

\begin{list}{}{%
    \setlength{\topsep}{0pt}%
    \setlength{\leftmargin}{0.2in}%
    \setlength{\listparindent}{-0.2in}%
    \setlength{\itemindent}{-0.2in}%
    \setlength{\parsep}{\parskip}%
  }%
\item[]
  Baird, L. C. (1995). Residual algorithms: Reinforcement learning with function approximation. In \textit{International Conference on Machine Learning}, pp. 30–37. Morgan Kaufmann, San Francisco.

\item[]
  Barto, A. G., Sutton, R. S., Anderson, , C. W. (1983). Neuronlike adaptive elements that can solve difficult learning control problems. \textit{IEEE transactions on systems, man, and cybernetics, 5}, 834-846.

\item[]
  Bellemare, M. G., Naddaf, Y., Veness, J., Bowling, M. (2013). The arcade learning environment: An evaluation platform for general agents. \textit{Journal of Artificial Intelligence Research, 47}, 253-279.

\item[]
  Borkar, V. S., Meyn, S.P. (2000). The {O.D.E.} Method for Convergence of Stochastic Approximation and Reinforcement Learning. \textit{{SIAM} J. Control and Optimization}.

\item[]
  Boyan, J.A. (2002). Technical Update: Least-Squares Temporal Difference Learning. \textit{Machine Learning}.

\item[]
  Dai, B., Albert, S., Lihong, L., Lin, X., Niao, H., Zhen, L., Jianshu, C., Le, S. SBEED: Convergent reinforcement learning with nonlinear function approximation. In \textit{International Conference on Machine Learning} (pp. 1125-1134).

\item[]
  Du, S. S., Chen, J., Li, L., Xiao, L., and Zhou, D. (2017) Stochastic variance reduction methods for policy evaluation. In \textit{International Conference on Machine Learning.}

\item[]
  Duchi, J., Hazan, E., Singer, Y. (2011). Adaptive subgradient methods for online learning and stochastic optimization. \textit{Journal of Machine Learning Research 12}:2121-2159.

\item[]
  Espeholt, L., Soyer, H., Munos, R., Simonyan, K., Mnih, V., Ward, T., Doron, Y., Firoiu, V., Harley, T., Dunning, I. and Legg, S. (2018) IMPALA: Scalable distributed Deep-RL with importance weighted actor-learner architectures. In \textit{International Conference on Machine Learning.}

\item[]
Feng, Y., Li, L., Liu, Q. (2019). A kernel loss for solving the bellman equation. In \textit{Advances in Neural Information Processing Systems} (pp. 15430-15441).

\item[]
  Ghiassian, S., Patterson, A., White, M., Sutton, R. S., White, A. (2018). Online off-policy prediction. ArXiv:1811.02597.

\item[]
  Glorot, X., Bengio, Y. (2010). Understanding the difficulty of training deep feedforward neural networks. In \textit{International Conference on Artificial Intelligence and Statistics} (pp. 249-256).

\item[]
  Hackman, L. (2012). \textit{Faster Gradient TD Algorithms}. M.Sc. thesis, University of Alberta, Edmonton.

\item[]
  Juditsky, A., Nemirovski, A. (2011). \textit{Optimization for Machine Learning}.

\item[]
Kingma, D. P., Ba, J. (2014). Adam: A method for stochastic optimization. arXiv preprint arXiv:1412.6980.

\item[]
  Liu B, Liu J, Ghavamzadeh M, Mahadevan S, Petrik M (2015). Finite-Sample Analysis of Proximal Gradient TD Algorithms. In \textit{International Conference on Uncertainty in Artificial Intelligence}, pp. 504-513.

\item[]
  Liu B, Liu J, Ghavamzadeh M, Mahadevan S, Petrik M (2016). Proximal Gradient Temporal Difference Learning Algorithms. In \textit{International Joint Conference on Artificial Intelligence}, pp. 4195-4199.

\item[]
  Mahadevan, S., Liu, B., Thomas, P., Dabney, W., Giguere, S., Jacek, N., Gemp, I., Liu, J. (2014). Proximal reinforcement learning: A new theory of sequential decision making in primal-dual spaces. ArXiv:1405.6757.

\item[]
  Mahmood, A. R., Yu, H., Sutton, R. S. (2017). Multi-step off-policy learning without importance sampling ratios. ArXiv:1702.03006.

\item[]
  Maei, H. R. (2011). \textit{Gradient temporal-difference learning algorithms}. Ph.D. thesis, University of Alberta, Edmonton.

\item[]
Mnih, V., Badia, A. P., Mirza, M., Graves, A., Lillicrap, T., Harley, T., Silver, D., Kavukcuoglu, K. (2016). Asynchronous methods for deep reinforcement learning. In \textit{International conference on machine learning} (pp. 1928-1937).

\item[]
  Moore, A. W. (1990). \textit{Efficient memory-based learning for robot control}. Ph.D. theis, University of Cambridge.

\item[]
  Munos, R., Stepleton, T., Harutyunyan, A., Bellemare, M. (2016). Safe and efficient off-policy reinforcement learning. In \textit{Advances in Neural Information Processing Systems 29}, pp. 1046–1054.

\item[]
Precup, D., Sutton, R. S., Dasgupta, S. (2001). Off-policy temporal-difference learning with function approximation. In \textit{Proceedings of the 18th International Conference on Machine Learning}, pp. 417–424.

\item[]
  Reddi, S. J., Kale, S., Kumar, S. (2019). On the convergence of adam and beyond. ArXiv:1904.09237.

\item[]
Schaul, T., Quan, J., Antonoglou, I., Silver, D. (2016). Prioritized experience replay. In \textit{International Conference on  Learning Representations.}

\item[]
Sutton, R. S. (1996). Generalization in reinforcement learning: Successful examples using sparse coarse coding. In \textit{Advances in Neural Information Processing Systems 8 (NIPS 1995),} pp. 1038–1044. MIT Press, Cambridge, MA.

\item[]
  Sutton, R. S., Barto, A. G. (2018). \textit{Reinforcement Learning: An Introduction,} Second Edition. MIT Press.

\item[]
  Sutton, R. S., Maei, H. R., Precup, D., Bhatnagar, S., Silver, D., Szepesv\'ari, Cs., Wiewiora, E. (2009). Fast gradient-descent methods for temporal-difference learning with linear function approximation. In \textit{International Conference on Machine Learning}, pp. 993–1000, ACM.

\item[]
  Sutton, R. S., Mahmood A. R., and White M. (2016) An emphatic approach to the problem of off-policy temporal-difference learning. \textit{The Journal of Machine Learning Research}.

\item[]
  Touati, A., Bacon, P. L., Precup, D., Vincent, P. (2018). Convergent tree-backup and retrace with function approximation. ArXiv:1705.09322.

\item[]
  van Hasselt, H. P., Guez, A., Hessel, M., Mnih, V., Silver, D. (2016). Learning values across many orders of magnitude. In \textit{Advances in Neural Information Processing Systems} (pp. 4287-4295).

\item[]
  van Hasselt, H., Doron, Y., Strub, F., Hessel, M., Sonnerat, N., Modayil, J. (2018). Deep Reinforcement Learning and the Deadly Triad. ArXiv:1812.02648

\item[]
  White, A., White, M. (2016). Investigating Practical Linear Temporal Difference Learning. In \textit{International Conference on Autonomous Agents {\&} Multiagent Systems}.

\item[]
  Young, K., Tian, T. (2019). MinAtar: An Atari-Inspired Testbed for Thorough and Reproducible Reinforcement Learning Experiments. ArXiv:1903.03176.

\end{list}

\newpage
\clearpage

\appendix
\section{Results in the Batch Setting}\label{batch_setting}

The proofs of convergence for many of the methods require independent samples for the updates.
This condition is not generally met in the fully online learning setting that we consider throughout the rest of the paper.
In Figure~\ref{fig:batch_update_sensitivity} we show results for all methods in the fully offline batch setting, demonstrating that---on the small problems that we consider---the conclusions do not change when transferring from the batch setting to the online setting.
We include two additional methods in the batch setting, the Kernel Residual Gradient methods (Feng, Li \& Liu, 2019), which do not have a clear fully online implementation.

We create a new batch dataset for each of 500 independent runs by getting 100k samples from the state distribution induced by the behavior policy, then sampling from the transition kernel for each of these states.
We then perform mini-batch updates by sampling 8 independent transitions from this dataset.
Each algorithm makes $n$ updates for $n \in [1, 2, 4, 8, \ldots, 8192]$, choosing the stepsize which minimizes the area under the RMSPBE learning for each $n$.
This effectively shows the best performance of each algorithm if it was given a budget of $n$ updates, allowing us to make comparisons across several different timescales.
The constant stepsizes swept are $\alpha \in \{2^{-8}, 2^{-7}, \ldots, 2^{0}\}$.

In Figure~\ref{fig:batch_update_sensitivity}, we demonstrate that GTD2 and the Kernel-RG methods generally perform poorly across these set of domains.
We additionally show that TDC, TD, and TDRC are often indistinguishable in the batch setting---except Boyan's Chain where TDC still performs inexplicably poorly---suggesting that perhaps TDRC's gain in performance of TDC is due to the correlated sampling induced by online learning.
We finally show that TDC++, which is TDC with regularized $\Cvec$, generally performs comparably to GTD2.

\subsection{Relationship to Residual Gradients}
The Residual Gradient (RG) family of algorithms provide an alternative gradient-based strategy for performing temporal difference learning.
The RG methods minimize the Mean Squared Bellman Error (MSBE), while the Gradient TD family of algorithms minimize a particular form of the MSBE, the Mean Squared \emph{Projected} Bellman Error (MSPBE).
The RG family of methods generally suffer from difficulty in obtaining independent samples from the environment, leading towards stochastic optimization algorithms which find a biased solution (Sutton \& Barto, 2018).
However, very recent work has generalized the MSBE and proposed an algorithmic strategy to perform unbiased stochastic updates (Feng, Li \& Liu, 2019; Dai et al., 2018).
We compare to the approach in Feng, Li, and Liu (2019) below.

\subsection{Derivation of the TDC++ Update Equations}
In this section, we derive the update equations for TDC++, i.e. TDC with the regularized $\Cvec_{\beta}$ matrix. Consider the MSPBE objective (see Eq. \ref{eq:MSPBEInExpectation}) but with a regularized $\Cvec_{\beta}$:
\begin{align*}
  \text{MSPBE}_{\text{++}}(\vecw_t) &\defeq \E{\delta_t \vecx_t}\tr \left(\E{\vecx_t \vecx_t\tr}^{-1} + \beta \Ivec \right)\E{\delta_t \vecx_t} \\
  &= (\sneg\Amat \vecw + \vecb)\tr \Cmat_{\beta}^\inv (\sneg\Amat \vecw + \vecb).
\end{align*}
The gradient of this objective is $-\frac{1}{2} \nabla_{\wvec} \text{MSPBE}_{\text{++}}(\vecw_t) = \Avec\tr \Cvec_{\beta}^{-1} (\bvec - \Avec \wvec_t) = \E{\delta_t \vecx_t}  - \gamma \E{\vecx' \vecx\tr}\vech_\beta - \beta \hvec_{\beta}$. Using this gradient and the same update for $\hvec_{t+1}$ as in TDRC, we obtain the update equations for TDC++ (with an additional $\eta$ in the stepsize for $\hvec$):
\begin{align*}
  \vech_{t+1} \leftarrow& ~\vech_t + \eta \alpha \bigl[\delta_t - (\vech_t^{\tr} \vecx_t) \bigr] \vecx_t - \eta \alpha \beta \vech_t \\
  \vecw_{t+1} \leftarrow& ~\vecw_t + \alpha  \delta_t \vecx_t - \alpha \gamma (\vech_t^{\tr}\vecx)\vecx_{t+1} - \alpha \beta \vech_t.
\end{align*}

\subsection{Convergence of TDC++}
It is straightforward to show that TDC++ converges to the TD fixed point under very similar conditions as TDC (Maei, 2011). We show the key steps here (for details see Maei (2011) or Appendix \ref{sec: proof_main_thm}). The $\Gvec$ matrix for TDC++ is $\Gvec = \begin{bmatrix} -\eta \Cvec_\beta & - \eta \Avec \\ \Avec^\top - \Cvec_\beta & -\Avec \end{bmatrix}$. If we can show that the real parts of all the eigenvalues of $\Gvec$ are negative, then the algorithm would converge. First note that for an eigenvalue $\lambda \in \mathbb{C}$ of $\Gvec$, $\text{det}(\Gvec - \lambda \Ivec) = \text{det}(\lambda(\Cvec_\beta + \lambda \Ivec) + \Avec (\eta \Avec^\top + \lambda \Ivec))= 0$. Then for some non--zero vector $\zvec \in \mathbb{C}$, $\zvec^* (\lambda(\Cvec_\beta + \lambda \Ivec) + \Avec (\eta \Avec^\top + \lambda \Ivec)) \zvec = 0$. Upon simplifying this, we obtain the following quadratic equation in $\lambda$:
\begin{equation*}
  \|\zvec\|^2 \lambda^2 + (\zvec^* ({\eta} \Cvec_\beta + \Avec) \zvec ) \lambda +  {\eta} \|\Avec \zvec\|^2  = 0.
\end{equation*}
If $\lambda_1$ and $\lambda_2$ are two solutions of this equation, then
\begin{align*}
  \lambda_1 \lambda_2 = \eta \frac{\|\Avec \zvec \|^2}{\|\zvec\|^2}, \quad \quad \lambda_1 + \lambda_2 = - \frac{(\zvec^* (\eta \Cvec_\beta + \Avec) \zvec)}{\|\zvec\|^2}.
\end{align*}
Since, $\lambda_1 \lambda_2 > 0$ and real, the real parts of both $\lambda_1$ and $\lambda_2$ have the same sign. Thus, $\text{Re}(\lambda_1 + \lambda_2) < 0$ would imply that each of $\text{Re}(\lambda_1) < 0$ and $\text{Re}(\lambda_2)< 0$ and we would be done. Assuming $\text{Re}(\lambda_1 + \lambda_2) = - \frac{(\zvec^* (\eta \Cvec_\beta + \Avec) \zvec)^* + (\zvec^* (\eta \Cvec_\beta + \Avec) \zvec)}{2 \|\zvec\|^2} = - \frac{\zvec^* (\eta \Cvec_\beta + \Hvec) \zvec}{\|\zvec\|^2} < 0$, where $\Hvec \defeq \frac{1}{2}(\Avec + \Avec^\top)$, leads to the condition
\begin{equation*}
  \eta > -\lambda_{\text{min}} (\Cvec_\beta^{-1} \Hvec),
\end{equation*}
for TDC++ to converge.

TDC++ differs from TDRC in that it has an extra term $(-\alpha \beta \hvec_t)$ in the update for the weight $\wvec_{t+1}$. Further, unlike TDRC, the convergence of TDC++ doesn't require any conditions on $\beta$.

\begin{figure*}
  \centering

  \begin{subfigure}[t]{.2\textwidth}
      \caption*{Tabular}
      \includegraphics[width=\textwidth]{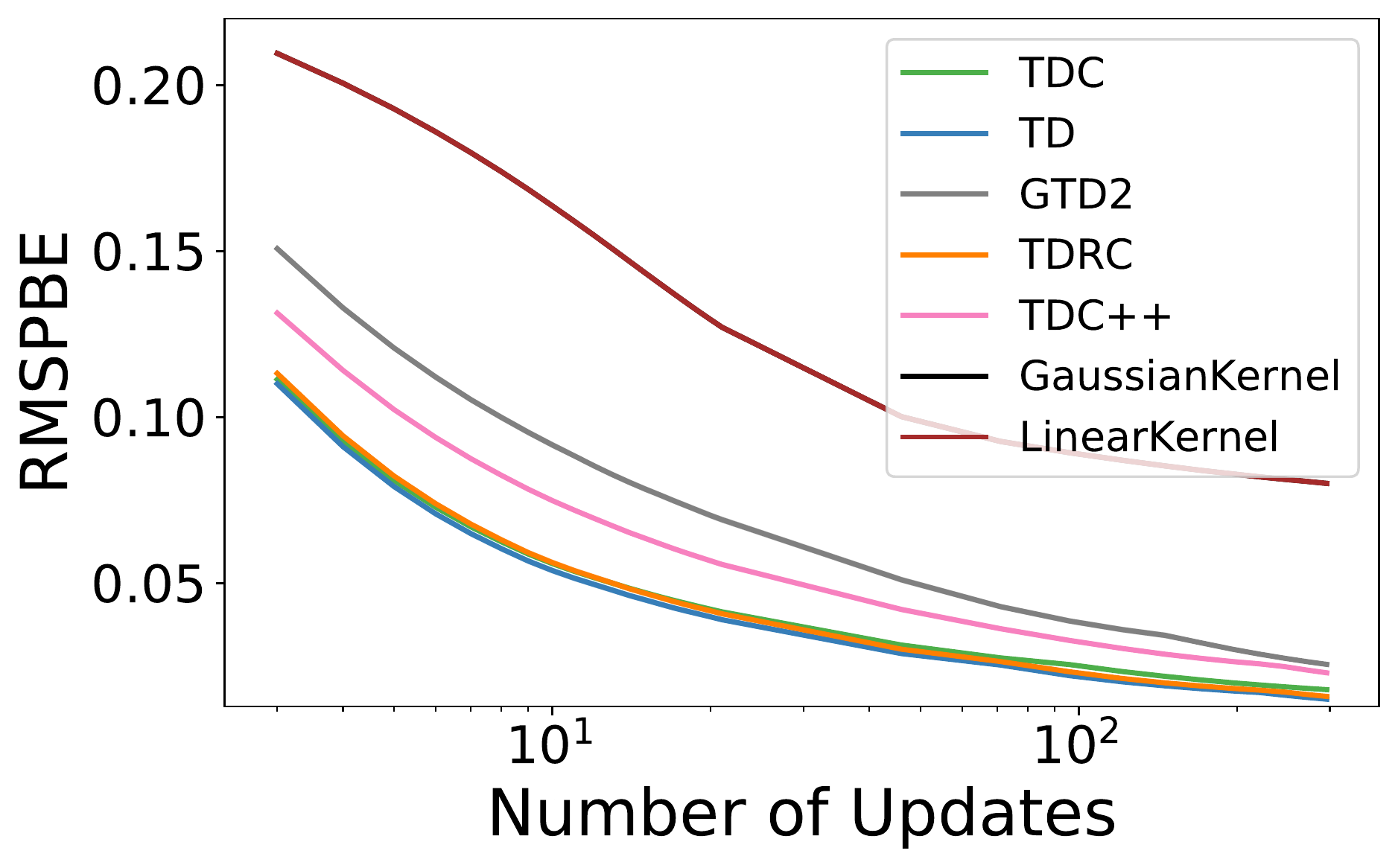}
  \end{subfigure}\hfill
  \begin{subfigure}[t]{.2\textwidth}
      \caption*{Inverted}
      \includegraphics[width=\textwidth]{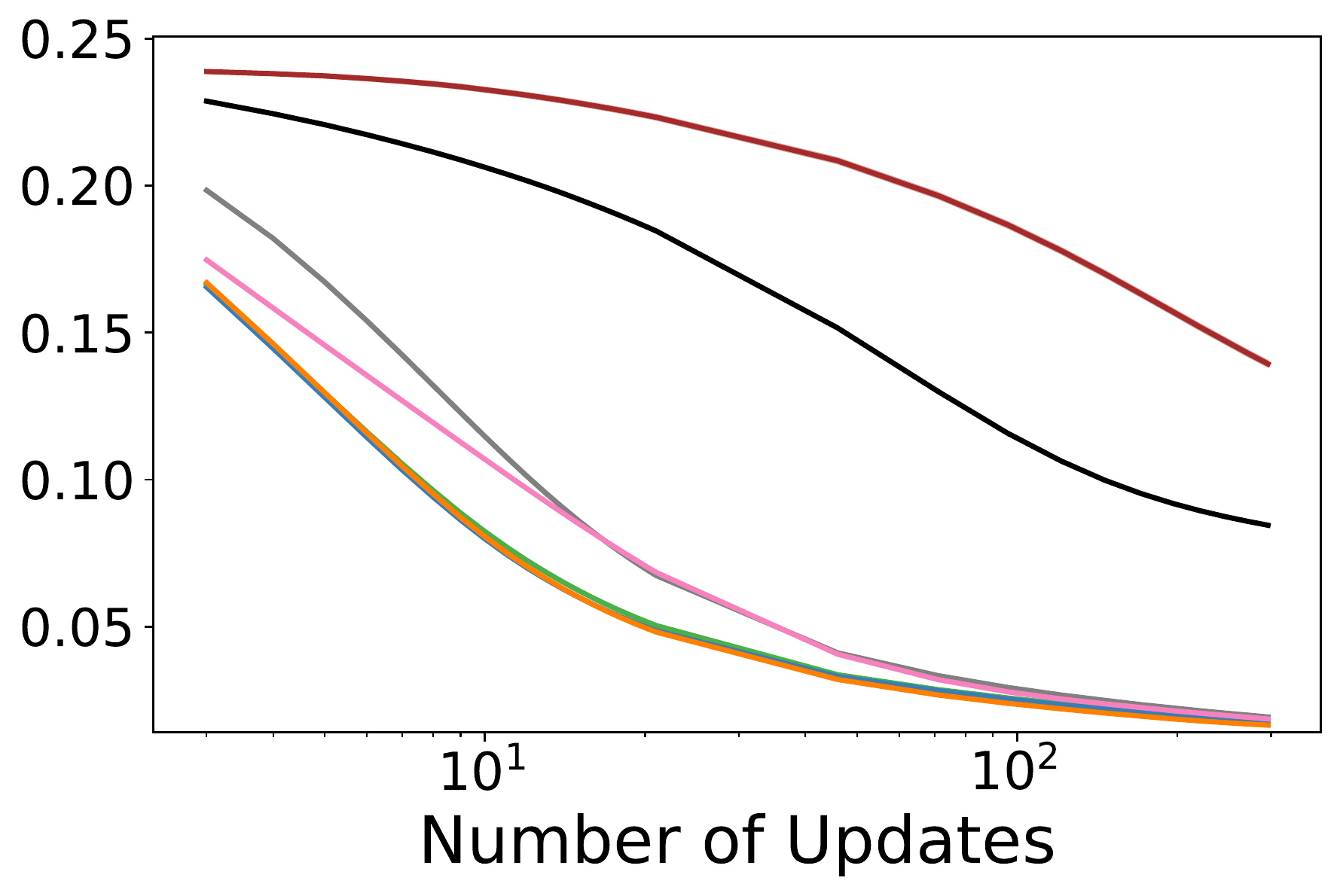}
  \end{subfigure}\hfill
  \begin{subfigure}[t]{.2\textwidth}
      \caption*{Dependent}
      \includegraphics[width=\textwidth]{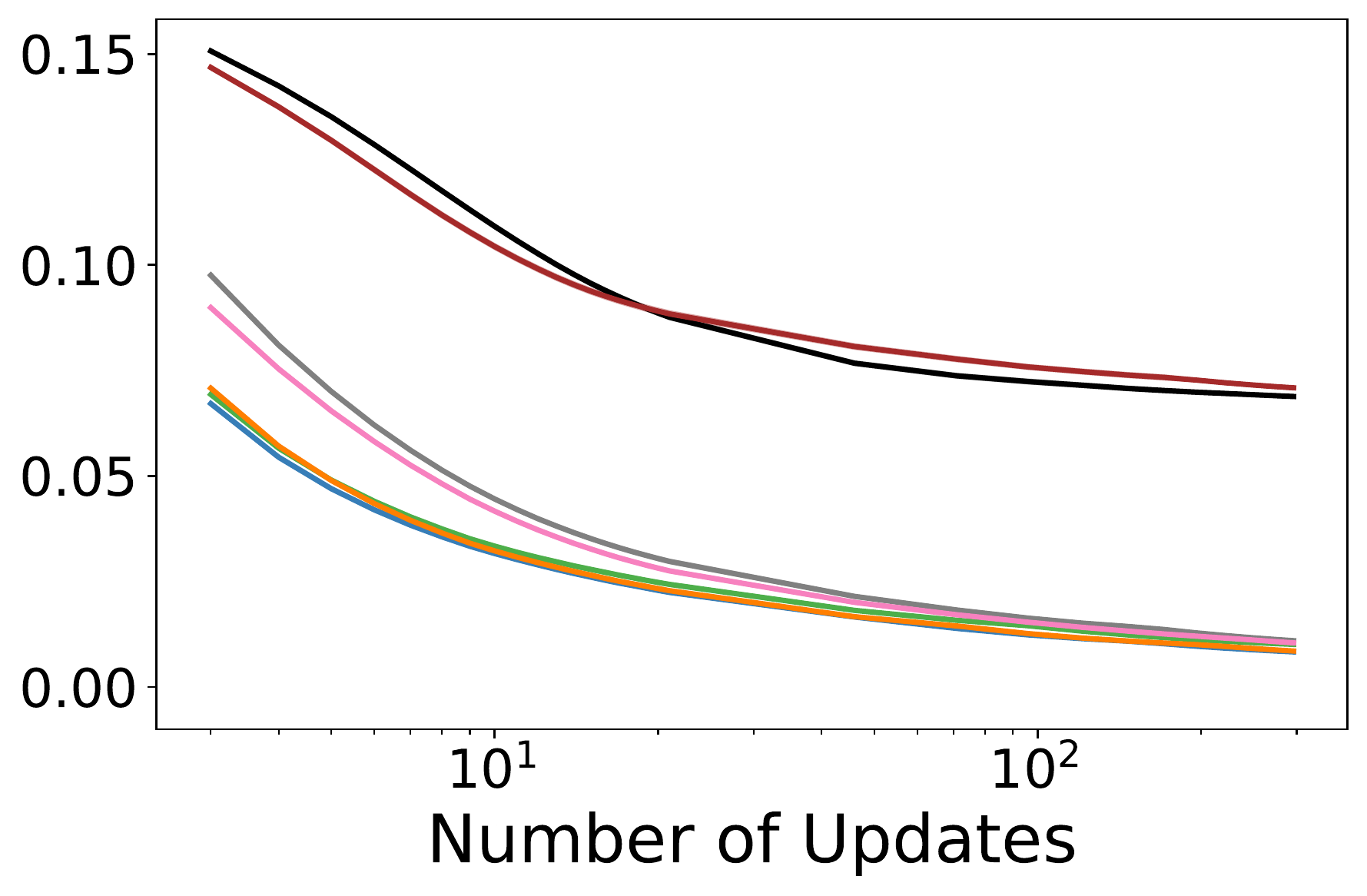}
  \end{subfigure}\hfill
  \begin{subfigure}[t]{.2\textwidth}
      \caption*{Boyan}
      \includegraphics[width=\textwidth]{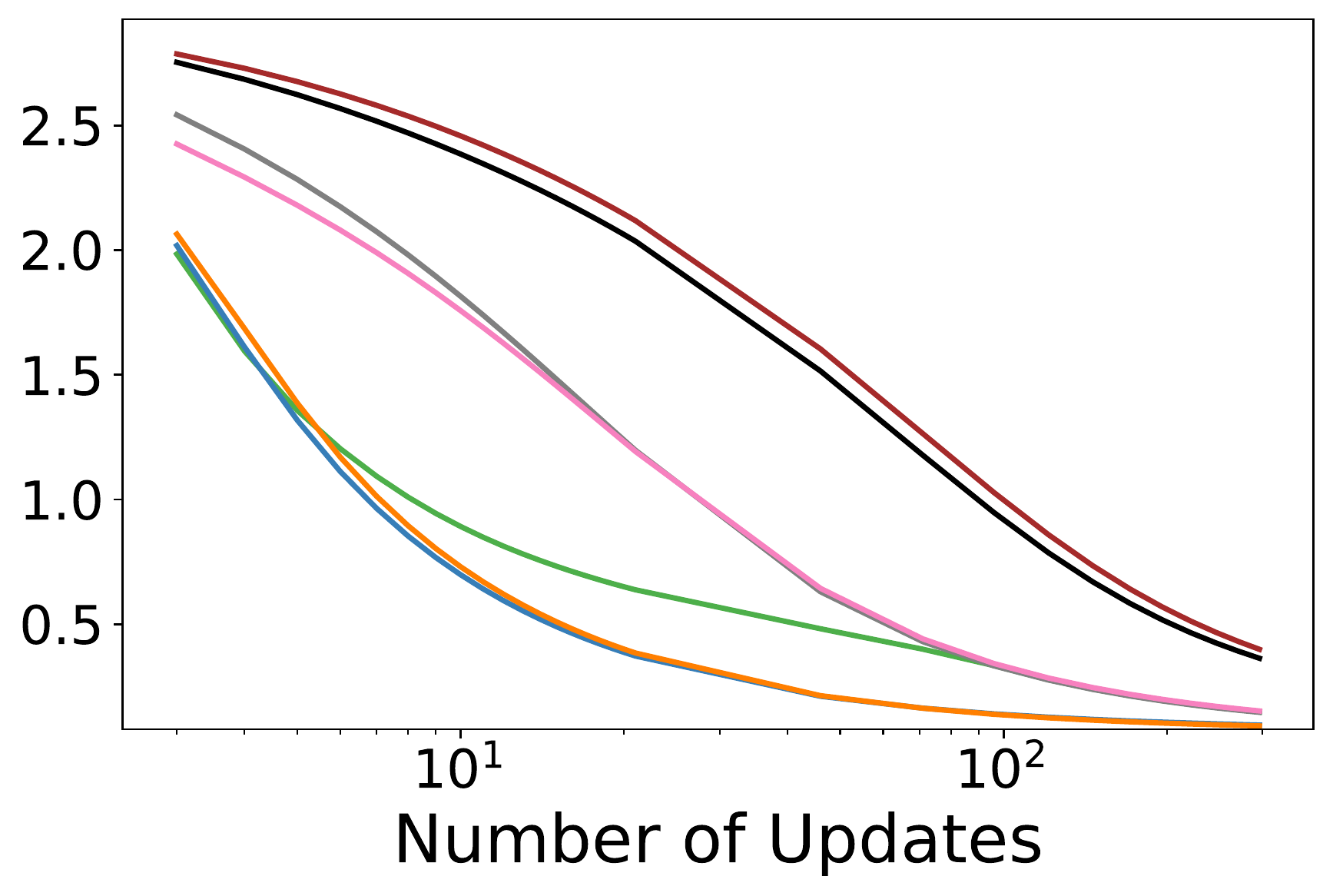}
  \end{subfigure}\hfill
  \begin{subfigure}[t]{.2\textwidth}
      \caption*{Baird}
      \includegraphics[width=\textwidth]{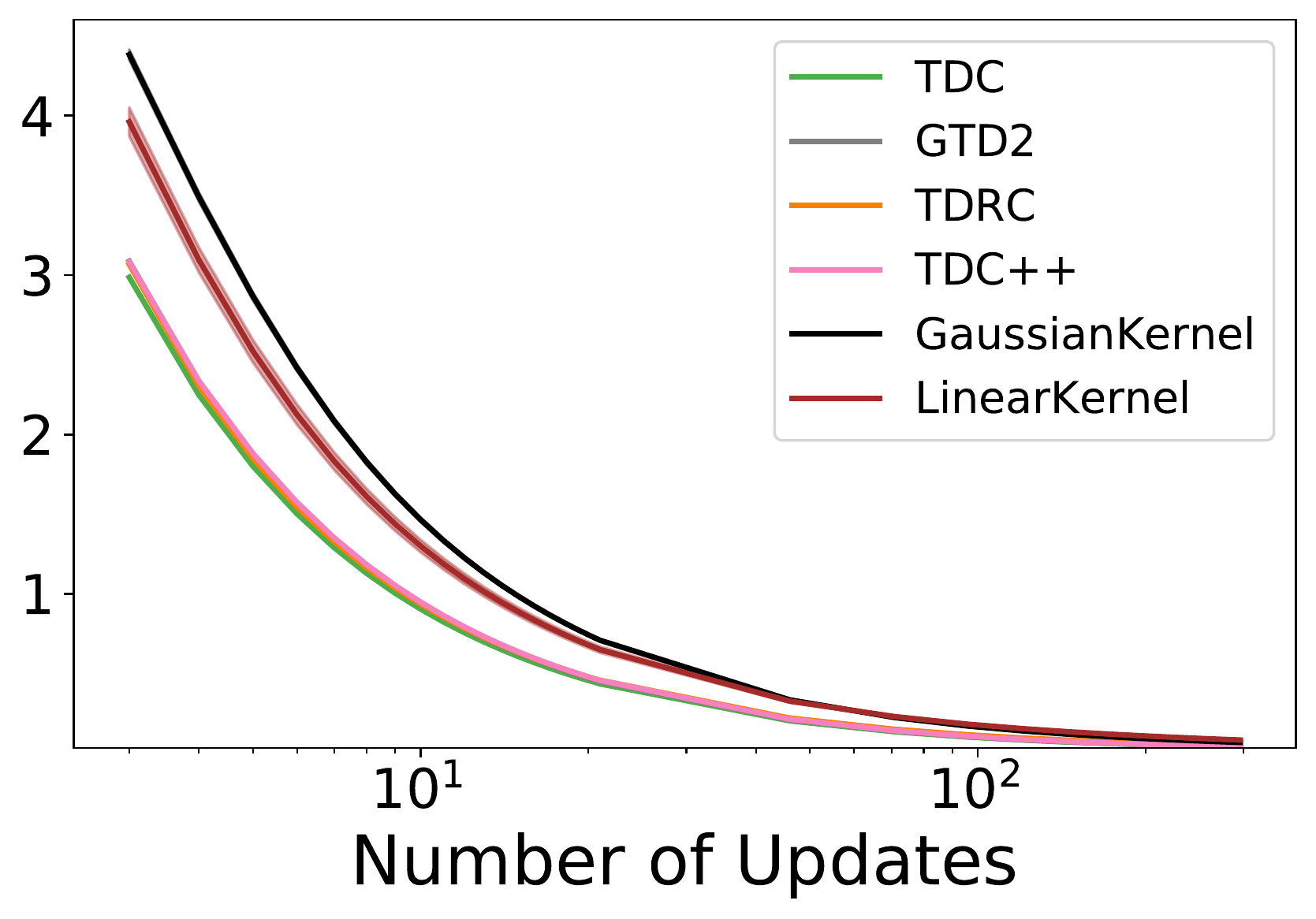}
  \end{subfigure}\hfill

  \caption{
    Sensitivity to the number of update steps for the offline batch setting.
    Each problem used a dataset of 100k samples sampled from the stationary distribution, then mini-batch updates used 8 independent samples from the dataset.
    On the x-axis we show a log-scale number of updates for each algorithm, on the y-axis we show the area under the RMSPBE learning curve averaged over 500 independent runs and 500 independently sampled datasets, with shaded regions showing the standard error over runs.
    For each number of update steps shown, we sweep over stepsizes and select the best stepsize for that number of updates; stepsizes were swept from $\alpha \in \{2^{-5}, 2^{-4}, \ldots, 2^0\}$. For TDRC, we set $\beta = 1$.
    This effectively shows the best performance of each algorithm if it was only given a fixed number of updates.
    GTD2 and the Kernel-RG methods show notably slower convergence than other methods.
  }
  \label{fig:batch_update_sensitivity}
\end{figure*}

\section{Incorporating Accelerations}\label{app_acc}
\begin{figure*}[h!]
  \centering
  \includegraphics[width=\textwidth]{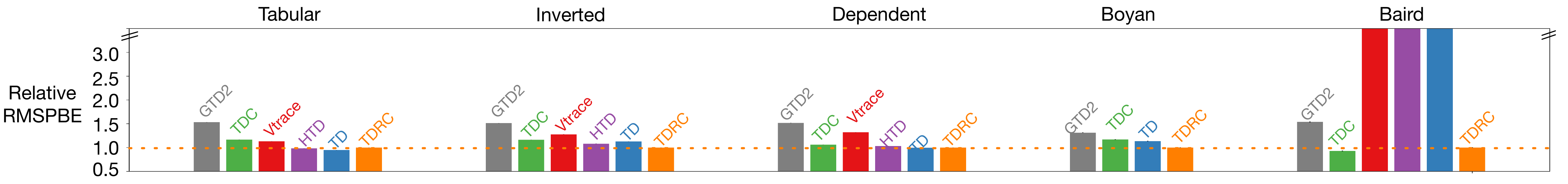}
  \caption{
    Relative performance of methods using the \textbf{Adam} stepsize selection algorithm, compared using the average area under the RMSPBE learning curve.
    Values swept are: $\alpha \in \{2^{-8}, 2^{-7}, \ldots , 2^{-1}, 2^{0} \}$ and as before, we set $\beta = 1$ for TDRC.
    On Baird's counterexample, TD, HTD, and VTrace all exhibit slow learning as well. The actual number for area under the learning curve are shown in Table~\ref{tab:adam_stepsize}.
  }
  \label{fig:amsgrad_results}
\end{figure*}

True stochastic gradient methods provide the benefit that they should be amenable to accelerations for stochastic approximation, such as momentum, mirror-prox updates (Juditsky \& Nemirovski, 2011), and variance reduction techniques (Du et al., 2017). This is in fact one of the arguments motivating GTD2, and its formulation as a saddlepoint method.

We begin investigating how acceleration in the online prediction setting impacts the overall performance and relative ordering of the algorithms. Momentum is commonly used in online deep RL systems, and is a form of acceleration. We compare all the methods using Adam (Kingma \& Ba, 2014; Reddi, Kale \& Kumar, 2019), which includes momentum. Several recently proposed optimizers include momentum and are best viewed as extensions of Adam. Here we use Adam as there is little evidence in the literature that these new variants are better than Adam for online updates. We sweep over values of the meta-parameters in Adam, $\beta_1, \beta_2 \in \{0.9, 0.99, 0.999 \}$, and select the values that best minimize the total RMSPBE separately for each algorithm.

The bar plot in Figure~\ref{fig:amsgrad_results} parallels Figure \ref{fig:alpha_sensitivity_plus_bar_plot}, which uses Adagrad, with similar conclusions. The only notable difference is that TDC's performance on Boyan's chain is much better, though it is still not as good as TD and TDRC. Overall, the use of momentum did not accelerate convergence, with performance similar to Adagrad. The comparison is not perfect, as Adagrad allows the stepsizes to decrease to zero, which enables the algorithms to converge nicely on these domains. Adam does not due to the exponential average in the squared gradient term. These results, then, mainly provide a sanity check that results under an alternative optimizer are consistent with the previous results.

The majority of accelerations that can be used in policy evaluation are designed for off-line batch updates. Although we are more concerned with online performance, we use the batch setting in Appendix~\ref{batch_setting} as a sanity check to ensure that none of the recently proposed accelerated policy evaluation methods significantly outperform TD, TDC, or TDRC.
In addition we include Kernel Residual Gradient (Kernel-RG) (Feng, Li \& Liu, 2019).
Figure~\ref{fig:batch_update_sensitivity} shows the performance of several methods given a fixed budget number of updates.
Surprisingly, the Kernel-RG methods show much slower convergence across all problems tested.

\section{Sensitivity to the Scale of $\vech$} \label{sec:reward_scale_sensitivity}
\begin{figure*}
  \centering

  \begin{subfigure}[t]{.32\textwidth}
      \caption*{Tabular}
      \includegraphics[width=\textwidth]{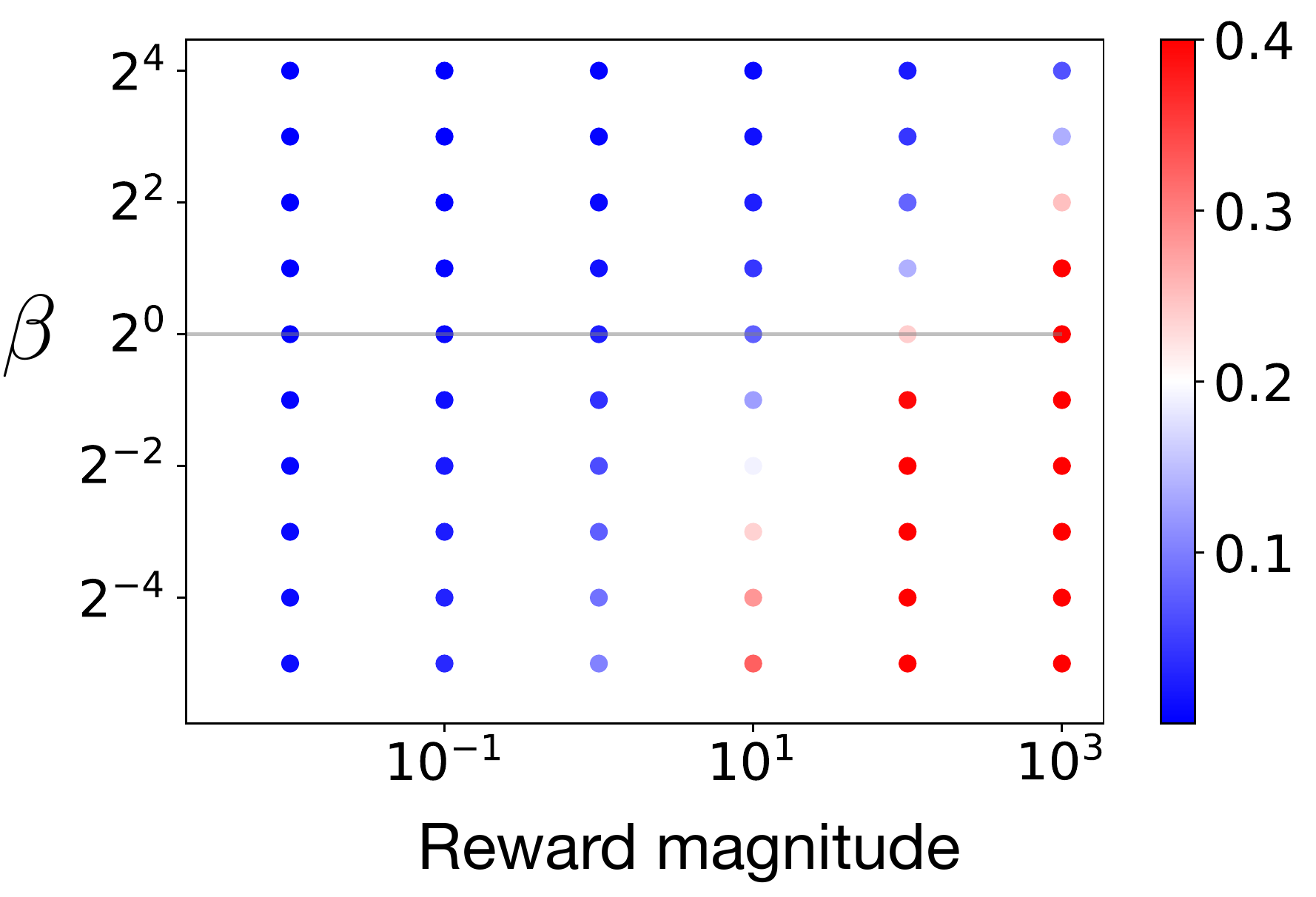}
  \end{subfigure}\hfill
  \begin{subfigure}[t]{.32\textwidth}
      \caption*{Inverted}
      \includegraphics[width=\textwidth]{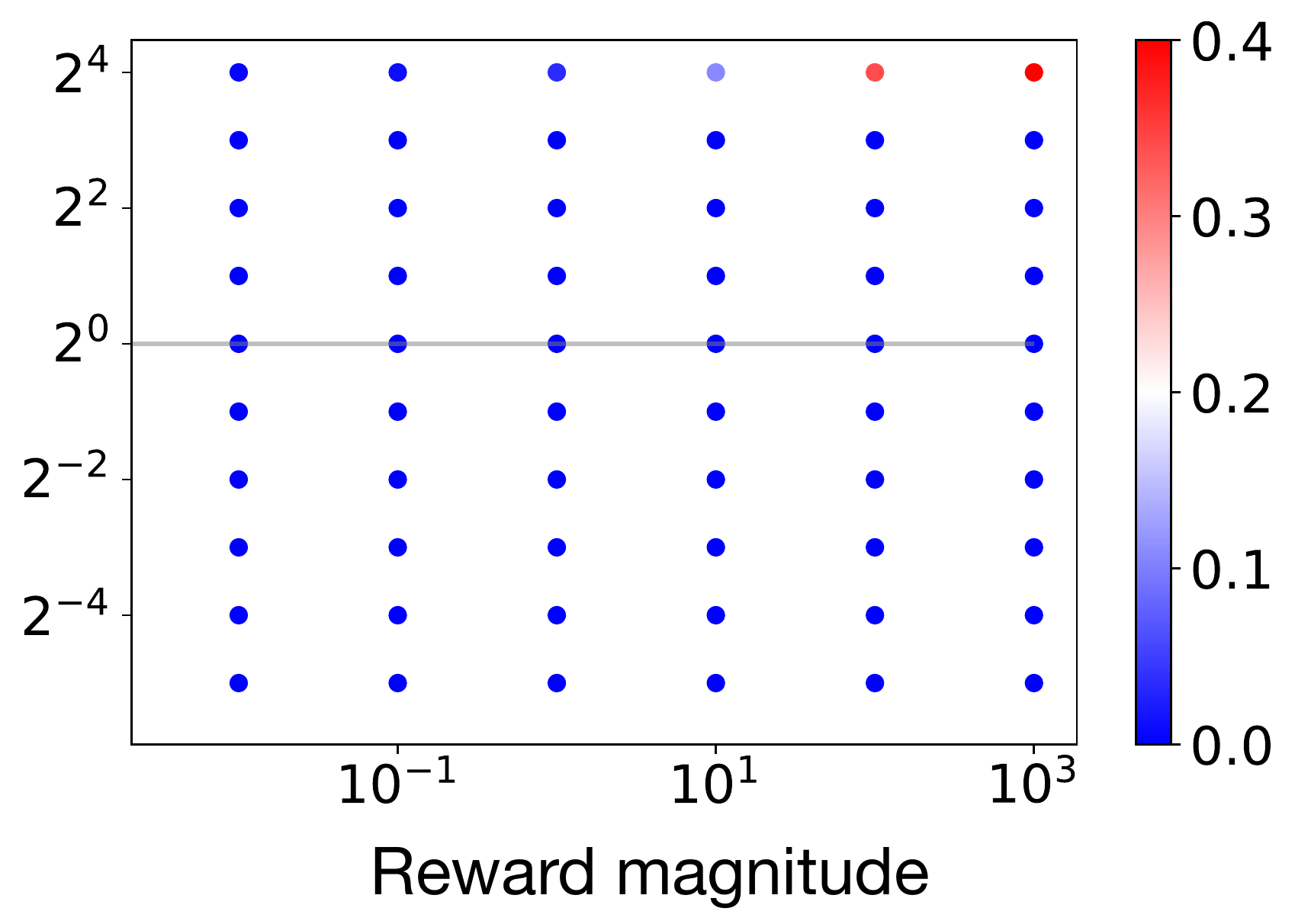}
  \end{subfigure}\hfill
  \begin{subfigure}[t]{.32\textwidth}
      \caption*{Dependent}
      \includegraphics[width=\textwidth]{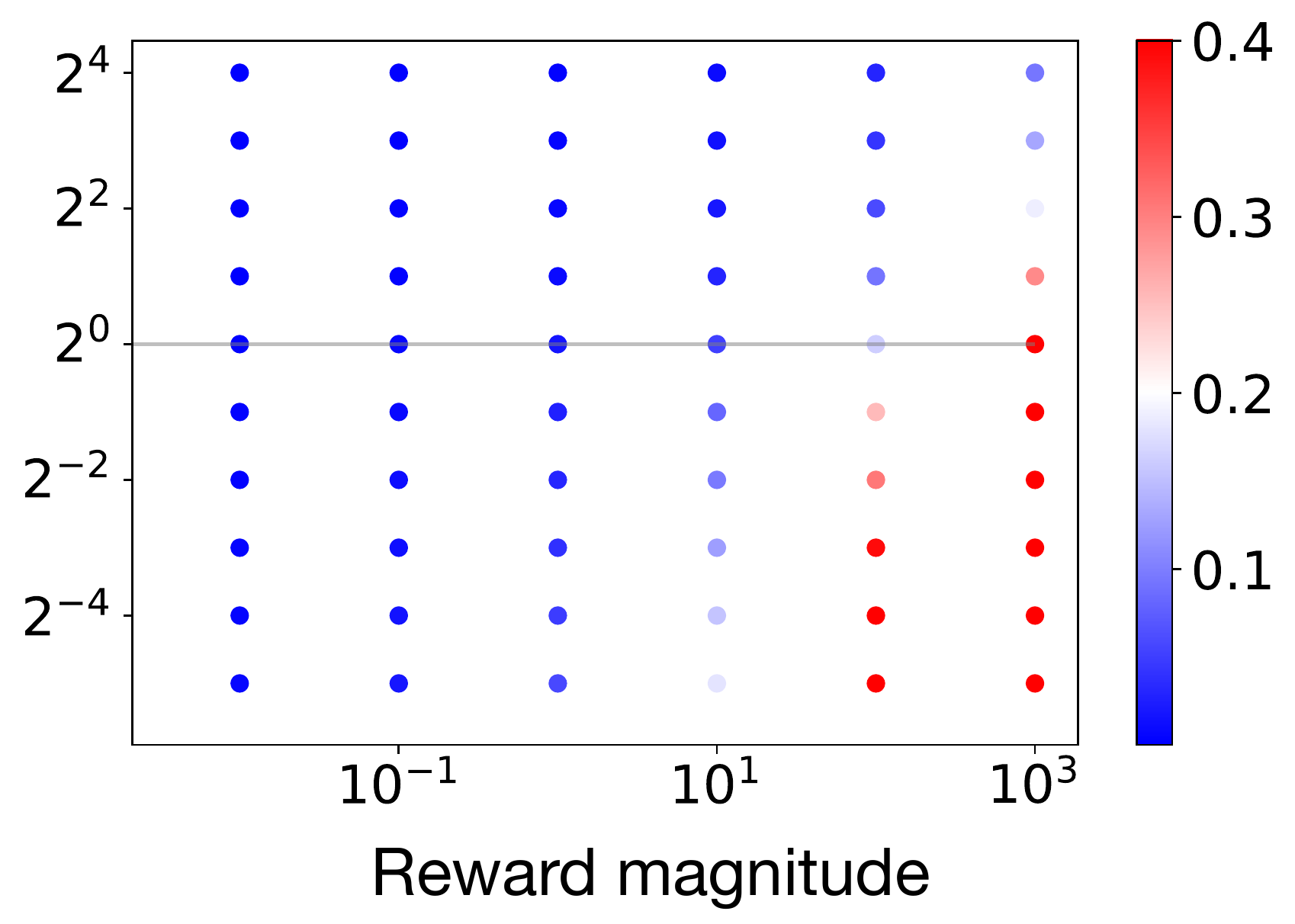}
  \end{subfigure}\hfill
  \caption{
    Relationship between TDRC and TD performance across different reward scales for different values of beta.
    On the x-axis we show the scale of the rewards for the terminal states of the random walk, on the y-axis we show a range of values of $\beta$.
    Each dot represents the number of standard deviations away from TD that TDRC's performance is across 500 independent runs for that particular value of $\beta$.
    For each dot, TDRC and TD choose the stepsize with lowest area under the RMSPBE learning curve; with stepsizes swept from $\alpha \in \{2^{-5}, 2^{-4}, \ldots, 2^{0}\}$.
    As the scale of the rewards increases (left to right on the x-axis), the variance of the secondary weights, $\vech$, also increases; effectively requiring a larger value of $\beta$.
    This figure demonstrates that TDRC with $\beta=1$ remains relatively insensitive to the scale of the rewards except in extreme cases when the variance of the rewards from transition to transition is quite large.
  }
  \label{fig:reward_scale}
\end{figure*}

In Figure~\ref{fig:beta_sensitivity_main} we demonstrate TDRC's sensitivity to the regularization weight, $\beta$, which is responsible for balancing between the loss due to the regularizer and the mean-squared error for $\vech$.
We motivate empirically that, on a set of small domains, the scale of the regularizer does not significantly affect the performance of TDRC.
However, as the scale of $\vech$ varies we likewise expect the scale of $\beta$ to vary accordingly.

We design a set of small experiments to understand how changes in the environment cause the scale of $\vech$ to change, and how that relates to the performance of TDRC across several values of $\beta$.
The scale of $\vech$ changes whenever the size of the TD error or scale of the features change.
For these experiments, we chose to increase the range of the TD error by making the initial value function $V = \zerovec$ and manipulating the magnitude of the rewards.
We run this experiment on the five state random-walk domain with each of the feature representations used in Section~\ref{sct:ExperimentalResults}, and change only the rewards in the terminal states by a multiplicative constant.
We compute the mean and standard deviation of TD's performance across 500 independent runs and compute the number of standard deviations TDRC's mean performance is from TD's mean performance.
We let the reward vary by order of magnitudes, with the multiplicative constant taking values $\{10^{-2}, 10^{-1}, \ldots, 10^{3}\}$.
For each scaling, we test multiple values of $\beta \in \{2^{-5}, 2^{-4}, \ldots, 2^4\}$ and for each of these instances we select the best constant stepsize from $\{2^{-5}, 2^{-4}, \ldots, 2^{-1}\}$.

In Figure~\ref{fig:reward_scale}, we show the range of $\beta$ for which TDRC's performance is as good, or nearly as good, as TD's performance as the magnitude of the rewards increases.
As hypothesized, the range of acceptable $\beta$ decreases as the reward magnitude increases; however, the range of $\beta$ only appreciably shrinks for a pathologically large deviation between rewards and initial value function.
This demonstrates that, while $\beta$ is problem dependent, its range of acceptable values is robust to all but the most pathological of examples across several different representations.

\section{Investigating QC on Mountain Car} \label{sec: qc_mc_non_linear_investigation}
In this section we include a deeper preliminary investigation into the performance of QC on the Mountain Car environment with non-linear function approximation.
As we observed in Figure~\ref{fig:NonlinearMCAndCP}, QC performed considerably worse than either Q-learning and QRC.
We hypothesize that this poor performance is the result of high variance updates to the value function estimate due to a poor estimate of $\CE{\delta_t}{S = s_t}$.
We relax the restrictions on the secondary stepsize, $\eta\alpha$, by using $\eta = \frac{1}{2}$, allowing QC to become more like Q-learning and reducing the variance of the update to the secondary weights.
We conclude by investigating the effects of prioritization of the replay buffer by drawing samples according to the squared TD error.

\begin{figure}[!t]
  \centering

  \includegraphics[width=0.48\textwidth]{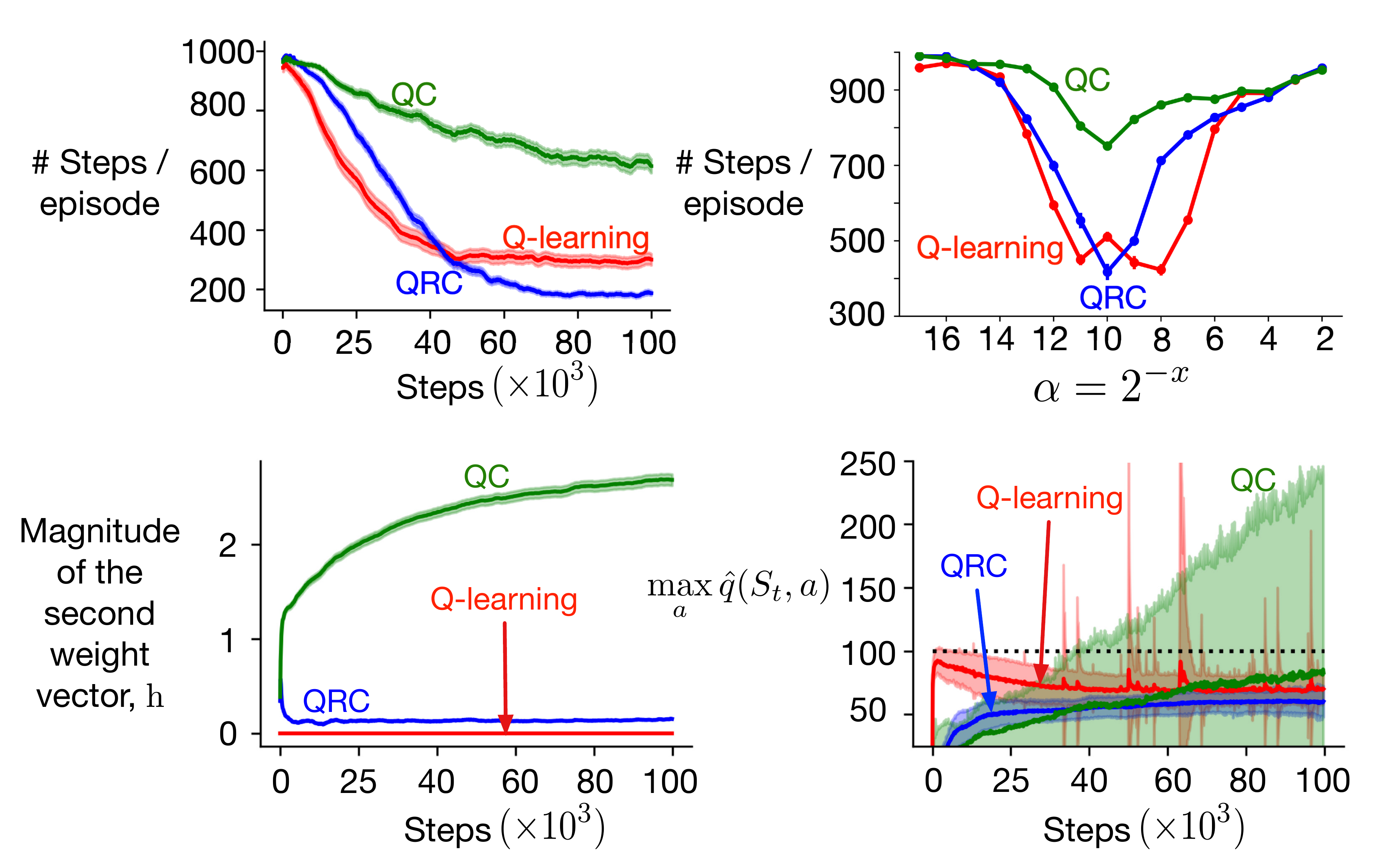}
  \caption{
    Control methods on Mountain Car with neural network function approximation.
    Each method takes one update step for every environment step and uses $\eta = 1$.
    \textbf{Top Left:} Average number of steps to goal.
    \textbf{Top Right:} Sensitivity to stepsize showing area under the learning curve for each value of $\alpha$.
    \textbf{Bottom Left:} Magnitude of the secondary weights for each algorithm. Q-learning is included as a flat line at zero, as Q-learning is effectively a special case of QRC where the secondary weights are always $\vec0$.
    \textbf{Bottom Right:} Mean and standard deviation of the maximum action-value for each step of learning. QC exhibited massive growth in action-values throughout learning and Q-learning exhibited periodic spikes of instability.
  }
  \label{fig:1replayMC}
\end{figure}

\begin{figure}[!t]
  \centering

  \includegraphics[width=0.48\textwidth]{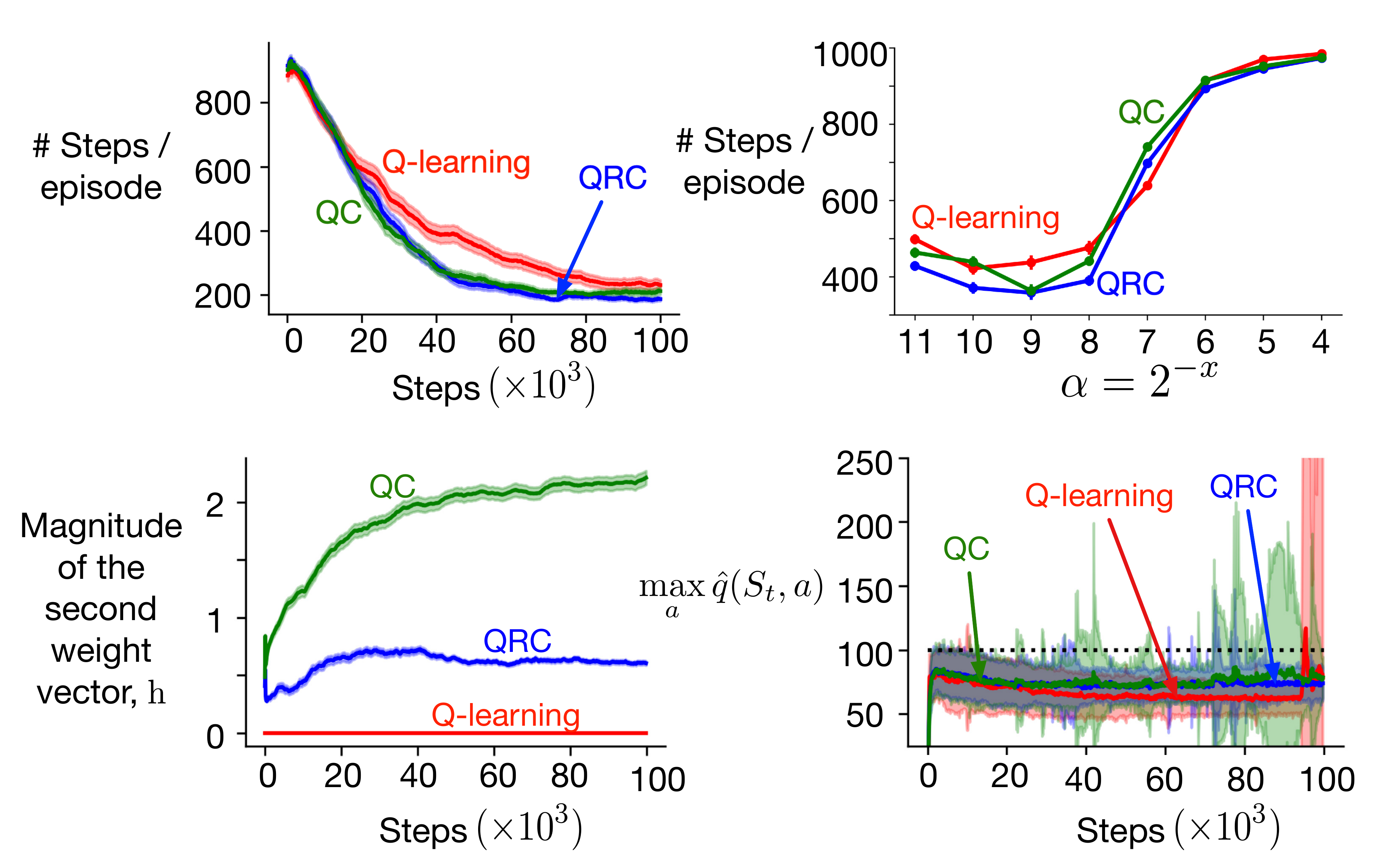}
  \caption{
    Same as Figure~\ref{fig:1replayMC} except $\eta=0.5$.
    Learning performance of QC is now competitive with Q-learning and QRC, though QC and Q-learning both exhibited more instability than QRC.
  }
  \label{fig:1replayMChalf}
\end{figure}

\begin{figure}
  \centering

  \includegraphics[width=0.48\textwidth]{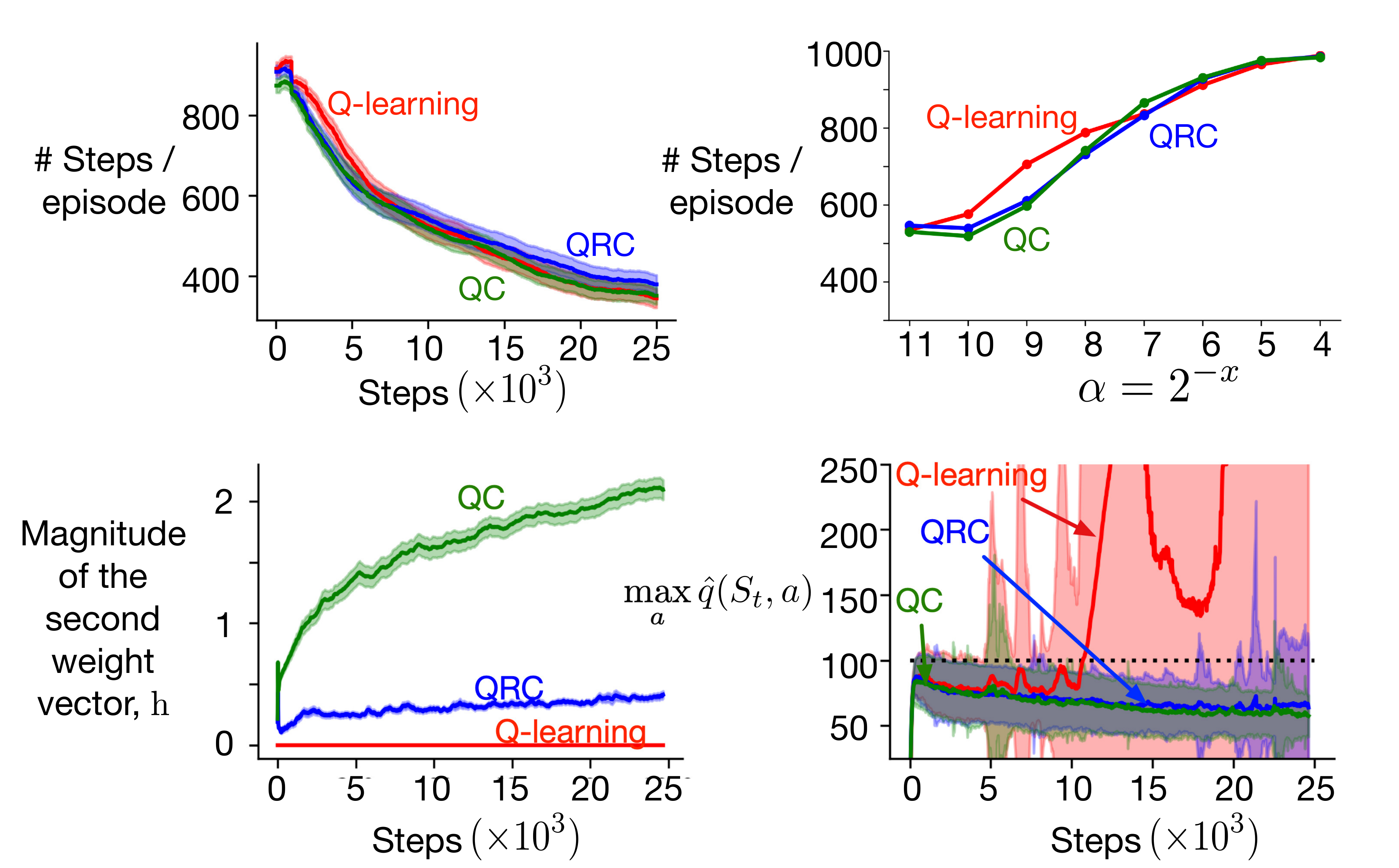}
  \caption{
    Same as Figure~\ref{fig:1replayMC} except $\eta=0.5$ and each method takes \emph{ten} update steps for every environment step using prioritized experience replay.
  }
  \label{fig:10replayMC}
\end{figure}

We start by investigating the performance of each algorithm when only a single step of replay is used on each environmental step.
The learning curve in Figure~\ref{fig:1replayMC} reaffirms that QRC and Q-learning significantly outperform QC in this setting.
Interestingly, the norm of QC's secondary set of weights grows nearly monotonically throughout learning while in contrast, QRC's secondary weights start large at the beginning of learning and quickly shrink as the value function estimates become more accurate.
The bottom right curve shows the mean and standard deviation of the maximum absolute value of $\hat{q}(S_t, \cdot)$ for each step of learning.
The variance of QC's maximum state-action value increased significantly over the maximum observable return in the Mountain Car domain---which is represented by a dashed line at 100.
These plots in combination suggest that QRC's additional constraint on the magnitude of the secondary weights helps stabilize the learning system when using neural network function approximators.

One plausible explanation for QC's poor performance is that the TD error is high variance in the Mountain Car environment, increasing the variance of the stochastic updates to the secondary weights.
We test this hypothesis by decreasing the stepsize for the secondary weights.
If the variance of the updates is large, then a smaller stepsize can help stabilize learning.
We choose $\eta = \frac{1}{2}$ and otherwise keep all other empirical settings the same.

Figure~\ref{fig:1replayMChalf} shows that QRC and QC now perform very similarly and only slightly outperform Q-learning.
As discussed in Section~\ref{sec_overall}, decreasing the secondary stepsize makes both TDC and TDRC behave more similarly to TD, so this result is not surprising.
Interestingly, Figure~\ref{fig:1replayMChalf} shows that still the magnitude of the secondary weights quickly grows for QC; however, unlike the previous experiment, the secondary weights for QRC do not quickly decay either.

Given that each of the algorithms seem to perform similarly when $\eta = \frac{1}{2}$, we revisit the highly off-policy experiment shown in Figure~\ref{fig:NonlinearMCAndCP} when $\eta = \frac{1}{2}$.
To further exaggerate the off-policy sampling, we additionally prioritize the experience replay buffer by drawing samples according to their squared TD error.
Figure~\ref{fig:10replayMC} shows that, while the learning curve performance between algorithms appears to be the same, Q-learning exhibits significant instability in its value function approximation.

\begin{figure*}
  \centering
  \includegraphics[width=\textwidth]{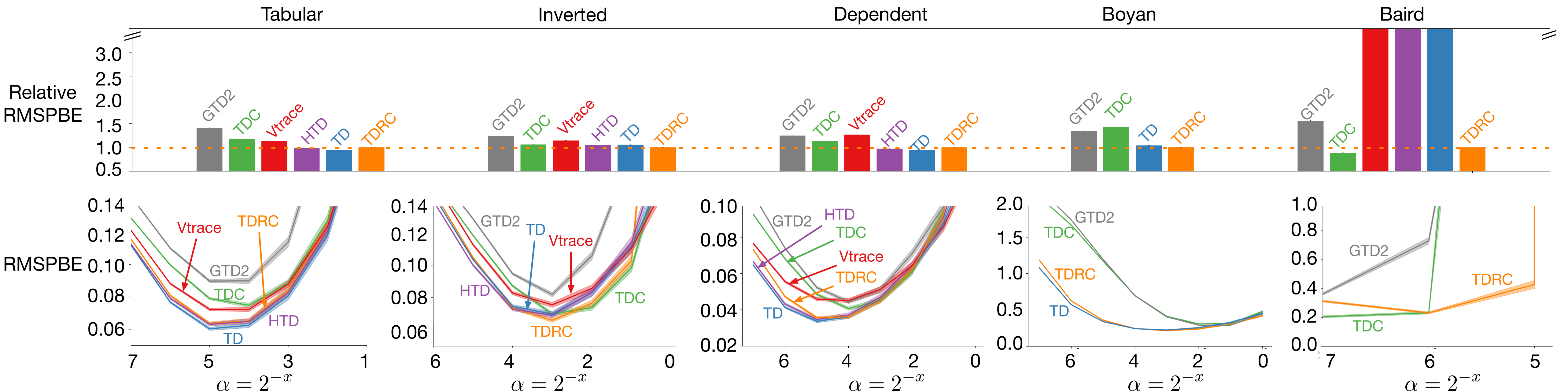}
  \caption{
    \textbf{Top:} The normalized average area under the RMSPBE learning curve for each method on each problem using a \textbf{constant} stepsize.
    Each bar is normalized by TDRC's performance so that each problem can be shown in the same range.
    All results are averaged over 200 independent runs with standard error bars shown at the top of each rectangle, though most are vanishingly small.
    \textbf{Bottom:} stepsize sensitivity measured using average area under the RMSPBE learning curve for each method on each problem.
    HTD and VTrace are not shown in Boyan's Chain because they reduce to TD for on-policy problems. The values corresponding to the bar graphs are given in Table \ref{tab:constant_stepsize}.
  }
  \label{fig:all_bar-plot_constant}
\end{figure*}

\begin{figure*}
  \centering
  \includegraphics[width=\textwidth]{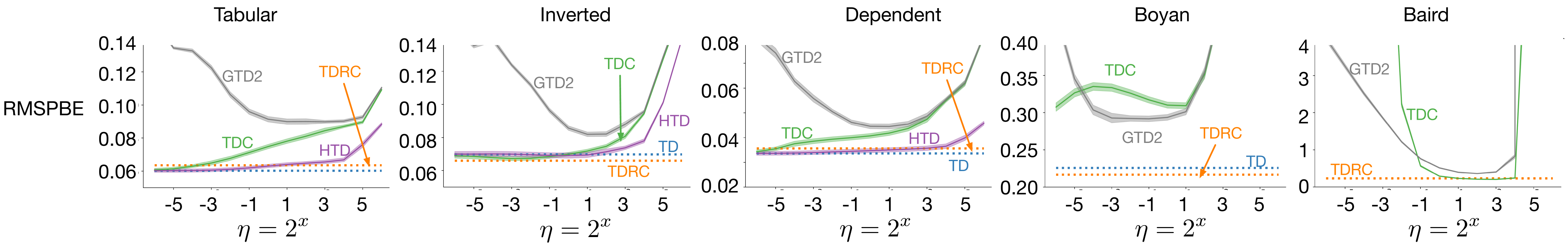}
  \caption{
    Sensitivity to the second stepsize, for changing parameter $\eta$. All methods use a constant stepsize $\alpha$. All methods are free to choose any value of $\alpha$ for each specific value of $\eta$. Methods that do not have a second stepsize are shown as flat line.
  }
  \label{fig:eta_sensitivity_constant}
\end{figure*}

These preliminary experiments suggest that, like TDC, QC's performance is highly driven by the magnitude of its secondary stepsize.
When the secondary stepsize is well-tuned QC shows similar stability to QRC; while QRC remains stable across all experimental settings.
Q-learning, like TD, is sensitive to the degree of off-policy data, becoming increasingly unstable as more off-policy updates are made.
In each of the experimental settings included in this section, Q-learning exhibited occasional spikes of instability; further motivating the desire to extend sound Gradient TD methods for non-linear control.

\section{Additional Linear Prediction Results} \label{sct:AdditionalResults}

In this section we include additional results supporting the experiments run in the main body of the text.
The primary conclusions drawn from these results were redundant with experiments in the text, but are included here for completeness.

We include results analogous to those in Section~\ref{sct:ExperimentalResults}, except using a constant stepsize on all problems.
While constant stepsizes are not commonly used in practice, they are useful for drawing clear conclusions without stepsize selection algorithm playing a confounding role.
We show in Figure~\ref{fig:all_bar-plot_constant}, that the relative performance between methods does not change when using a constant stepsize.
We do notice that TDC performs more similarly to HTD, TD, and TDRC in the constant stepsize case, which suggests that TDC benefits less from using Adagrad than these other methods.

Figure~\ref{fig:eta_sensitivity_constant} shows that algorithms are generally more similar in terms of stepsize sensitivity.
This suggests that differences in between the algorithms are less pronounced when using constant stepsizes, which provides more support for the argument that empirical comparisons should simultaneously consider modern stepsize selection algorithms.

For completeness, we include the values visualized in Figure~\ref{fig:alpha_sensitivity_plus_bar_plot} as a table of values in Table~\ref{tab:adagrad_stepsize}.
The standard error is reported for each entry in the table.
The bold entries highlight the algorithm with the lowest RMSPBE for the given problem.
The same is included for Figure~\ref{fig:amsgrad_results} in Table~\ref{tab:adam_stepsize} and for Figure~\ref{fig:all_bar-plot_constant} in Table~\ref{tab:constant_stepsize}.

\begin{table*}
  \begin{adjustwidth}{-1in}{-1in}
    \begin{center}
      \begin{tabular}{c||ccccc}
        &Tabular & Inverted & Dependent & Boyan & Baird\\
        \hline \hline
        GTD2 & 0.079 $\pm$ 0.001 & 0.063 $\pm$ 0.001 & 0.041 $\pm$ 0.001 & 0.269 $\pm$ 0.003 & 0.357 $\pm$ 0.009\\
        \hline
        TDC & 0.063 $\pm$ 0.001 & 0.053 $\pm$ 0.001 & 0.034 $\pm$ 0.001 & 0.639 $\pm$ 0.001 & \textbf{0.196 $\pm$ 0.007}\\
        \hline
        HTD & 0.048 $\pm$ 0.001 & 0.048 $\pm$ 0.001 & 0.025 $\pm$ 0.001 & -- & 2.123 $\pm$ 0.013\\
        \hline
        TD & \textbf{0.046 $\pm$ 0.001} & 0.051 $\pm$ 0.001 & \textbf{0.024 $\pm$ 0.001} & 0.248 $\pm$ 0.003 & 4.101 $\pm$ 0.095\\
        \hline
        VTrace & 0.060 $\pm$ 0.001 & 0.059 $\pm$ 0.001 & 0.038 $\pm$ 0.001 & -- & 4.101 $\pm$ 0.095\\
        \hline
        TDRC & 0.049 $\pm$ 0.001 & \textbf{0.047 $\pm$ 0.001} & 0.026 $\pm$ 0.001 & \textbf{0.222 $\pm$ 0.002} & 0.242 $\pm$ 0.006\\
      \end{tabular}
    \end{center}
  \end{adjustwidth}
  \caption{
    \label{tab:adagrad_stepsize}
    Average area under the RMSPBE learning curve for each problem using the \textbf{Adagrad} algorithm.
    Bolded values highlight the lowest RMSPBE obtained for a given problem.
    These values correspond to the bar graphs in Figure \ref{fig:alpha_sensitivity_plus_bar_plot}.
  }
\end{table*}

\begin{table*}
  \begin{adjustwidth}{-1in}{-1in}
    \begin{center}
      \begin{tabular}{c||ccccc}
        &Tabular & Inverted & Dependent & Boyan & Baird\\
        \hline \hline
        GTD2 & 0.094 $\pm$ 0.001 & 0.074 $\pm$ 0.001 & 0.048 $\pm$ 0.001 & 0.274 $\pm$ 0.006 & 0.356 $\pm$ 0.009\\
        \hline
        TDC & 0.071 $\pm$ 0.002 & 0.057 $\pm$ 0.001 & 0.033 $\pm$ 0.001 & 0.244 $\pm$ 0.005 & \textbf{0.215 $\pm$ 0.007}\\
        \hline
        HTD & 0.060 $\pm$ 0.002 & 0.053 $\pm$ 0.001 & 0.032 $\pm$ 0.001 & -- & 3.623 $\pm$ 0.027\\
        \hline
        TD & \textbf{0.058 $\pm$ 0.002} & 0.055 $\pm$ 0.001 & \textbf{0.031 $\pm$ 0.001} & 0.237 $\pm$ 0.006 & 3.993 $\pm$ 0.053\\
        \hline
        VTrace & 0.069 $\pm$ 0.001 & 0.063 $\pm$ 0.001 & 0.042 $\pm$ 0.001 & -- & 3.993 $\pm$ 0.053\\
        \hline
        TDRC & 0.061 $\pm$ 0.001 & \textbf{0.049 $\pm$ 0.001} & 0.031 $\pm$ 0.001 & \textbf{0.209 $\pm$ 0.004} & 0.232 $\pm$ 0.007\\
      \end{tabular}
    \end{center}
  \end{adjustwidth}
  \caption{
    \label{tab:adam_stepsize}
    Average area under the RMSPBE learning curve for each problem using the \textbf{Adam} stepsize selection algorithm.
    Bolded values highlight the lowest RMSPBE obtained for a given problem.
    These values correspond to the bar graphs in Figure \ref{fig:amsgrad_results}.
  }
\end{table*}

\begin{table*}
  \begin{adjustwidth}{-1in}{-1in}
    \begin{center}
      \begin{tabular}{c||ccccc}
        &Tabular & Inverted & Dependent & Boyan & Baird\\
        \hline \hline
        GTD2 & 0.090 $\pm$ 0.001 & 0.082 $\pm$ 0.001 & 0.044 $\pm$ 0.001 & 0.292 $\pm$ 0.004 & 0.361 $\pm$ 0.009\\
        \hline
        TDC & 0.075 $\pm$ 0.001 & 0.070 $\pm$ 0.001 & 0.041 $\pm$ 0.001 & 0.309 $\pm$ 0.004 & \textbf{0.205 $\pm$ 0.007}\\
        \hline
        HTD & 0.063 $\pm$ 0.001 & 0.069 $\pm$ 0.002 & 0.035 $\pm$ 0.001 & -- & 1184.368 $\pm$ 69.421\\
        \hline
        TD & \textbf{0.060 $\pm$ 0.001} & 0.070 $\pm$ 0.002 & \textbf{0.034 $\pm$ 0.001} & 0.226 $\pm$ 0.005 & 11401.550 $\pm$ 270.628\\
        \hline
        VTrace & 0.072 $\pm$ 0.001 & 0.076 $\pm$ 0.002 & 0.045 $\pm$ 0.001 & -- & 18.239 $\pm$ 0.046\\
        \hline
        TDRC & 0.064 $\pm$ 0.001 & \textbf{0.066 $\pm$ 0.001} & 0.036 $\pm$ 0.001 & \textbf{0.217 $\pm$ 0.004} & 0.232 $\pm$ 0.006\\
      \end{tabular}
    \end{center}
  \end{adjustwidth}
  \caption{
    \label{tab:constant_stepsize}
    Average area under the RMSPBE learning curve for each problem using the a \textbf{constant} stepsize.
    Bolded values highlight the lowest RMSPBE obtained for a given problem. These values correspond to the bar graphs in Figure \ref{fig:all_bar-plot_constant}.
  }
\end{table*}

\section{Investigating Target Networks}\label{app:TargetNets}
One motivation for designing more stable off-policy algorithms is to improve learning interactions with neural network function approximators.
A currently pervasive technique for improving stability of off-policy learning with neural networks is to use target networks.
In this section, we investigate the impact of using target networks for each of the non-linear control algorithms investigated in this work.

In Figures~\ref{fig:TargetNet4}, \ref{fig:TargetNet64}, and \ref{fig:TargetNet256} we investigate the impact of synchronizing the target network to the value function approximation after every 4, 64, and 256 updates respectively.
All the experimental settings remain the same, other than the rate of target network synchronization. 
The conclusions drawn in the main body of the paper continue to hold when using target networks; QC learns very slowly which is exaggerated by increasing delay in updates to the bootstrapped target, QRC is stable and insensitive to choice of stepsize, and Q-learning performs well but is negatively impacted by the introduction of target networks on these domains.

\begin{figure}[!t]
  \centering

  \includegraphics[width=0.48\textwidth]{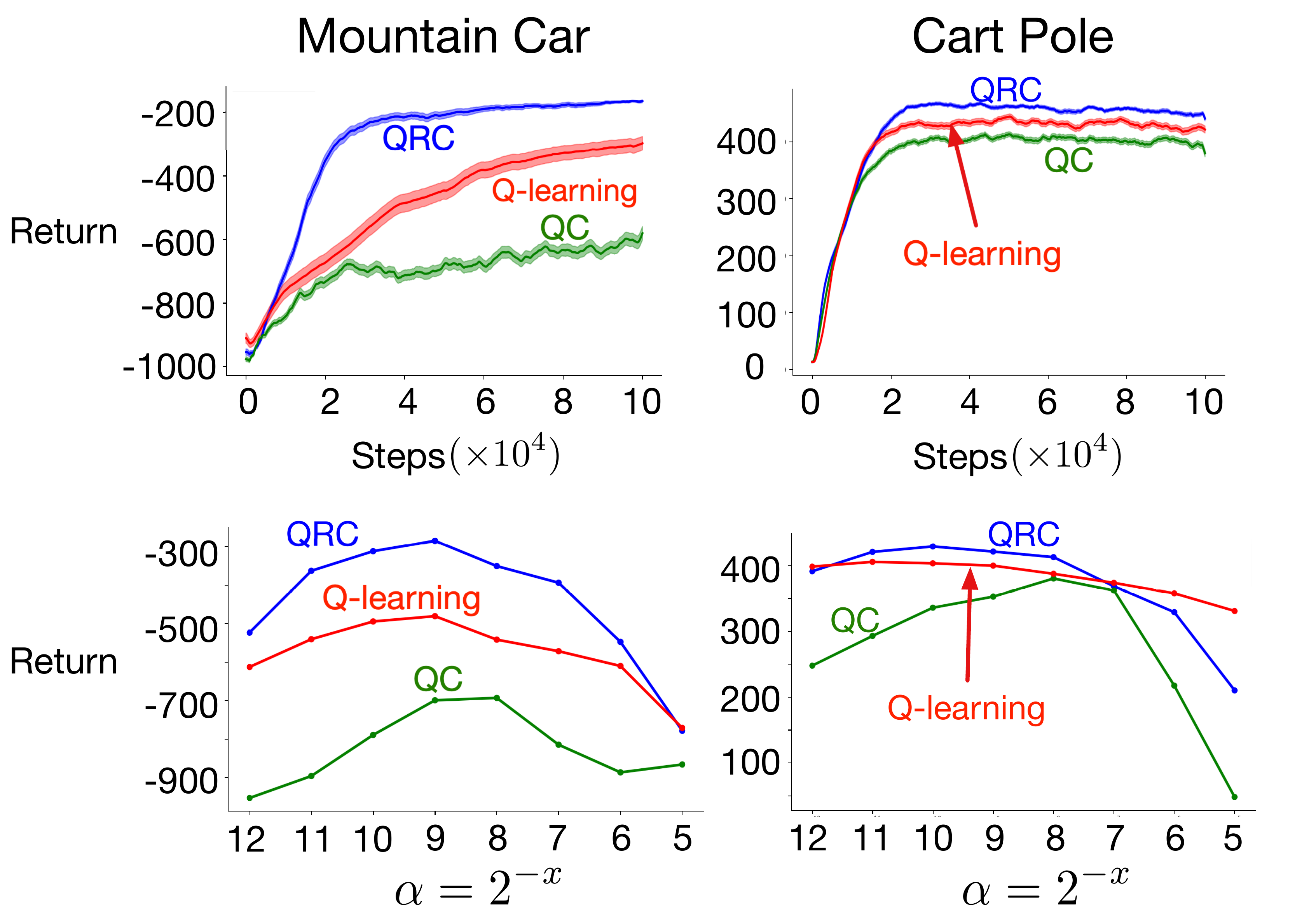}
  \caption{
    Non-linear control methods with target networks.
    Target network is synchronized with the value function after every 4 updates.
  }
  \label{fig:TargetNet4}
\end{figure}
\begin{figure}[!t]
  \centering

  \includegraphics[width=0.48\textwidth]{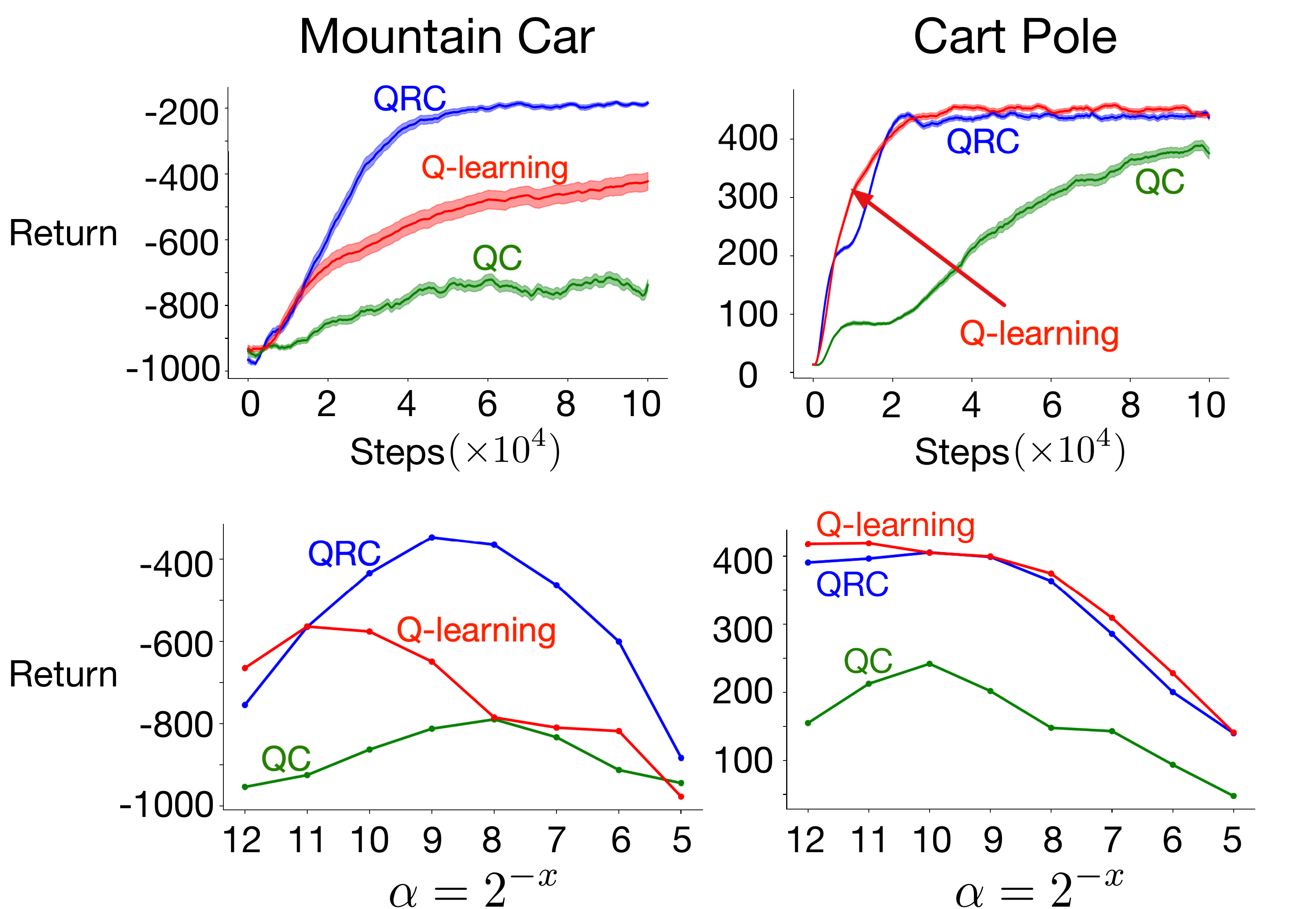}
  \caption{
    Same as Figure~\ref{fig:TargetNet4}, except target network is synchronized after every 64 updates.
  }
  \label{fig:TargetNet64}
\end{figure}
\begin{figure}[!t]
  \centering

  \includegraphics[width=0.48\textwidth]{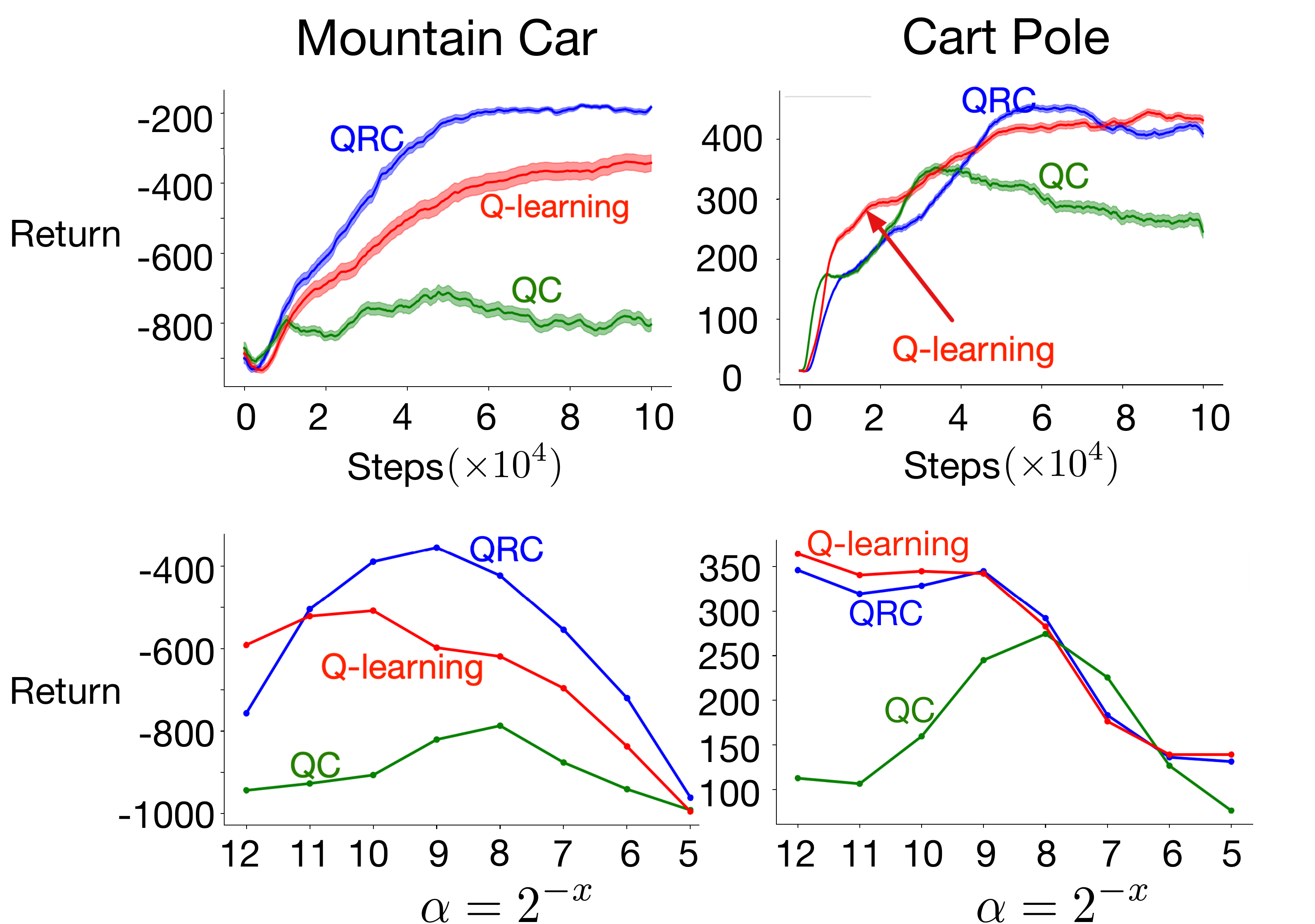}
  \caption{
    Same as Figure~\ref{fig:TargetNet4}, except target network is synchronized after every 256 updates.
  }
  \label{fig:TargetNet256}
\end{figure}

\section{Parameter Settings and Other Experiment Details}
\label{Parameter-settings}

\begin{figure}[ht]
  \centering
  \includegraphics[width=\columnwidth]{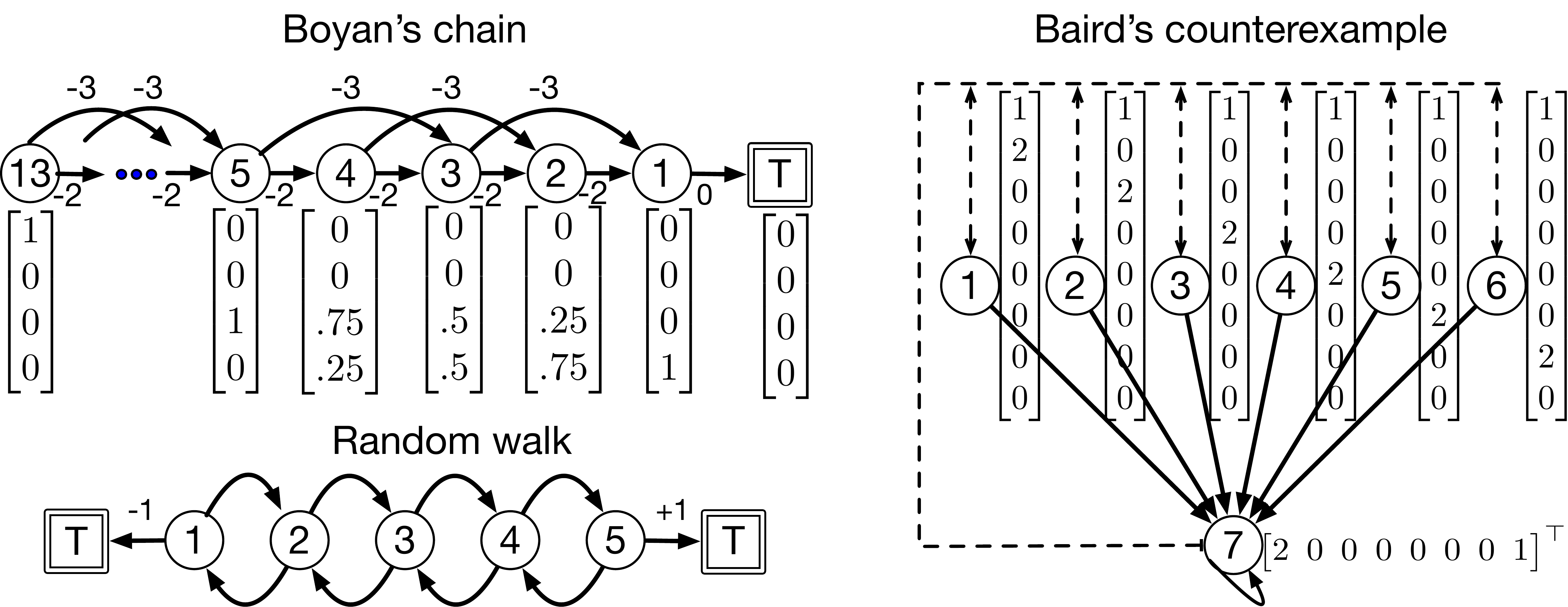}
  \caption{
    Above we provide a graphic depiction of each of the three MDPs and the corresponding feature representations used in our experiments. We omit the three feature representations used in the Random Walk due to space restrictions (see Sutton et al., 2009). All unlabeled transitions emit a reward of zero.
  }
  \label{fig:envs}
\end{figure}

\subsection{Actor-Critic Algorithm with TDRC}\label{app_ac_tdrc}
We assume that the agent's policy $\pi_{\vectheta}(A|S)$ is parameterized by weight vector $\vectheta$.
To incorporate TDRC into the one-step actor-critic algorithm (Sutton \& Barto, 2018), we simply change the update rule for the value function approximation step for the TDRC update.
This yields the following update equations for Actor-Critic with TDRC:
\begin{align*}
  \delta_t &= R_{t+1} + \gamma \vecw_t\tr \vecx_{t+1} - \vecw_t\tr \vecx_t \\
  \vecw_{t+1} &\leftarrow \vecw_t + \alpha \delta_t \vecx_t - \gamma (\vech_t\tr \vecx_t) \vecx_{t+1} \\
  \vech_{t+1} &\leftarrow \vech_t + \eta \alpha \left(\delta_t - \vech_t\tr \vecx_t \right) \vecx_t - \eta\alpha\beta \vech_t\\
  \vectheta_{t+1} &\leftarrow \vectheta_t + \alpha \gamma^{t+1} \delta_t \nabla_{\vectheta_t} \ln \pi_{\vectheta}(A_t \;|\; S_t),
\end{align*}
where the original actor-critic algorithm can be recovered with $\vech_0 = \zerovec$ and $\eta = 0$ and a TDC-based actor-critic algorithm can be obtained with $\beta = 0$.
In practice,  the $\gamma^{t+1}$ term in the update for $\vectheta$ is often dropped so, as such, in our actor-critic experiment we likewise did not include this term in our implementation. For ADAM optimizer we used $\beta_1=0.9$ and $\beta_2=0.999$. We swept over $\alpha \in \{2^{-8}, 2^{-7}, \ldots, 2^{-2}, 2^{-1} \}$ and had $\eta=1$ for TDC. We used tile coding with 5 tilings and $4\times4$ tiles.

\subsection{Prediction Experimental Details}
For the results shown in the main body of the paper on the random walk, Boyan's Chain, and Baird's Counterexample we swept over free meta-parameters for every method comparing the meta-parameters which performed best according to the area under the RMSPBE learning curve.
The stepsizes swept for all algorithms were $\alpha \in \{2^{-7}, 2^{-6}, \ldots, 2^{0}\}$.
For TDC and HTD, we swept values of the second stepsize by sweeping over a multiplicative constant times the primary stepsize, $\eta \in \{2^0, 2^1, \ldots, 2^6\}$ maintaining the convergence guarantees of the two-timescale proof of convergence for TDC.
For GTD2, we swept values of $\eta \in \{2^{-6}, 2^{-5}, \ldots, 2^{5}, 2^{6}\}$ as the saddlepoint formulation of GTD2 allows for a much broader range of $\eta$ while still maintaining convergence.

\subsection{Cart Pole and Mountain Car Experimental Details}\label{app:CPAndMCExpDetails}
To solve these task we used a fully connected neural network with two hidden layers where each layer had 64 nodes in Cart Pole (32 nodes in Mountain Car) with ReLU as the non--linearity and the output layer as linear. The weights were updated using a replay buffer of size 4,096 in Cart Pole (size 4000 in Mountain Car) and mini-batch size of 32 using ADAM optimizer with $\epsilon=10^{-8}$, $\beta_1=0.9$, and $\beta_2=0.999$. We also used ADAM optimizer for updating the $\vech$ vector using $\epsilon=10^{-8}$, $\beta_1=0.99$, and $\beta_2=0.999$. The neural network weights were initialized using Xavier initialization (Glorot \& Bengio, 2010) and the biases were initialized with a normal distribution with mean $0$ and standard deviation $0.1$. The second weight vectors were initialized to $\vec0$. Actions were selected using an $\epsilon$-greedy policy where $\epsilon=0.1$. We tested several values of the stepsize: $\{2^{-13}, ..., 2^{-2}\}$ for Cart Pole and $\{2^{-17}, ..., 2^{-2}\}$ for Mountain Car. The final results show the performance averaged over 200 independent runs. In these task we set $\eta=1$ for QC and QRC methods and set the regularization parameter $\beta=1$ for QRC.

\subsection{MinAtar Experimental Details}\label{app:MinAtarExpDetails}
We ran the MinAtar experiments for 5 million steps. Discount factor parameter, $\gamma$ was set to 0.99. The rewards were scaled by ($R \times (1-\gamma)$) so that the neural network does not have to estimate large returns. The Q-Learning and QRC network architectures were the same as that used by (Young \& Tian, 2019). The network had one convolutional layer and one fully connected layer after that. The convolutional layer used sixteen $3 \times 3$ convolutions with stride 1. The fully connected layer had 128 units. Both convolutional and fully connected layers used ReLU gates. The network is initialized the same way as (Young \& Tian, 2019). We did not use target networks for MinAtar experiments because (Young \& Tian, 2019) showed that using target networks has negligible effects on the results.

We used a circular replay buffer of size 100,000. The agent started learning when the replay buffer had 5,000 samples in it. We annealed epsilon from 1.0 to 0.1 through the first 100,000 steps and then kept it at 0.1 for the rest of the steps. The agent had one training step using a mini-batch of size 32 per environment step. As explained by (Young \& Tian, 2019), frame skipping was not necessary since the frames of the MinAtar environment are more information rich. Other hyperparameters were chosen the same as (Young \& Tian, 2019) and (Mnih et al., 2015). We used the RMSProp optimizer with a smoothing constant of 0.95, and $\epsilon=0.01$. For QRC, we used RMSProp to learn the second weight vector $\vech$.  We swept over RMSprop stepsizes in powers of 2, $\{2^{-10}, ..., 2^{-5}\}$ for breakout, and $\{2^{-12}, ..., 2^{-8}\}$ for space invaders. $\eta$ was set to 1 for QC and QRC and $\beta$ was 1 for QRC.

For the learning curve, we plotted the setting that resulted in the best area under the learning curve. We computed the moving average of returns over 100 episodes (shown in Figure~\ref{fig:breakout}) similar to (Young \& Tian, 2019). For computing the total discounted reward, we simply averaged over all of the returns that the agent got during 5 million steps to get a single number for each run and each parameter setting. We then averaged this number over 30 independent runs of the experiment to produce one point in the bottom part of Figure~\ref{fig:breakout}.
For MinAtar experiments, we used python version 3.7, Pytorch version 1.4, and public code made available on Github for MinAtar\footnote{https://github.com/kenjyoung/MinAtar}.

\section{Convergence of TDRC} \label{sec: proof_main_thm}
In this section, we prove Theorem \ref{thm:convergence_tdrc}. Our analysis closely follows the one timescale proof for TDC convergence (Maei, 2011). We provide the full proof here for completeness.

\begin{figure*}[!t]
  \begin{tcolorbox}[title=\textbf{Box 1:} Derivation of Eq. \ref{eq: det_G_lambda}., colback=white, colbacktitle=white, coltitle=black, sharp corners, size=small]
    Following the analysis given in Maei (2011), we write
    \begin{equation*}
      \text{det}(\Gvec - \lambda \Ivec) = \text{det} \begin{bmatrix} -\eta \Cvec_\beta - \lambda \Ivec & - \eta \Avec \\  \Avec^\top - \Cvec & -\Avec - \lambda \Ivec \end{bmatrix} = (-1)^{2d} \; \text{det} \begin{bmatrix} \eta \Cvec_\beta + \lambda \Ivec & \eta \Avec \\ \Cvec - \Avec^\top & \Avec + \lambda \Ivec \end{bmatrix}.
    \end{equation*}
    For a matrix ${\bf U} = \begin{bmatrix} \Avec_1 & \Avec_2 \\ \Avec_3 & \Avec_4 \end{bmatrix}$, $\text{det}({\bf U}) = \text{det}(\Avec_1) \cdot \text{det}(\Avec_4 - \Avec_3 \Avec_1^{-1} \Avec_2)$. Further, since $\Cvec$ is positive semi--definite, $\Cvec_\beta + \lambda \Ivec$ would be non--singular for any $\beta > 0$. Using these results, we get
    \begin{equation}
      \tag{B1}
      \text{det}(\Gvec - \lambda \Ivec) = \text{det}(\eta \Cvec + (\eta \beta +\lambda) \Ivec) \cdot \text{det}(\Avec + \lambda \Ivec - \eta (\Cvec - \Avec^\top) (\eta \Cvec + (\eta \beta + \lambda) \Ivec)^{-1} \Avec). \label{eq: det_G}
    \end{equation}
    Now $\eta \Cvec \big(\eta \Cvec + (\eta \beta + \lambda)\Ivec\big)^{-1} = \Big(\big(\eta \Cvec + (\eta \beta + \lambda)\Ivec\big) - (\eta \beta + \lambda)\Ivec \Big) \big(\eta \Cvec + (\eta \beta + \lambda)\Ivec\big)^{-1} = \Ivec - (\eta \beta + \lambda) \big(\eta \Cvec + (\eta \beta + \lambda) \Ivec\big)^{-1}$. We can then write
    \begin{align*}
      & \Avec + \lambda \Ivec - \eta (\Cvec - \Avec^\top) (\eta \Cvec + (\eta \beta + \lambda) \Ivec)^{-1} \Avec \\
      =& \Avec + \lambda \Ivec - \eta \Cvec(\eta \Cvec + (\eta \beta + \lambda) \Ivec)^{-1} \Avec + \eta \Avec^\top (\eta \Cvec + (\eta \beta + \lambda) \Ivec)^{-1} \Avec \\
      =& \Avec + \lambda \Ivec - \bigg( \Ivec - (\eta \beta + \lambda) \big(\eta \Cvec + (\eta \beta + \lambda) \Ivec\big)^{-1}\bigg) \Avec + \eta \Avec^\top (\eta \Cvec + (\eta \beta + \lambda) \Ivec)^{-1} \Avec \\
      =& \lambda \Ivec + (\eta \beta + \lambda) \big(\eta \Cvec + (\eta \beta + \lambda) \Ivec\big)^{-1} \Avec + \eta \Avec^\top (\eta \Cvec + (\eta \beta + \lambda) \Ivec)^{-1} \Avec \\
      =& \Bigg[ \lambda \left(\Avec\right)^{-1} \big(\eta \Cvec + (\eta \beta + \lambda) \Ivec\big) + (\eta \beta + \lambda) \Ivec + \eta \Avec^\top \Bigg]\big(\eta \Cvec + (\eta \beta + \lambda) \Ivec\big)^{-1} \Avec \\
      =& \left(\Avec\right)^{-1} \Bigg[ \lambda \big(\eta \Cvec + (\eta \beta + \lambda) \Ivec\big) + \Avec \big(\eta \Avec^\top + (\eta \beta + \lambda) \Ivec \big) \Bigg] \big(\eta \Cvec + (\eta \beta + \lambda) \Ivec\big)^{-1} \Avec.
    \end{align*}

    Putting the above result in Eq. \ref{eq: det_G} along with the fact that $\text{det}(\Avec_1 \Avec_2) = \text{det}(\Avec_1) \cdot \text{det}(\Avec_2)$, we get
    \begin{equation*}
      \text{det}(\Gvec - \lambda \Ivec) = \det \Big( \lambda \big(\eta \Cvec + (\eta \beta + \lambda) \Ivec\big) + \Avec \big(\eta \Avec^\top + (\eta \beta + \lambda) \Ivec \big)\Big).
    \end{equation*}
  \end{tcolorbox}
\end{figure*}

\subsection{Reformulating the TDRC Update}
We combine the TDRC update equations (Eq. \ref{eq: tdrc_h} and \ref{eq: tdrc_w}) into a single linear system in variable $\varrhovec_t^\top \defeq \begin{bmatrix} \hvec_t^\top \wvec_t^\top \end{bmatrix}$:
\begin{equation}
  \varrhovec_{t+1} = \varrhovec_t + \alpha_t (\Gvec_{t+1} \varrhovec_t + \gvec_{t+1}), \label{eq: tdrc_vec}
\end{equation}
with $\Gvec_{t+1} \defeq \begin{bmatrix} - \eta (\xvec_t \xvec_t^\top + \beta \Ivec) & \eta \rho_t \xvec_t(\gamma\xvec_{t+1} - \xvec_t)^\top \\ - \rho_t (\gamma \xvec_{t+1} \xvec_t^\top) & \rho_t \xvec_t(\gamma \xvec_{t+1} - \xvec_t)^\top \end{bmatrix} $ and $\gvec_{t+1} \defeq \begin{bmatrix} \eta \rho_t R_{t+1} \xvec_t \\ \rho_t R_{t+1} \xvec_t \end{bmatrix} $.

For a random variable $\Xvec$, using the definition of importance sampling,  we know that $\mathbb{E}_b[\rho \Xvec] = \mathbb{E}_\pi[\Xvec]$. Further, while learning off--policy we assume the excursion setting and use the stationary state distribution corresponding to the behavior policy, i.e. $\mathbb{E}_\pi[\xvec_t \xvec_t^\top] = \sum_{S \in \mathcal{S}} d_b(S) \xvec(S) \xvec(S)^\top$, and consequently $\mathbb{E}_b[\xvec_t \xvec_t^\top] = \mathbb{E}_\pi[\xvec_t \xvec_t^\top]$. Therefore, $\Gvec \defeq \mathbb{E}_b[\Gvec_k] = \begin{bmatrix} -\eta \Cvec_\beta & - \eta \Avec \\ \Avec^\top - \Cvec & -\Avec \end{bmatrix}$ and $\gvec \defeq \mathbb{E}_b[\gvec_k] = \begin{bmatrix} \eta \bvec \\ \bvec \end{bmatrix}$, and Eq. \ref{eq: tdrc_vec} can be rewritten as
\begin{equation}
  \varrhovec_{t+1} = \varrhovec_t + \alpha_t \big(h(\varrhovec_t) + M_{t+1}\big),
\end{equation}
where $h(\varrhovec) \defeq \Gvec\varrhovec + \gvec$ and $M_{t+1} \defeq (\Gvec_{t+1} - \Gvec) \varrhovec_t + (\gvec_{t+1} - \gvec)$ is the noise sequence. Also, let $\mathcal{F}_t \defeq \sigma(\varrhovec_1, M_1, \ldots, \varrhovec_{t-1}, M_t)$.

\subsection{Main Proof}
To prove the convergence of TDRC, we use the results from Borkar \& Meyn (2000) which require the following to be true: (i) The function $h(\varrhovec)$ is Lipschitz and there exists $h_\infty(\varrhovec) \defeq \lim_{c \rightarrow \infty} \frac{h(c\varrhovec) }{c}$ for all $\varrhovec \in \mathbb{R}^{2d}$; (ii) The sequence $(M_t, \mathcal{F}_t)$ is a Martingale difference sequence (MDS), and $\E{\|M_{t+1}\|^2 \;|\; \mathcal{F}_t} \leq c_0 (1 + \|\varrhovec\|^2)$ for any initial parameter vector $\varrhovec_1$ and some constant $c_0 > 0$; (iii) The stepsize sequence $\alpha_t$ satisfies $\sum_t \alpha_t = \infty$ and $\sum_t \alpha_t^2 < \infty$; (iv) The origin is a globally asymptotically stable equilibrium for the ODE $\dot{\varrhovec} = h_\infty (\varrhovec)$; and (v) The ODE $\dot{\varrhovec} = h (\varrhovec)$ has a unique globally asymptotically stable equilibrium.

The function $\hvec(\varrhovec) = \Gvec\varrhovec + \gvec$ is Lipschitz with the coefficient $\|\Gvec\|$ and $\hvec_\infty(\varrhovec) = \Gvec\varrhovec$ is well defined for all $\varrhovec \in \mathbb{R}^{2d}$. $(M_{t}, \mathcal{F}_t)$ is an MDS, since by construction it satisfies $\E{M_{t+1} \;|\; \mathcal{F}_t} = 0$ and $M_{t} \in \mathcal{F}_t$. The coverage assumption implies that the second moments of $\rho_t$ are uniformly bounded. Then applying triangle inequality to $M_{t+1} = (\Gvec_{t+1} - \Gvec) \varrhovec_t + (\gvec_{t+1} - \gvec)$ and using the boundedness of second moments of the quadruplets $(\xvec_t, R_t, \xvec_{t+1}, \rho_t)$, we get $\E{\|M_{t+1}\|^2 \;|\; \mathcal{F}_t} \leq \E{\|(\Gvec_{t+1} - \Gvec) \varrhovec_t\|^2 \;|\; \mathcal{F}_t} + \E{\|\gvec_{t+1} - \gvec\|^2 \;|\; \mathcal{F}_t} \leq c_0 (\|\varrhovec_t\|^2 + 1)$. Condition on the stepsizes follows from our assumptions in the theorem statement. To verify the conditions (iv) and (v), we first show that the real parts of all the eigenvalues of $\Gvec$ are negative.

\begin{figure*}[!t]
  \begin{tcolorbox}[title=\textbf{Box 2:} Solutions of Eq. \ref{eq: quadratic}., colback=white, colbacktitle=white, coltitle=black, sharp corners, size=small]
    The solutions of a quadratic $ax^2 + bx + c = 0$ are given by $x = -\frac{b}{2a} \pm \frac{\sqrt{b^2 - 4ac}}{2a}$. Using this, we solve for $\lambda$ in Eq. \ref{eq: quadratic}:
    \begin{align*}
      2 \lambda &= -(\eta \beta + \eta b_c +\lambda_z) \pm \sqrt{(\eta \beta + \eta b_c +\lambda_z)^2 - 4 \eta ( \beta\lambda_z + b_a)}  \\
      &= -\big(\eta \beta + \eta b_c + (\lambda_r + \lambda_c i)\big) \pm \sqrt{\big(\eta \beta + \eta b_c + (\lambda_r + \lambda_c i)\big)^2 - 4 \eta \big(  \beta (\lambda_r + \lambda_c i) + b_a\big)} \\
      &= -\Omega - \lambda_c i \pm \sqrt{(\Omega + \lambda_c i)^2 - 4\eta( \beta \lambda_r + b_a) - 4 \eta \beta \lambda_c i}  \\
      &= -\Omega - \lambda_c i \pm \sqrt{\big(\Omega^2 - \lambda_c^2 - 4 \eta (\beta \lambda_r + b_a)\big) + \big(2\Omega \lambda_c - 4 \eta \beta \lambda_c\big) i}  \\
      &= -\Omega - \lambda_c i \pm \sqrt{\big(\Omega^2 - \Xi \big) + \big(2\Omega \lambda_c - 4 \eta \beta \lambda_c\big) i},
    \end{align*}

    where in the second step we put $\lambda_z = \lambda_r + \lambda_c i$, and also we define $\Omega = \eta \beta + \eta b_c + \lambda_r$ and $\Xi = \lambda_c^2 + 4\eta( \beta \lambda_r + b_a)$, which are both real numbers.
  \end{tcolorbox}
\end{figure*}

\subsection{Proving that the Real Parts of Eigenvalues of $\Gvec$ are Negative (assuming $\Cvec$ to be non--Singular)} \label{sec: prove_G}
In this section, we consider the case when the $\Cvec$ matrix is non--singular. TDRC converges even when $\Cvec$ is singular under alternate conditions, which are given in Section \ref{sec: convergence_proof_C_singular}. From Box 1, we obtain
\begin{align}
  \text{det}(\Gvec - \lambda \Ivec) =& \text{det}\Big(\lambda (\eta \Cvec + (\eta \beta + \lambda) \Ivec) \nonumber \\
  & \quad \quad + \Avec (\eta \Avec^\top + (\eta \beta + \lambda) \Ivec )\Big), \label{eq: det_G_lambda}
\end{align}
for some $\lambda \in \mathbb{C}$. Now because an eigenvalue $\lambda$ of matrix $\Gvec$ satisfies $\text{det}(\Gvec - \lambda \Ivec) = 0$, there must exist a non--zero vector $\zvec \in \mathbb{C}^d$ such that $\zvec^* [\lambda (\eta \Cvec + (\eta \beta + \lambda) \Ivec) + \Avec (\eta \Avec^\top + (\eta \beta + \lambda) \Ivec )] \zvec = 0$, which is equivalent to
\begin{align*}
  \lambda^2 + \left(\eta \beta + \eta \frac{\zvec^* \Cvec \zvec}{\|\zvec\|^2} + \frac{{\zvec^*\Avec\zvec}}{\|\zvec\|^2} \right) \lambda \quad \quad \quad \quad & \\
  \quad \quad \quad \quad + \eta \left( \beta \frac{{\zvec^*\Avec\zvec}}{\|\zvec\|^2} +  \frac{\zvec^*\Avec \Avec^\top \zvec}{\|\zvec\|^2} \right) &= 0.
\end{align*}
We define $b_c = \frac{\zvec^* \Cvec \zvec}{\|\zvec\|^2}$, $b_a = \frac{\zvec^*\Avec \Avec^\top \zvec}{\|\zvec\|^2}$, and ${\lambda_z} = \frac{{\zvec^*\Avec\zvec}}{\|\zvec\|^2} \equiv \lambda_r + \lambda_c{i}$ for $\lambda_r, \lambda_c \in \mathbb{R}$. The constants $b_c$ and $b_a$ are real and greater than zero for all non--zero vectors $\zvec$. Then the above equation can be written as
\begin{equation}
  \lambda^2 + \left(\eta \beta + \eta b_c + {\lambda_z} \right) \lambda + \eta( \beta {\lambda_z} + b_a ) = 0. \label{eq: quadratic}
\end{equation}

We solve for $\lambda$ in Eq. \ref{eq: quadratic} (see Box 2 for the full derivation) to obtain $2 \lambda = -\Omega - \lambda_c {i} \pm \sqrt{(\Omega^2 - \Xi ) + (2\Omega \lambda_c - 4 \eta \beta \lambda_c) {i}}$, where we introduced intermediate variables $\Omega = \eta \beta + \eta b_c + \lambda_r$, and $\Xi = \lambda_c^2 + 4 \eta( \beta \lambda_r + b_a)$, which are both real numbers.

Using $\text{Re}(\sqrt{x + y {i}}) = \pm \frac{1}{\sqrt{2}} \sqrt{\sqrt{x^2 + y^2} + x}$ we get $\text{Re}(2 \lambda) = -\Omega \pm \frac{1}{\sqrt{2}} \sqrt{\Upsilon}$, with the intermediate variable $\Upsilon = \sqrt{(\Omega^2 - \Xi)^2 + (2\Omega \lambda_c - 4 \eta \beta \lambda_c)^2} + (\Omega^2 - \Xi)$. Next we obtain conditions on $\beta$ and $\eta$ such that the real parts of both the values of $\lambda$ are negative for all non--zero vectors $\zvec \in \mathbb{C}$.

\begin{figure*}
  \begin{tcolorbox}[title=\textbf{Box 3:} Simplification of Eq. \ref{eq: omega_inequality}., colback=white, colbacktitle=white, coltitle=black, sharp corners, size=small]
    Putting the value of $\Upsilon = \sqrt{(\Omega^2 - \Xi)^2 + (2\Omega \lambda_c - 4 \eta \beta \lambda_c)^2} + (\Omega^2 - \Xi)$ back in $\Omega > \frac{1}{\sqrt{2}} \sqrt{\Upsilon}$, we get
    \begin{align*}
      && \Omega &> \frac{1}{\sqrt{2}} \sqrt{\sqrt{(\Omega^2 - \Xi)^2 + (2\Omega \lambda_c - 4 \eta \beta \lambda_c)^2} + (\Omega^2 - \Xi)} \\
      \Leftrightarrow \quad && \Omega^2 &> \frac{1}{2} \left[\sqrt{(\Omega^2 - \Xi)^2 + (2\Omega \lambda_c - 4 \eta \beta \lambda_c)^2} + (\Omega^2 - \Xi)\right] \tag*{[squaring both sides]} \\
      \Leftrightarrow \quad && \Omega^2 + \Xi &> \sqrt{(\Omega^2 - \Xi)^2 + (2\Omega \lambda_c - 4 \eta \beta \lambda_c)^2} \\
      \Leftrightarrow \quad && (\Omega^2 + \Xi)^2 &> (\Omega^2 - \Xi)^2 + (2\Omega \lambda_c - 4 \eta \beta \lambda_c)^2 \tag*{[squaring both sides]} \\
      \Leftrightarrow \quad && \Omega^2 \Xi &> (\Omega \lambda_c - 2 \eta \beta \lambda_c)^2 \\
      \Leftrightarrow \quad && \Omega^2 (\lambda_c^2 + 4 \eta( \beta \lambda_r +  b_a)) &> \Omega^2 \lambda_c^2 + 4 \eta^2 \beta^2 \lambda_c^2 - 4 \eta \beta \lambda_c^2 \Omega \tag*{[putting $\Xi = \lambda_c^2 + 4 \eta (\beta \lambda_r + b_a)$]}\\
      \Leftrightarrow \quad && \Omega^2 \eta ( \beta \lambda_r + b_a) &> \eta^2 \beta^2 \lambda_c^2 - \eta \beta \lambda_c^2 \Omega \\
      \Leftrightarrow \quad && (\eta\beta + \eta b_c + \lambda_r)^2 (\beta \lambda_r + b_a) &> \eta \beta^2 \lambda_c^2 - \beta \lambda_c^2 (\eta \beta + \eta b_c + \lambda_r) \tag*{[putting $\Omega = \eta \beta + \eta b_c + \lambda_r$]} \\
      \Leftrightarrow \quad && (\eta\beta + \eta b_c + \lambda_r)^2 (\beta \lambda_r + b_a) &> - \beta \lambda_c^2 (\eta b_c + \lambda_r) \\ \\
      \Leftrightarrow \quad &&& (\eta\beta + \eta b_c + \lambda_r)^2 (\beta \lambda_r + b_a) + \beta \lambda_c^2 (\eta b_c + \lambda_r) > 0.
    \end{align*}

    Note that all these steps have full equivalence (especially the squaring operations in second and fourth step are completely reversible), because we explicitly enforce that $\Omega > 0$ and $\Omega^2 + \Xi > 0$ in Conditions \ref{eq: first_cond} and \ref{eq: second_cond} respectively. As a result, if we satisfy conditions \ref{eq: first_cond}, \ref{eq: second_cond}, and \ref{eq: third_cond}, $\text{Re}(2 \lambda) = -\Omega + \frac{1}{\sqrt{2}} \sqrt{\Upsilon} < 0$ would be satisfied as well.
  \end{tcolorbox}
\end{figure*}

\subsubsection{CASE 1}
First consider $\text{Re}(2 \lambda) = -\Omega + \frac{1}{\sqrt{2}} \sqrt{\Upsilon}$. Then $\text{Re}(\lambda) < 0$ is equivalent to
\begin{equation}
  \Omega > \frac{1}{\sqrt{2}} \sqrt{\Upsilon}. \label{eq: omega_inequality}
\end{equation}

  Since, the right hand side of this inequality is clearly positive, we must have
\begin{equation}
  \tag{C1}
  \Omega = \eta \beta + \eta b_c + \lambda_r > 0. \label{eq: first_cond}
\end{equation}

This gives us our first condition on $\eta$ and $\beta$. Simplifying Eq. \ref{eq: omega_inequality} and putting back the values for the intermediate variables (see Box 3 for details), we get
\begin{equation}
  \Omega^2 + \Xi > \sqrt{(\Omega^2 - \Xi)^2 + (2\Omega \lambda_c - 4 \eta \beta \lambda_c)^2}. \label{eq: omega_xi}
\end{equation}
Again, since the right hand side of the above inequality is positive, we must have
\begin{equation}
  \tag{C2}
  \Omega^2 + \Xi = (\eta \beta + \eta b_c + \lambda_r)^2 + \lambda_c^2 + 4 \eta(\beta \lambda_r + b_a) > 0. \label{eq: second_cond}
\end{equation}
This is the second condition we have on $\eta$ and $\beta$. Continuing to simplify the inequality in Eq. \ref{eq: omega_xi} (again see Box 3 for details), we get our third and final condition:
\begin{equation}
  \tag{C3}
(\eta\beta + \eta b_c + \lambda_r)^2 (\beta \lambda_r + b_a) + \beta \lambda_c^2 (\eta b_c + \lambda_r) > 0. \label{eq: third_cond}
\end{equation}
If $\lambda_r > 0$ for all $\zvec \in \mathbb{R}$, then each of the Conditions \ref{eq: first_cond}, \ref{eq: second_cond}, and \ref{eq: third_cond} hold true and consequently TDRC converges. This case corresponds to the on--policy setting where the matrix $\Avec$ is positive definite and TD converges.

Now we show that TDRC converges even when $\Avec$ is not PSD (the case where TD is not guaranteed to converge). If we assume $\beta \lambda_r + b_a > 0$ and $\eta b_c + \lambda_r > 0$, then each of the Conditions \ref{eq: first_cond}, \ref{eq: second_cond}, and \ref{eq: third_cond} again hold true and TDRC would converge. As a result we obtain the following bounds:
\begin{align}
  \beta &< - \frac{b_a}{\lambda_r} \Rightarrow \beta < \min_{\zvec} \left(- \frac{\zvec^*\Avec \Avec^\top \zvec}{\zvec^*\Hvec\zvec}\right), \label{eq: beta_cond_old}\\
  \eta &> - \frac{\lambda_r}{b_c} \Rightarrow \eta > \max_{\zvec} \left(- \frac{\zvec^*\Hvec\zvec}{\zvec^* \Cvec \zvec}\right), \label{eq: eta_cond_old}
\end{align}
with $\Hvec \defeq \frac{1}{2}(\Avec + \Avec^\top)$. These bounds can be made more interpretable. Using the substitution $\yvec = \Hvec^{\frac{1}{2}} \zvec$ we obtain
\begin{align*}
  \min_{\zvec} \left(- \frac{\zvec^*\Avec \Avec^\top \zvec}{\zvec^*\Hvec\zvec}\right) &\equiv \min_{\yvec} \frac{\yvec^* (-\Hvec^{-\frac{1}{2}} \Avec \Avec^\top \Hvec^{-\frac{1}{2}}) \yvec}{\|\yvec\|^2} \\
  &= \lambda_{\text{min}} (-\Hvec^{-\frac{1}{2}} \Avec \Avec^\top \Hvec^{-\frac{1}{2}}) \\
  &= - \lambda_{\text{max}} (\Hvec^{-\frac{1}{2}} \Avec \Avec^\top \Hvec^{-\frac{1}{2}}) \\
  &= - \lambda_{\text{max}} (\Hvec^{-1} \Avec \Avec^\top),
\end{align*}
where $\lambda_{\text{max}}$ represents the maximum eigenvalue of the matrix. Proceeding similarly for $\eta$, we can write the bounds in Eq. \ref{eq: beta_cond_old} and \ref{eq: eta_cond_old} equivalently as
\begin{align}
  \beta &< - \lambda_{\text{max}} (\Hvec^{-1} \Avec \Avec^\top), \label{eq: beta_cond} \\
  \eta &> - \lambda_{\text{min}} (\Cvec^{-1} \Hvec). \label{eq: eta_cond}
\end{align}
If these bounds are satisfied by $\eta$ and $\beta$ then the real parts of all the eigenvalues of $\Gvec$ would be negative and TDRC will converge.

\subsubsection{CASE 2} Next consider $\text{Re}(2 \lambda) = -\Omega - \frac{1}{\sqrt{2}} \sqrt{\Upsilon}$. The second term is always negative and we assumed $\Omega > 0$ in Eq. \ref{eq: first_cond}. As a result, $\text{Re}(\lambda) < 0$ and we are done.

Therefore, we get that the real part of the eigenvalues of $\Gvec$ are negative and consequently condition (iv) above is satisfied. To show that condition (v) holds true, note that since we assumed $\Avec + \beta \Ivec$ to be non--singular, $\Gvec$ is also non--singular; this means that for the ODE $\dot{\varrhovec} = h (\varrhovec)$, $\varrhovec^* = - \Gvec^{-1} \gvec$ is the unique asymptotically stable equilibrium with $\bar{\Vvec}(\varrhovec) \defeq \frac{1}{2} (\Gvec \varrhovec + \gvec)^\top (\Gvec \varrhovec + \gvec)$ as its associated strict Lyapunov function.

\subsection{Convergence of TDRC when $\Cvec$ is Singular} \label{sec: convergence_proof_C_singular}
When $\Cvec$ is singular, $b_c = \frac{\zvec^* \Cvec \zvec}{\|\zvec\|^2}$ is no longer always greater than zero for an arbitrary vector $\zvec$. Consequently, if we explicitly set $b_c = 0$ we would get alternative bounds on $\eta$ and $\beta$ for which TDRC would converge. Putting $b_c = 0$ in Conditions \ref{eq: first_cond}, \ref{eq: second_cond}, and \ref{eq: third_cond}, we get
\begin{align*}
  \eta \beta + \lambda_r &> 0, \\
  (\eta \beta + \lambda_r)^2 + \lambda_c^2 + 4 \eta(\beta \lambda_r + b_a) &> 0, \text{and} \\
  (\eta\beta + \lambda_r)^2 (\beta \lambda_r + b_a) + \beta \lambda_c^2 \lambda_r &> 0.
\end{align*}
As before, we are concerned with the case when $\Avec$ is not PSD and thus $\lambda_r < 0$. Further, assume that $\beta \lambda_r + b_a > 0$ (this is the same upper bound on $\beta$ as given in Eq. \ref{eq: beta_cond_old}). We simplify the third inequality above to obtain the bound on $\eta$. As a result, we get the following bounds for $\beta$ and $\eta$:
\begin{align}
  \beta < -\frac{b_a}{\lambda_r}, \quad \quad \quad \eta > \frac{1}{\beta} \left(\sqrt{\frac{-\beta \lambda_c^2 \lambda_r}{\beta \lambda_r + b_a}} - \lambda_r \right). \label{eq: alternate_cond_tdrc_convergence_C_singular}
\end{align}
The bound on $\eta$ automatically satisfies the first condition $\eta \beta + \lambda_r > 0$. Therefore, if $\beta$ and $\eta$ satisfy these bounds, TDRC converges even for a singular $\Cvec$ matrix.

\section{Fixed Points of TDRC} \label{sec: proof_fixed_point}
\begin{thm}[Fixed Points of TDRC] \label{thm:fixed_point}
  If $\vecw$ is a TD fixed point, i.e., a solution to $\Amat \vecw =\vecb$, then it is a fixed point for the expected TDRC update,
  \begin{equation*}
    \Amat_\beta^\top \Cmat_\beta^\inv (\vecb -\Amat \vecw)=\zerovec
    .
  \end{equation*}
  Further, the set of fixed points for TD and TDRC are equivalent if $\Cmat_\beta$ is invertible and if
%
  $-\beta$ does not equal to any of the eigenvalues of $\Amat$.
  Note that $\Cmat_\beta$ is always invertible if $\beta > 0$, and is invertible if $\Cmat$ is invertible even for $\beta = 0$.
\end{thm}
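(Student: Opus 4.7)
The plan is to prove the theorem in two parts, handling the forward inclusion trivially and then using the two invertibility hypotheses to obtain the reverse inclusion. First I would observe that the forward direction requires no hypotheses at all: if $\Amat \vecw = \vecb$, then $\vecb - \Amat \vecw = \zerovec$, and consequently $\Amat_\beta^\top \Cmat_\beta^\inv (\vecb - \Amat \vecw) = \zerovec$. So every TD fixed point is automatically a TDRC fixed point.

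For the reverse direction, my strategy is to peel off one invertible matrix at a time from the equation $\Amat_\beta^\top \Cmat_\beta^\inv (\vecb - \Amat \vecw) = \zerovec$ until only $\vecb - \Amat \vecw = \zerovec$ remains. Assuming $\Cmat_\beta$ is invertible, the vector $\vecy \defeq \Cmat_\beta^\inv (\vecb - \Amat \vecw)$ is well-defined and the TDRC condition collapses to $\Amat_\beta^\top \vecy = \zerovec$. The next step is to show $\Amat_\beta^\top$ is invertible; because $\Amat_\beta = \Amat + \beta \Ivec$ fails to be invertible exactly when $-\beta$ is an eigenvalue of $\Amat$, and because $\Amat$ and $\Amat^\top$ share eigenvalues, the second hypothesis gives exactly what we need. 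Hence $\vecy = \zerovec$, and multiplying through by $\Cmat_\beta$ yields $\vecb - \Amat \vecw = \zerovec$, i.e., $\vecw$ is a TD fixed point.

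Finally, I would record the two closing remarks in the theorem statement. When $\beta > 0$, the matrix $\Cmat_\beta = \Cmat + \beta \Ivec$ is a positive-definite perturbation of the positive semi-definite second-moment matrix $\Cmat$, so $\Cmat_\beta$ is automatically invertible; when $\beta = 0$, we have $\Cmat_\beta = \Cmat$, so invertibility of $\Cmat$ alone suffices. Both observations follow immediately from the fact that adding $\beta \Ivec$ shifts every eigenvalue of $\Cmat$ by $\beta$, and nonnegative eigenvalues shifted by a positive amount are strictly positive.

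The argument is essentially a short linear-algebra exercise, so I do not anticipate a serious obstacle. The only delicate point is to state the eigenvalue condition in the exact form required so that $\Amat + \beta \Ivec$ (and therefore its transpose) is guaranteed to be invertible; otherwise $\Amat_\beta^\top \vecy = \zerovec$ could admit nonzero $\vecy$ lying in the nullspace and the reverse inclusion would fail. Everything else is routine.
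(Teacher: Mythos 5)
Your proposal is correct and follows essentially the same route as the paper's primary proof: the forward inclusion is immediate from $\vecb - \Amat\vecw = \zerovec$, and the reverse inclusion uses invertibility of $\Cmat_\beta$ and of $\Amat_\beta^\top$ (via the eigenvalue condition on $-\beta$) to conclude that the composite map annihilates only $\vecb - \Amat\vecw = \zerovec$; your sequential ``peeling'' is just a spelled-out version of the paper's nullspace equality. The paper additionally sketches an alternate argument via $\det(\Gvec) = \eta^{2d}\det(\Avec^\top + \beta\Ivec)\det(\Avec)$, which you omit, but that is supplementary rather than essential.
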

\begin{proof}
To show equivalence, the first part is straightforward: when $\Amat \vecw =\vecb$, then $\vecb -\Amat \vecw = \zerovec$ and so $\Amat_\beta^\top \Cmat_\beta^\inv (\vecb -\Amat \vecw)=\zerovec$.
  This means that any TD fixed point is a TDRC fixed point. Now we simply need to show that under the additional conditions, a TDRC fixed point is a TD fixed point.


  If $-\beta$ does not equal any of the eigenvalues of $\Amat$, then $\Amat_\beta = \Amat + \beta \eye$ is a full rank matrix. Because both $\Amat_\beta$ and $\Cmat_\beta$ are full rank, the nullspace of $\Amat_\beta^\top \Cmat_\beta^\inv (\vecb -\Amat \vecw)$ equals to the nullspace of $\vecb -\Amat \vecw$. Therefore, $\vecw$ satisfies $\Amat_\beta^\top \Cmat_\beta^\inv (\vecb -\Amat \vecw) = 0$ iff $(\vecb -\Amat \vecw) = \zerovec$.

  We can prove Theorem \ref{thm:fixed_point}, in an alternate fashion as well. The linear system in Eq. \ref{eq: tdrc_vec} has a solution (in expectation) which satisfies
\begin{equation*}
  \Gvec \varrhovec + \gvec = \zerovec.
\end{equation*}
We show that this linear system has full rank and thus a single solution: $\wvec = \Avec^{-1} \bvec$ and $\hvec = \zerovec$. If we show that the matrix $\Gvec$ is non--singular, i.e. its determinant is non--zero, we are done. From Eq. \ref{eq: det_G_lambda} it is straightforward to obtain
\begin{equation*}
  \text{det}(\Gvec) = \eta^{2d} \det(\Avec^\top + \beta \Ivec) \cdot \det(\Avec),
\end{equation*}
which is non--zero if we assume that $\beta$ does not equal the negative of any eigenvalue of $\Avec$ and that $\Avec$ is non--singular.
\end{proof}

\end{document}